\newif\ifpreprint
\newtheorem{theorem}{Theorem}
\newtheorem{lemma}{Lemma}
\newtheorem{assumption}{Assumption}
\DeclareMathOperator{\E}{\mathbb{E}}
\DeclareMathOperator{\argmin}{\arg\min}
\DeclareMathOperator*{\argmax}{\arg\,\max}
\newcommand{\commentout}[1]{}
\newcommand{\ip}[1]{\left\langle #1 \right\rangle}
\newcommand{\dts}[3]{D^{TS}_{#1}(#2,#3)}
\newcommand{\savehyperref}[2]{\texorpdfstring{\hyperref[#1]{#2}}{#2}}
\newif\ifsup
\title{Incentive-compatible Bandits: Importance Weighting No More}
\author{%
  Julian Zimmert \\
  Google Research\\
  \texttt{zimmert@google.com} \\\\\\
  \And
  Teodor V. Marinov \\
  Google Research \\
  \texttt{tvmarinov@google.com} \\\\\\
}
\begin{document}

\maketitle

\begin{abstract}
We study the problem of incentive-compatible online learning with bandit feedback. In this class of problems, the experts are self-interested agents who might misrepresent their preferences with the goal of being selected most often. The goal is to devise algorithms which are simultaneously incentive-compatible, that is the experts are incentivised to report their true preferences, and have no regret with respect to the preferences of the best fixed expert in hindsight. \citet{freeman2020no} propose an algorithm in the full information setting with optimal $O(\sqrt{T \log(K)})$ regret and $O(T^{2/3}(K\log(K))^{1/3})$ regret in the bandit setting.

In this work we propose the first incentive-compatible algorithms that enjoy $O(\sqrt{KT})$ regret bounds. We further demonstrate how simple loss-biasing allows the algorithm proposed in \citet{freeman2020no} to enjoy $\tilde O(\sqrt{KT})$ regret. As a byproduct of our approach we obtain the first bandit algorithm with nearly optimal regret bounds in the adversarial setting which works entirely on the observed loss sequence without the need for importance-weighted estimators.
Finally, we provide an incentive-compatible algorithm that enjoys asymptotically optimal best-of-both-worlds regret guarantees, i.e., logarithmic regret in the stochastic regime as well as worst-case $O(\sqrt{KT})$ regret.
\end{abstract}

\section{Introduction}
Adversarial multi-armed bandits is a seminal online learning problem with applications in experimental design, online advertisement and more.
Bandits problem are characterized by limited feedback given to the learner in every round, the so-called bandit feedback.
In the adversarial setting, where the loss sequence of each expert is assumed bounded but otherwise can be arbitrarily generated, algorithms typically use importance weighted loss estimators. Such estimators can sometimes lead to very large or potentially unbounded losses which is often undesirable in practice. 

One particular area where unbounded losses have hindered progress is incentive-compatible online learning with bandit feedback. 
Incentive-compatible online learning is characterized by a set of self-interested experts who do not necessarily report their true belief of their best action but rather target being selected as often as possible by the algorithm. \citet{freeman2020no} show that classical algorithms like Hedge~\citep{littlestone1994weighted, vovk1995game, freund1997decision} may not be incentive-compatible. It turns out that a sufficient condition for an update rule on the distribution over experts to be incentive compatible is linearity of the update rule in the loss and boundedness of the loss~\citep{lambert2008self, lambert2015axiomatic}. A natural set of algorithms satisfying the above conditions in the full information setting is the Prod family in which the algorithms adjust the sampling probability of each expert by a multiplicative update of the loss
$\pi_{t+1,i} = \pi_{t,i}(1-\eta\ell_{t,i} +\lambda)$, where $\lambda$ is a normalization constant. The extension to the bandit setting proposed by \citet{freeman2020no} is to replace $\ell_{t}$ by a loss estimate $\hat \ell_t$.
This update requires careful control of the magnitude of $\hat\ell_t$ to ensure valid probability distributions.
In fact, this is the backbone of WSU-UX algorithm~\citep{freeman2020no}, which enjoys a regret bound of the order $O(T^{2/3})$.   
It is an open problem if incentive-compatible bandits are fundamentally harder than regular bandits or there exist algorithms with $O(\sqrt{T})$ regret guarantees.
The main contributions of our work are as follows.
\begin{enumerate}
    \item We propose a simple incentive-compatible algorithm with nearly optimal $\tilde O(\sqrt{KT})$ regret guarantee.
    As a side result, this algorithm is the first, to the best of our knowledge, which operates directly on the observed sequence of losses without the need of importance weighting in the adversarial setting.
    \item We show that a loss shift in the WSU-UX update improves the regret bound to $\tilde O(\sqrt{KT})$.
    \item We propose an incentive-compatible algorithm that adjusts to the hardness of the problem. In stochastic environments, we guarantee $\log(T)$ regret and in the adversarial setting we guarantee optimal $O(\sqrt{KT})$ regret.
\end{enumerate}

\section{Problem setting and related work}
\subsection{Adversarial bandits}
The adversarial bandit problem is defined as follows.
In every round $t=1,\dots, T$, an (oblivious) adversary selects a loss $\ell_t\in[0,1]^K$ (it is possible to extend the loss range to $[-1,1]^K$) unknown to the agent. The agent simultaneously selects an expert $A_t \in [K]$.
The agent incurs and observes the loss $\ell_{t,A_t}$, but does not see the losses of other experts.
The goal is to minimize the pseudo-regret\footnote{For the rest of the paper we refer to pseudo-regret as regret for simplicity.}
\begin{align*}
    \max_{i\in[K]}\E\left[\sum_{t=1}^T\ell_{t,A_t}-\ell_{t,i}\right]\,,
\end{align*}
where the expectation is taken with respect to all randomness of the algorithm, losses and experts.
Popular families of algorithms for this problem setting include online mirror descent (OMD) and follow the regularized leader (FTRL).
Typically the algorithms use unbiased loss estimates of the loss vector via importance weighting: $\hat\ell_{t,i}=\frac{\ell_{t,i}}{\pi_{t,i}}\mathbb{I}(A_t=i)$, where $\mathbb{I}(E)$ denotes the indicator function function for the event $E$. We note that other types of importance weighted estimators have been used in literature such as the implicit exploration estimator~\citet{kocak2014efficient}, which has improved variance properties.
The algorithms are defined by a twice-differentiable convex potential function $F : \mathbb{R}^K \to \mathbb{R}$ and a learning rate schedule $\eta_t$, the agent maintains a distribution via
\begin{align*}
    &\pi_{t+1} = \argmin_{\pi \in \Delta([K])}\ip{\pi,\eta_{t}\sum_{s=1}^t\hat\ell_s} - F(\pi)\tag{FTRL}\,,\\
    &\pi_{t+1} = \argmin_{\pi \in \Delta([K])}\ip{\pi,\eta_{t}\hat\ell_t} - D_F(\pi,\pi_t)\tag{OMD}\,,
\end{align*}
where $D_F(y,x)=F(y)-F(x)-(y-x)^\top\nabla F(x)$ is the Bregman divergence of $F$ and $\Delta([K])$ denotes the probability simplex over the set $[K]$ of experts.

Common potentials in the bandit literature are given in Table~\ref{tab:potentials} and we refer to their respective Bregman divergences as $D_{KL},D_{TS}$ and $D_{LB}$ respectively.
The negative entropy is the potential which defines Hedge and Exp-3 (and derivatives)~\citep{littlestone1994weighted, vovk1995game, freund1997decision, auer2002nonstochastic, kocak2014efficient}.
The $1/2$-Tsallis Entropy is the key to achieving optimal best-of-both-worlds regret guarantees as was first demonstrated by \citet{zimmert2021tsallis}. The log-barrier potential was used by \citet{agarwal2017corralling} to first solve the corralling of bandits problem and has found many applications in model-selection problems~\citep{foster2020adapting}, regret bounds which depend on the properties of the loss sequence~\citep{wei2018more, lee2020closer, lee2020bias} and various other bandit problems.
\begin{table}
\begin{center}
    \begin{tabular}{c|c|c|c}
         &Negentropy/KL divergence & $1/2$-Tsallis Entropy & Logbarrier\\\hline
         $F(\pi)$& $\sum_{i=1}^K\pi_{t,i}\log(\pi_{t,i})$&$ -2\sum_{i=1}^K\sqrt{\pi_{t,i}}$&$-\sum_{i=1}^K\log(\pi_{t,i})$ 
    \end{tabular}
\caption{Common potential functions}
\label{tab:potentials}
\end{center}
\end{table}

\subsection{Incentive-compatible online learning}
\label{sec: incentive}
Incentive-compatible online learning was introduced by \citet{freeman2020no}.
In this setting, they assume that in every round, a proper loss function $\mathcal{L}_t$ is drawn from an unknown distribution $\nu_t$. There are $K$-experts who each have an internal belief about the distribution $\nu_t$ and provide the learner with a recommendation $r_{t}$.
The learner chooses an expert $A_t$, receives their recommendation $r_{t,A_t}$, and suffers the loss $\mathcal{L}_t(r_{t,A_t})$.
The learner aims to minimize their expected loss in comparison to the performance of the best expert.
Let $r^\star_{t,i} = \argmin_r\E_{t,i}[\mathcal{L}_t(r)]$ denote the $i$-th expert's best response, where the expectation is with respect to the belief of the $i$-th expert conditioned on all observations until round $t$. The regret of the learner is then
\begin{align*}
    \max_{i\in[K]}\E\left[\sum_{t=1}^T\mathcal{L}_t(r_{t,A_t})-\mathcal{L}_t(r^\star_{t,i})\right]\,.
\end{align*}
We note that the learner's loss depends on $r_t$, but the baseline depends on the best response $r^\star_t$. Without ensuring that these two quantities are related, one cannot hope to obtain small regret. Ideally, the learner wants to ensure truthfulness, i.e. $r_{t} = r^\star_{t}$.

The caveat is that the experts are self-interested with the goal of maximizing the probability of being selected in every round, that is 
$r_{t,i}=\argmax_r\E_{t,i}[\pi_{t+1,i}\,|\,r_{t,i}=r], \forall t\in[T]$. We assume that the experts are aware of the learner's strategy.
Algorithms ensuring that the experts are truthful are called incentive-compatible.
If the algorithm is incentive-compatible, then we can assign $\ell_{t,i}=\mathcal{L}(r_{t,i})=\mathcal{L}(r^\star_{t,i})$ and treat this as a standard online learning problem.

As an illustrative example, assume every day, we invite a weather forecaster out of a pool of $K$ experts to make a prediction $p$ of whether it will rain the next day. We only obtain a prediction for the single expert we invite at time $t$. Given the outcome $I^{rain}_t$, we suffer the loss $\mathcal{L}_t(p)=(I^{rain}_t-p)^2$. The expert minimizes this loss by truthfully reporting their internal believe of the likelihood of rain $\E_{t,i}[I^{rain}_t]$. However, the expert is aware of the selection rule and only cares about their prestige and the probability of them taking the stage again in the next round.

\citet{freeman2020no} show that typical OMD and FTRL algorithms are in fact not incentive compatible.
A simple strategy to ensure incentive-compatibility is to restrict the algorithm class to linear in the loss update rules
\begin{align*}
    &\pi_{t+1,i} = L_{t,i,A_t}(\ell_{t,A_t})\,,\tag{Linear update rule}
\end{align*}
where $L_{t,i,A_t}$ are affine functions with negative derivative. For linear update rules, maximizing $\pi_{t+1,i}$ is equivalent to minimizing the expected value of $\mathcal{L}_t(r_{t,i})$ and hence these algorithms are incentive-compatible.

In the full-information setting, such algorithms have been proposed as variants of Prod, where the update rule is given by
\begin{align*}
\label{eq:vanilla-prod}
    \pi_{t+1,i} = \pi_{t,i}(1-\eta(\ell_{t,i}-\lambda_t))\,,\qquad\lambda_t=\sum_{j=1}^K\pi_{t,j}\ell_{t,j}\,. \tag{Vanilla-Prod}
\end{align*}
In the bandit feedback setting, \citet{freeman2020no} modify the algorithm by  mixing the uniform distribution in every round before sampling to control the variance of the estimator:
\begin{align*}
\label{eq:wsu-ux}
    \tilde\pi_{t,i} &= \frac{\gamma}{K} +(1-\gamma)\pi_{t,i}, \qquad\qquad
    A_t \sim\tilde\pi_{t,i}, \qquad\quad\hat\ell_{t,i} = \mathbb{I}(A_t=i)\frac{\ell_{t,i}}{\tilde\pi_{t,i}}\\
    \pi_{t+1,i} &= \pi_{t,i}(1-\eta(\hat\ell_{t,i}-\lambda_t))\,,\qquad\lambda_t=\sum_{j=1}^K\pi_{t,j}\hat\ell_{t,j}\,,\tag{WSU-UX}
\end{align*}
where $\gamma$ is the mixture coefficient.
WSU-UX is a linear update rule, however, in the worst case the algorithm will suffer $\Omega(T^\frac{2}{3})$ regret no matter how $\gamma$ or $\eta$ are chosen, even if they are chosen adaptively with respect to the time horizon. 
In this work, we restrict ourselves to the study of algorithms with linear update rules.

\subsection{Best of both worlds}
Adversarial losses are a worst-case assumption which might be overly conservative in practice.
In more favourable environments such as stochastic regimes, where $\ell_{t,i}$ are sampled from a fixed distribution with sufficient separation between the smallest and second smallest expected loss, there exist algorithms which enjoy $\log(T)$ regret bounds.  
It is desirable to have algorithms automatically adapt to the hardness of the data.
\citet{zimmert2021tsallis} have shown that simple FTRL with Tsallis entropy and a data-independent learning rate schedule satisfies this property.
We investigate whether the restricted algorithm class of linear update rules can also obtain best of both worlds.

\section{Modifying WSU-UX for nearly optimal regret guarantees}
\label{sec:kl-prod}
We begin by presenting a minimal modification of Algorithm~\ref{eq:wsu-ux} which is sufficient for a regret guarantee of the order $O(\sqrt{KT\log(K)})$. The modification comes in the form of biasing the loss estimator by subtracting a term from the losses. Similar techniques have been used by \citet{foster2020adapting} in addressing a model-selection problem. We will give more intuition on the role of this biasing later. The only change in the update is as follows
\begin{align*}
    \tilde \ell_{t,i} &= \ell_{t,i}\left(1 - \frac{\eta}{\tilde \pi_{t,i}}\right),\qquad
    \hat\ell_{t,i} = \mathbb{I}(A_t = i)\frac{\tilde\ell_{t,i}}{\tilde \pi_{t,i}}.
\end{align*}
We note that with an appropriate choice of $K\eta\leq\gamma$ we have $0 \leq \tilde \ell_{t,i} \leq \ell_{t,i} \leq 1$ and so all the conditions for the results in Section 4 of \citet{freeman2020no} are satisfied. For completeness we restate the results we use below. 
\begin{lemma}[Lemma 4.1~\citep{freeman2020no}]
If $\eta K/\gamma \leq \frac{1}{2}$, the WSU-UX weights $\pi_t$ and $\tilde \pi_t$ are valid probability distributions for all $t\in[T]$.
\end{lemma}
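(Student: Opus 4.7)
The claim is that the updates preserve the probability simplex, so the natural approach is induction on $t$. The base case takes $\pi_1$ uniform (or any fixed distribution on $\Delta([K])$), which is valid. For the inductive step, I assume $\pi_t \in \Delta([K])$ and need to deduce that $\tilde\pi_t$ and $\pi_{t+1}$ are also probability distributions.

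The statement about $\tilde\pi_t$ is immediate: since $\gamma\in[0,1]$, the vector $\tilde\pi_{t,i}=\gamma/K+(1-\gamma)\pi_{t,i}$ is a convex combination of the uniform distribution and $\pi_t$, hence lies in $\Delta([K])$. The useful byproduct is the lower bound $\tilde\pi_{t,i}\ge\gamma/K$, which will bound the importance-weighted estimator: since $\ell_{t,i}\in[0,1]$, we have $\hat\ell_{t,i}=\mathbb{I}(A_t=i)\ell_{t,i}/\tilde\pi_{t,i}\le K/\gamma$, and $\hat\ell_{t,i}\ge 0$.

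Now consider $\pi_{t+1,i}=\pi_{t,i}(1-\eta(\hat\ell_{t,i}-\lambda_t))$. Summation to one follows directly from the definition of $\lambda_t$:
\begin{align*}
\sum_i \pi_{t+1,i} = \sum_i \pi_{t,i} - \eta\sum_i \pi_{t,i}\hat\ell_{t,i} + \eta\lambda_t\sum_i \pi_{t,i} = 1 - \eta\lambda_t + \eta\lambda_t = 1,
\end{align*}
using $\sum_i\pi_{t,i}=1$ from the inductive hypothesis. For non-negativity, since $\lambda_t\ge 0$ the quantity $\hat\ell_{t,i}-\lambda_t$ is maximized by ignoring $-\lambda_t$, giving $\eta(\hat\ell_{t,i}-\lambda_t)\le \eta\hat\ell_{t,i}\le \eta K/\gamma \le 1/2 < 1$. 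Hence $1-\eta(\hat\ell_{t,i}-\lambda_t) > 0$ and $\pi_{t+1,i}\ge 0$.

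There is no substantial obstacle in this argument; the only place where the assumption $\eta K/\gamma\le 1/2$ is used is in ensuring that the worst-case multiplicative factor $1-\eta(\hat\ell_{t,i}-\lambda_t)$ stays strictly positive (in fact at least $1/2$), and this is what forces the uniform mixing coefficient $\gamma$ to scale with $\eta K$. The argument for the modified estimator $\tilde\ell_{t,i}=\ell_{t,i}(1-\eta/\tilde\pi_{t,i})$ used in the new algorithm is no harder, since the text already observes that $0\le\tilde\ell_{t,i}\le\ell_{t,i}\le 1$ whenever $\eta K\le\gamma$, so the same importance-weighted bound applies verbatim.
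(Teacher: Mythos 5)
Your proof is correct, and it is the standard argument: the paper itself states this lemma without proof (it is imported verbatim from \citet{freeman2020no}), and your induction --- sum-to-one from the definition of $\lambda_t$, non-negativity from $\lambda_t\ge 0$ together with $\hat\ell_{t,i}\le K/\gamma$ via the uniform-mixing lower bound $\tilde\pi_{t,i}\ge\gamma/K$ --- is exactly the intended reasoning. No gaps.
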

\begin{lemma}[Lemma 4.3~\citep{freeman2020no}]
\label{lem:43}
For WSU-UX, the probability vectors $\{\pi_t\}_{t\in [T]}$ and loss estimators $\hat \ell_t$ satisfy the following second order-bound
\begin{align*}
    \sum_{t=1}^T\sum_{i=1}^K \pi_{t,i}\hat\ell_{t,i} - \sum_{t=1}^T\hat \ell_{t,i^*} \leq \frac{\log(K)}{\eta} + \eta\sum_{t=1}^T \hat \ell_{t,i^*}^2 + \eta\sum_{t=1}^T\sum_{i=1}^K\pi_{t,i} \hat \ell_{t,i}^2.
\end{align*}
\end{lemma}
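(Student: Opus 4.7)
The plan is a standard multiplicative-update (Prod-style) analysis tracking the comparator arm $i^*$, in which one uses the fact that $\pi_{T+1,i^*} \leq 1$ to extract the regret inequality. First I would unroll the WSU-UX recursion for the coordinate $i^*$:
\begin{align*}
\pi_{T+1,i^*} = \pi_{1,i^*}\prod_{t=1}^T\bigl(1-\eta(\hat\ell_{t,i^*}-\lambda_t)\bigr),
\end{align*}
take logarithms (which is legal because Lemma 4.1 guarantees every factor is a positive probability), and use $\pi_{1,i^*} = 1/K$ to get
\begin{align*}
\log \pi_{T+1,i^*} = -\log K + \sum_{t=1}^T \log\bigl(1 - \eta(\hat\ell_{t,i^*}-\lambda_t)\bigr).
\end{align*}

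Next I would apply the elementary inequality $\log(1-x) \geq -x - x^2$, valid for $x \leq 1/2$. Checking this condition is the one non-routine step: since $\hat\ell_{t,i}$ can be as large as $K/\gamma$ after importance weighting, one must invoke the hypothesis $\eta K/\gamma \leq 1/2$ of Lemma 4.1, together with $\lambda_t \geq 0$, to conclude $\eta(\hat\ell_{t,i^*}-\lambda_t) \leq 1/2$. Combining with $\log \pi_{T+1,i^*} \leq 0$ and dividing through by $\eta$ produces
\begin{align*}
\sum_{t=1}^T (\lambda_t - \hat\ell_{t,i^*}) \leq \frac{\log K}{\eta} + \eta \sum_{t=1}^T (\hat\ell_{t,i^*} - \lambda_t)^2,
\end{align*}
and by definition $\sum_t \lambda_t = \sum_t\sum_i \pi_{t,i}\hat\ell_{t,i}$, which is the left-hand side of the lemma.

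It remains to massage the quadratic term. Since both $\hat\ell_{t,i^*}$ and $\lambda_t$ are nonnegative, $(\hat\ell_{t,i^*}-\lambda_t)^2 \leq \hat\ell_{t,i^*}^2 + \lambda_t^2$, and Jensen's inequality applied to $x\mapsto x^2$ with the distribution $\pi_t$ yields
\begin{align*}
\lambda_t^2 = \Bigl(\sum_{i=1}^K \pi_{t,i}\hat\ell_{t,i}\Bigr)^2 \leq \sum_{i=1}^K \pi_{t,i}\hat\ell_{t,i}^2.
\end{align*}
Summing over $t$ and substituting gives the stated second-order bound.

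The only genuine obstacle is the range check for the $\log(1-x)$ inequality; once the assumption $\eta K/\gamma \leq 1/2$ is used to control $\eta \hat\ell_{t,i^*}$, the rest is bookkeeping. No truncation or additional estimator-specific argument is required because the nonnegativity of the losses makes the cross term $-2\hat\ell_{t,i^*}\lambda_t$ in the quadratic expansion work in our favor.
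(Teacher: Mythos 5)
Your proof is correct and is exactly the standard Prod-style argument (unroll the multiplicative update at $i^*$, apply $\log(1-x)\geq -x-x^2$ for $x\leq 1/2$ using the range bound $\eta\hat\ell_{t,i^*}\leq \eta K/\gamma\leq 1/2$, then Jensen on $\lambda_t^2$) that underlies Lemma~4.3 of \citet{freeman2020no}; the paper itself imports the lemma without reproving it. No gaps.
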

The above two results, together with the loss estimators based on $\tilde \ell_t, t\in[T]$ allow us to show the following regret bound.
\begin{theorem}
\label{thm:kl_prod}
Running WSU-UX with loss estimators based on $\tilde \ell_{t}, t\in[T]$ with $\gamma = \frac{\eta K}{2}, \eta = \Theta(\sqrt{\frac{\log(K)}{KT}})$ guarantees the following regret bound
\begin{align*}
    \sum_{t=1}^T \E[\ell_{t,A_t} - \ell_{t,i^*}] \leq O(\sqrt{KT\log(K)}). 
\end{align*}
\end{theorem}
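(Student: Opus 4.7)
The plan is to apply Lemma~\ref{lem:43} directly to the biased estimator $\hat\ell_t$ and exploit the downward bias to cancel the second-order variance terms, leaving only the $\log(K)/\eta$ leading term plus the $\gamma T$ exploration cost.

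First I would verify applicability. With $K\eta\leq\gamma$ one has $\tilde\pi_{t,i}\geq\gamma/K\geq\eta$, so $\tilde\ell_{t,i}\in[0,\ell_{t,i}]\subseteq[0,1]$; consequently $\hat\ell_{t,i}$ is a nonnegative, bounded importance-weighted estimator, and the second-order guarantee of Lemma~\ref{lem:43} applies verbatim to this new $\hat\ell$.

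Next I would take expectation. Since $i^*$ is deterministic (oblivious adversary) and $\pi_t,\tilde\pi_t$ are functions of the past, $\E[\hat\ell_{t,i}]=\tilde\ell_{t,i}=\ell_{t,i}(1-\eta/\tilde\pi_{t,i})$ and $\E[\hat\ell_{t,i}^2]=\tilde\ell_{t,i}^2/\tilde\pi_{t,i}$. Substituting these identities into Lemma~\ref{lem:43} and isolating the ``$\pi_t$-regret'' yields
\begin{align*}
\E\!\left[\sum_{t=1}^T\sum_{i=1}^K\pi_{t,i}\ell_{t,i}\right] - \sum_{t=1}^T\ell_{t,i^*}
&\leq \frac{\log K}{\eta} + \eta\,\E\!\left[\sum_{t=1}^T\frac{\tilde\ell_{t,i^*}^2 - \ell_{t,i^*}}{\tilde\pi_{t,i^*}}\right] \\
&\quad + \eta\,\E\!\left[\sum_{t=1}^T\sum_{i=1}^K\frac{\pi_{t,i}\bigl(\tilde\ell_{t,i}^2 - \ell_{t,i}\bigr)}{\tilde\pi_{t,i}}\right].
\end{align*}

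The key step, and the whole point of the biasing, is that both bracketed expressions are pointwise nonpositive: since $\tilde\ell_{t,i}\in[0,1]$, one has $\tilde\ell_{t,i}^2\leq\tilde\ell_{t,i}\leq\ell_{t,i}$, so each fraction is $\leq 0$. The entire variance contribution is absorbed by the bias, and the ``$\pi_t$-regret'' is at most $\log(K)/\eta$. This is the main technical point to verify, and it is precisely where bounded losses together with the multiplicative form of the bias $\eta/\tilde\pi_{t,i}$ conspire to take over the role that importance-weighting variance control plays in standard analyses.

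Finally I would convert to the sampled regret. Since $\tilde\pi_t=\gamma\mathbf{1}/K+(1-\gamma)\pi_t$ and $\ell_{t,i}\in[0,1]$, $\ip{\tilde\pi_t-\pi_t,\ell_t}\leq\gamma$, costing at most $\gamma T$. Plugging in $\gamma=\eta K/2$ and $\eta=\Theta(\sqrt{\log(K)/(KT)})$ gives a total bound of $\log(K)/\eta+\eta KT/2=\Theta(\sqrt{KT\log K})$, as claimed.
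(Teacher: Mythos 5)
Your approach is the same as the paper's: apply Lemma~\ref{lem:43} to the biased estimator, take expectations, and use the bias to neutralize the variance term at $i^*$ that would otherwise be uncontrollable (since $\tilde\pi_{t,i^*}$ can be as small as $\gamma/K=\Theta(\eta)$, the term $\eta\sum_t\E[\hat\ell_{t,i^*}^2]$ could be $\Theta(T)$ without the bias). That part of your argument, and the final conversion to sampled regret via the $\gamma T$ exploration cost, are right.

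However, there is a sign error in your second bracketed term. Writing $\tilde\ell_{t,i}=\ell_{t,i}-\eta\ell_{t,i}/\tilde\pi_{t,i}$, the algorithm's estimated loss $\E[\sum_i\pi_{t,i}\hat\ell_{t,i}]=\sum_i\E[\pi_{t,i}\ell_{t,i}]-\eta\sum_i\E[\pi_{t,i}\ell_{t,i}/\tilde\pi_{t,i}]$ \emph{understates} its true loss, so when you isolate $\E[\sum_{t,i}\pi_{t,i}\ell_{t,i}]-\sum_t\ell_{t,i^*}$ this bias must be added back with a $+$ sign. The correct second bracket is $\eta\,\E\bigl[\sum_{t,i}\pi_{t,i}(\tilde\ell_{t,i}^2+\ell_{t,i})/\tilde\pi_{t,i}\bigr]$, which is positive, not pointwise nonpositive: the cancellation you describe happens only on the comparator side, where the bias inflates $-\hat\ell_{t,i^*}$ and thereby offsets $\eta\hat\ell_{t,i^*}^2$. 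Your proof survives because the mishandled term is harmless anyway --- $\pi_{t,i}/\tilde\pi_{t,i}\le (1-\gamma)^{-1}\le 2$ bounds it by $4\eta KT=O(\sqrt{KT\log K})$ under the stated tuning --- and this is exactly how the paper treats it (both the full-sum variance and the full-sum bias are each bounded by $2\eta TK$; only the $i^*$ terms cancel). So the final rate is correct, but the claim that ``the entire variance contribution is absorbed by the bias'' is false, and your final accounting is missing an explicit $O(\eta KT)$ term.
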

\begin{proof}
WLOG we assume that $T\geq K$. We begin with the bound from Lemma~\ref{lem:43}. The second and third term in the RHS of the inequality are bounded as is standard in the Exp3 analysis
\begin{align*}
    \eta\sum_{t=1}^T\sum_{i=1}^K\pi_{t,i} \E[\hat \ell_{t,i}^2|\mathcal{F}_{t-1}] \leq \frac{\eta T K}{1-\gamma}\leq 2\eta T K, \qquad \eta\sum_{t=1}^T \E[\hat\ell_{t,i^*}^2] \leq \eta \sum_{t=1}^T \E\left[\frac{\tilde \ell_{t,i^*}^2}{\tilde\pi_{t,i^*}}\right] \leq \eta\sum_{t=1}^T \E\left[\frac{\ell_{t,i^*}^2}{\tilde \pi_{t,i^*}}\right],
\end{align*}
where $\mathcal{F}_{t-1}$ is the filtration generated by the random play and randomness of the losses up to time $t-1$.
We now consider the expectation of the LHS which evaluates to
\begin{align*}
    \sum_{t=1}^T\sum_{i=1}^K \E[\pi_{t,i}\hat\ell_{t,i}] - \sum_{t=1}^T\E[\hat \ell_{t,i^*}] &= \sum_{t=1}^T\sum_{i=1}^K\E[\pi_{t,i} \tilde \ell_{t,i}] - \sum_{t=1}^T\E[\tilde \ell_{t,i^*}] = \sum_{t=1}^T\sum_{i=1}^K\E[\pi_{t,i}\ell_{t,i}] - \sum_{t=1}^T\E[\ell_{t,i^*}]\\
    &+ \sum_{t=1}^T \E\left[\frac{\eta\ell_{t,i^*}}{\tilde \pi_{t,i}}\right] - \sum_{t=1}^T\sum_{i=1}^K \E\left[\frac{\eta \pi_{t,i}\ell_{t,i}}{\tilde \pi_{t,i}}\right]\geq \eta \sum_{t=1}^T \E\left[\frac{\ell_{t,i^*}^2}{\tilde \pi_{t,i}}\right] - 2\eta T K\\
    &+\sum_{t=1}^T\sum_{i=1}^K\E[\pi_{t,i}\ell_{t,i}] - \sum_{t=1}^T\E[\ell_{t,i^*}].
\end{align*}
Thus combining the bounds on the LHS and RHS we have
\begin{align*}
    \sum_{t=1}^T \sum_{i=1}^K \E[\pi_{t,i}\ell_{t,i}] - \sum_{t=1}^T \E[\ell_{t,i^*}] \leq \frac{\log(K)}{\eta} + 4\eta T K + \eta \sum_{t=1}^T \E\left[\frac{\ell_{t,i^*}^2}{\tilde \pi_{t,i}}\right] - \eta \sum_{t=1}^T \E\left[\frac{\ell_{t,i^*}^2}{\tilde \pi_{t,i}}\right].
\end{align*}
To complete the proof we only note that $\sum_{t=1}^T \sum_{i=1}^K \E[\pi_{t,i}\ell_{t,i}] - \sum_{t=1}^T \E[\ell_{t,A_t}] \leq 2T\gamma = \eta K T$. 
\end{proof}

\subsection{Intuition on biasing the update and the Prod family of algorithms}
The \ref{eq:vanilla-prod} update can be seen as a first order approximation to the Hedge update. Consider the Hedge update
\begin{align*}
    \pi_{t+1,i} = \pi_{t,i}\exp(-\eta(\hat\ell_{t,i}-\lambda_t))\,,
\end{align*}
where $\lambda_t$ is a normalization factor. The first order approximation is
\begin{align*}
    \pi_{t+1,i} \approx \pi_{t,i}(1-\eta(\hat\ell_{t,i}-\lambda_t))
\end{align*}
tuning $\lambda_t$ such that $\sum_{i=1}^K\pi_{t+1,i}=1$ recovers \ref{eq:vanilla-prod}. We can now reason why \citet{freeman2020no} are not able to obtain the min-max optimal regret bounds by tuning $\eta$ and $\gamma$ alone, which is that the first order approximation is loose. The loss-biasing introduced in the previous section acts as a correction term which approximately makes the \ref{eq:vanilla-prod} update equal the second order approximation of the Hedge update. Indeed, we have
\[
\pi_{t+1,i} = \pi_{t,i}\exp(-\eta(\hat\ell_{t,i}-\lambda_t))\approx \pi_{t,i}\left(1-\eta(\hat\ell_{t,i}-\lambda_t)+\frac{\eta^2}{2}(\hat\ell_{t,i}-\lambda_t)^2\right)\,.
\]
Since $-\eta\hat\ell_{t,i}+\frac{\eta^2}{2}\hat\ell_{t,i}^2=-\eta\hat\ell_{t,i}\left(1-\frac{\eta\ell_{t,i}}{2\pi_{t,i}}\right)$,
our loss adjustment is proportional to the second-order correction. We cannot exactly correct the second order difference, because the update rule must stay linear and not quadratic in the loss. We solve this by slightly overcorrecting, biasing by a larger amount than the second order adjustments implies as necessary. Fortunately, the regret analysis is not sensitive towards this as we have shown in Theorem~\ref{thm:kl_prod}.

\section{Importance weighting free adversarial MAB with LB-Prod}
\label{sec:lb_prod}
As already discussed, for linear update rules such as \ref{eq:vanilla-prod}, the range of the losses must be bounded to ensure $\pi_{t+1,i}\in (0,1)$.
If the Prod update was to use the masked loss $\tilde\ell_{t,i}=\ell_{t,i}\mathbb{I}(A_t=i)$ instead of the importance weighted update, there would be no need for controlling the magnitude of the losses.
Maybe surprisingly to readers familiar with the bandit literature, this is possible.
LB-Prod (Algorithm~\ref{alg: lb-prod}) is a Prod version using the masked loss $\tilde\ell_t$.
The main deviation from \ref{eq:vanilla-prod} is introducing a non-symmetric normalization term $\lambda_{t,i}$: 
\[\lambda_{t,i}=\frac{\pi_{t,i}\sum_{j=1}^K\pi_{t,j}\tilde\ell_{t,j}}{\sum_{j=1}^K\pi_{t,j}^2}=\frac{\pi_{t,i}\pi_{t,A_t}}{\sum_{j=1}^K\pi_{t,j}^2}\ell_{t,A_t}\,.\]
Since  $\frac{\pi_{t,i}\pi_{t,A_t}}{\sum_{j=1}^K\pi_{t,j}^2}\leq \frac{\frac{1}{2}(\pi_{t,i}^2+\pi_{t,A_t}^2)}{\sum_{j=1}^K\pi_{t,j}^2}\leq 1$, the range of $\lambda_{t,i}$ is bounded by the range of $\ell_{t}$.
  
The following theorem shows that this simple algorithm is rate optimal under the right tuning.

\begin{algorithm2e}
\caption{LB-Prod}
\label{alg: lb-prod}
\textbf{Input:} Parameter $\eta<1$. \\
$\forall i\in[K]:\,\pi_{1,i}=\frac{1}{K}$.\\
\For{$t=1,\dots, T$}{
Play $A_t\sim \pi_{t}$ and observe $\ell_{t,A_t}$.\\
Construct $\forall i\in[K]:\,\tilde\ell_{t,i}=\ell_{t,i}\mathbb{I}(A_t=i)$.\\
Set $\forall i\in[K]:\lambda_{t,i}=\frac{\pi_{t,i}\sum_{j=1}^K\pi_{t,j}\tilde\ell_{t,j}}{\sum_{j=1}^K\pi_{t,j}^2}$.\\
Update $\pi_{t+1,i}=\pi_{t,i}(1-\eta(\tilde\ell_{t,i}-\lambda_{t,i}))$.
}
\end{algorithm2e}

\begin{theorem}
\label{thm: lb}
For any sequence of losses $\ell_t\in[-1,1]^K$ and any $\eta<1$, LB-Prod produces valid distributions $\pi_t\in\Delta([K])$ and its regret  is bounded by
\begin{align*}
\sum_{t=1}^T \E[\ell_{t,A_t} - \ell_{t,i^*}]\leq
2+\frac{K\log(T)}{\eta} + \frac{2\eta T}{1-\eta}\,. 
\end{align*}
\end{theorem}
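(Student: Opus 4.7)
The proof splits into showing validity of $\pi_t$ and bounding the regret. For validity, I would use the identity $\lambda_{t,i}=\pi_{t,i}\mu_t$ with the scalar $\mu_t:=\pi_{t,A_t}\ell_{t,A_t}/\sum_j\pi_{t,j}^2$, which makes the update mass-preserving: $\sum_i\pi_{t,i}\lambda_{t,i}=\mu_t\sum_j\pi_{t,j}^2=\pi_{t,A_t}\ell_{t,A_t}=\sum_i\pi_{t,i}\tilde\ell_{t,i}$. Nonnegativity would follow from checking case-by-case that $|y_{t,i}|:=|\tilde\ell_{t,i}-\lambda_{t,i}|\leq 1$: for $i=A_t$, $y_{t,A_t}=\ell_{t,A_t}(1-\pi_{t,A_t}^2/\sum_j\pi_{t,j}^2)\in[-1,1]$, and for $i\neq A_t$, $|y_{t,i}|=|\lambda_{t,i}|\leq 1$ by the AM--GM bound flagged in the text; combined with $\eta<1$ this ensures $1-\eta y_{t,i}>0$.

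The structural observation driving the regret bound is $y_{t,i}/\pi_{t,i}=\hat\ell_{t,i}-\mu_t$, where $\hat\ell_{t,i}:=\tilde\ell_{t,i}/\pi_{t,i}$ is the standard importance-weighted estimator, so LB-Prod is the first-order Taylor expansion of log-barrier OMD with loss $\hat\ell_t$. Accordingly I would take the log-barrier Bregman divergence $D_F(\pi^*,\pi):=\sum_i[\log(\pi_i/\pi^*_i)+\pi^*_i/\pi_i-1]$ as potential against the smoothed benchmark $\pi^*_{i^*}=1-1/T$ and $\pi^*_j=1/(T(K-1))$ for $j\neq i^*$. This $\pi^*$ satisfies $\sum_t\ip{\pi^*-e_{i^*},\ell_t}\leq 2$ (giving the additive $2$ in the stated bound) and $D_F(\pi^*,\pi_1)=-K\log K-\sum_i\log\pi^*_i\leq K\log T$.

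I would then expand
\[
D_F(\pi^*,\pi_{t+1})-D_F(\pi^*,\pi_t)=\sum_i\log(1-\eta y_{t,i})+\sum_i\pi^*_i\cdot\frac{\eta(\hat\ell_{t,i}-\mu_t)}{1-\eta y_{t,i}},
\]
apply $\log(1-x)\leq -x$ and $1/(1-x)=1+x/(1-x)$, and use the cancellations $\sum_i(\pi^*_i-\pi_{t,i})\mu_t=0$ and $\sum_i y_{t,i}=\sum_i\pi_{t,i}(\hat\ell_{t,i}-\mu_t)$ to reach
\[
D_F(\pi^*,\pi_{t+1})-D_F(\pi^*,\pi_t)\leq \eta\ip{\pi^*-\pi_t,\hat\ell_t}+\frac{\eta^2}{1-\eta}S_t,
\]
where $S_t:=\sum_i\pi^*_i\pi_{t,i}(\hat\ell_{t,i}-\mu_t)^2$, using $1/(1-\eta y_{t,i})\leq 1/(1-\eta)$ on the nonnegative residual. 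Telescoping, using $D_F\geq 0$, dividing by $\eta$, and taking expectation (so $\E[\hat\ell_{t,i}]=\ell_{t,i}$) reduces the regret bound vs $\pi^*$ to controlling $\sum_t\E[S_t]$.

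The main obstacle is to show $\E[S_t]\leq 2$ per round. Expanding the square and taking the $A_t$-expectation yields
\[
\E_t[S_t]=\sum_i\pi^*_i\ell_{t,i}^2-2\sum_i\frac{\pi^*_i\pi_{t,i}^2\ell_{t,i}^2}{\sum_j\pi_{t,j}^2}+\left(\sum_i\pi^*_i\pi_{t,i}\right)\frac{\sum_j\pi_{t,j}^3\ell_{t,j}^2}{(\sum_j\pi_{t,j}^2)^2}.
\]
The first term is $\leq 1$, the middle is nonnegative and subtracted (so I drop it), and for the last I use $\sum_j\pi_{t,j}^3\leq(\max_j\pi_{t,j})\sum_j\pi_{t,j}^2$ together with $\sum_i\pi^*_i\pi_{t,i}\leq\max_j\pi_{t,j}$ to bound it by $(\max_j\pi_{t,j})^2/\sum_j\pi_{t,j}^2\leq 1$. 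This gives $\E[S_t]\leq 2$, summing to $2T$, and combining the benchmark-switching cost ($2$), divergence ($K\log(T)/\eta$), and stability ($2\eta T/(1-\eta)$) yields the claimed bound.
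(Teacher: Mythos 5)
Your proposal is correct and follows essentially the same route as the paper's proof: the same mass-preservation and $|\tilde\ell_{t,i}-\lambda_{t,i}|\le 1$ argument for validity, the same log-barrier Bregman potential with a $1/T$-mixed comparator, the same $\log(1-x)\le -x$ expansion with the $1/(1-\eta)$ stability factor, and the same second-moment computation (your $\E_t[S_t]\le 2$ is exactly the paper's Lemma~\ref{lem: expectation}, proved with $\max_j\pi_{t,j}$ in place of the paper's power-mean inequality). The only differences are organizational, and your bookkeeping actually lands cleanly on the stated constant $\tfrac{2\eta T}{1-\eta}$ where the paper's own displayed chain briefly writes $\tfrac{4\eta T}{1-\eta}$.
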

Tuning $\eta = \sqrt{\frac{\log(T)K}{2T}}$ results in a regret bound of $O(\sqrt{KT\log(T)})$ for any $T>\frac{K\log(T)}{2}$.

\subsection{Analysis of LB-Prod}
The following technical lemma is proven in the appendix.
\begin{lemma}
\label{lem: expectation}
For any timestep $t$ and arm $i$, it holds 
\begin{align*}
    &\E_t[\tilde\ell_{t,i}-\lambda_{t,i}] = \pi_{t,i}\left(\ell_{t,i}-c_t\right)\,\\
    &\E_t[(\tilde\ell_{t,i}-\lambda_{t,i})^2]\leq 2\pi_{t,i}\,,
\end{align*}
where $c_t  \in [-1,1]$ is an arm independent constant.
\end{lemma}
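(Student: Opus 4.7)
The plan is to compute both moments by direct expansion, using that $A_t\sim\pi_t$ and that $\tilde\ell_{t,j}=\ell_{t,j}\mathbb{I}(A_t=j)$ has support on exactly one coordinate. First I would rewrite $\lambda_{t,i}$ in its ``sampled'' form $\lambda_{t,i}=\tfrac{\pi_{t,i}\pi_{t,A_t}\ell_{t,A_t}}{\sum_j \pi_{t,j}^2}$, which makes both $\tilde\ell_{t,i}$ and $\lambda_{t,i}$ explicit functions of the single random draw $A_t$.

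For the first moment, I would compute
\begin{align*}
\E_t[\tilde\ell_{t,i}]=\pi_{t,i}\ell_{t,i},\qquad
\E_t[\lambda_{t,i}]=\frac{\pi_{t,i}}{\sum_j\pi_{t,j}^2}\sum_{j}\pi_{t,j}^2\ell_{t,j}=\pi_{t,i}c_t,
\end{align*}
with $c_t:=\frac{\sum_j\pi_{t,j}^2\ell_{t,j}}{\sum_j\pi_{t,j}^2}$. Since $c_t$ is a convex combination of the $\ell_{t,j}\in[-1,1]$, it lies in $[-1,1]$ and is independent of $i$. Subtracting gives the first claim $\E_t[\tilde\ell_{t,i}-\lambda_{t,i}]=\pi_{t,i}(\ell_{t,i}-c_t)$.

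For the second moment, let $S:=\sum_j\pi_{t,j}^2$ and expand $(\tilde\ell_{t,i}-\lambda_{t,i})^2$. A direct computation of each conditional expectation yields
\begin{align*}
\E_t[\tilde\ell_{t,i}^2]=\pi_{t,i}\ell_{t,i}^2,\qquad
\E_t[\tilde\ell_{t,i}\lambda_{t,i}]=\frac{\pi_{t,i}^3\ell_{t,i}^2}{S},\qquad
\E_t[\lambda_{t,i}^2]=\frac{\pi_{t,i}^2}{S^2}\sum_{j}\pi_{t,j}^3\ell_{t,j}^2.
\end{align*}
The middle (cross) term enters with a negative sign and is nonnegative, so it may be dropped from an upper bound. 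The first term is at most $\pi_{t,i}$. The main obstacle is the third term: I would bound $\sum_j\pi_{t,j}^3\ell_{t,j}^2\le (\max_j \pi_{t,j})\cdot S$ and then use $\max_j\pi_{t,j}\le\sqrt{S}$, giving $\E_t[\lambda_{t,i}^2]\le \pi_{t,i}^2/\sqrt{S}$. Since $S\ge \pi_{t,i}^2$, this is at most $\pi_{t,i}$, yielding $\E_t[(\tilde\ell_{t,i}-\lambda_{t,i})^2]\le 2\pi_{t,i}$.

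The only subtle step is the second moment's third term; the bound $\max_j\pi_{t,j}\le\sqrt{S}$ (an $\ell_\infty\le\ell_2$ inequality on a finite sequence) is exactly what makes the asymmetric normalizer $\lambda_{t,i}$ pay off, since the naive bound $\sum_j\pi_{t,j}^3\le 1$ would only produce a vacuous estimate. Everything else is routine.
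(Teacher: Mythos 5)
Your proposal is correct and follows essentially the same route as the paper: identical first-moment computation with $c_t=\frac{\sum_j\pi_{t,j}^2\ell_{t,j}}{\sum_j\pi_{t,j}^2}$, and for the second moment the same decomposition that drops the nonnegative cross term and bounds $\E_t[\tilde\ell_{t,i}^2]$ and $\E_t[\lambda_{t,i}^2]$ each by $\pi_{t,i}$. The only (immaterial) difference is the final norm-comparison step: you use $\max_j\pi_{t,j}\le\sqrt{\sum_j\pi_{t,j}^2}$, while the paper uses $\sum_k\pi_{t,k}^2\ge\bigl(\sum_k\pi_{t,k}^3\bigr)^{2/3}$; both are valid.
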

The main lemma of the analysis bounds the per-step regret compared to a Bregman divergence potential. This lemma is closely related to bounding the stability
term in OMD/FTRL analysis.
\begin{lemma}
\label{lem: per step}
For any time $t\in[T]$ and any $u\in\Delta([K])$, it holds
\begin{align*}
 \ip{\pi_t-u,\ell_t}+\E_t\left[\eta^{-1}D_{LB}(u,\pi_{t+1})\right]-\eta^{-1}D_{LB}(u,\pi_{t}) \leq \frac{2\eta}{1-\eta}
\end{align*}
\end{lemma}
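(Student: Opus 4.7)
The plan is to expand the one-step change of $D_{LB}$ by plugging in the explicit multiplicative form of the update, then take $\E_t$ and rely on Lemma~\ref{lem: expectation} to produce the inner-product and second-moment structure on the right-hand side. Write $x_{t,i}:=\tilde\ell_{t,i}-\lambda_{t,i}$, so the update reads $\pi_{t+1,i}=\pi_{t,i}(1-\eta x_{t,i})$. Before anything else I would verify $|x_{t,i}|\leq 1$: for $i=A_t$, $x_{t,i}=\ell_{t,A_t}(1-\pi_{t,A_t}^2/\sum_j\pi_{t,j}^2)$ and the prefactor lies in $[0,1]$; for $i\neq A_t$, $x_{t,i}=-(\pi_{t,i}\pi_{t,A_t}/\sum_j\pi_{t,j}^2)\ell_{t,A_t}$ and AM-GM bounds the prefactor by $1/2$. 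Combined with $\eta<1$, this gives $1-\eta x_{t,i}\geq 1-\eta>0$, so $\pi_{t+1}$ stays strictly positive and the next computation is well defined.

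Using $D_{LB}(u,\pi)=\sum_i\log(\pi_i/u_i)+\sum_i u_i/\pi_i-1$, the per-step difference telescopes to
\[
D_{LB}(u,\pi_{t+1})-D_{LB}(u,\pi_t)=\sum_i\log(1-\eta x_{t,i})+\sum_i\frac{u_i}{\pi_{t,i}}\!\left(\frac{1}{1-\eta x_{t,i}}-1\right).
\]
Applying $\log(1-y)\leq -y$ and the identity $\frac{y}{1-y}=y+\frac{y^2}{1-y}$ with $y=\eta x_{t,i}$, together with $1-\eta x_{t,i}\geq 1-\eta$ in the remainder, yields the upper bound
\[
-\eta\sum_i x_{t,i}+\eta\sum_i\frac{u_i x_{t,i}}{\pi_{t,i}}+\frac{\eta^2}{1-\eta}\sum_i\frac{u_i x_{t,i}^2}{\pi_{t,i}}.
\]

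Now take $\E_t$ and apply Lemma~\ref{lem: expectation}. The first two sums become
\[
-\eta\sum_i\pi_{t,i}(\ell_{t,i}-c_t)+\eta\sum_i u_i(\ell_{t,i}-c_t)=-\eta\ip{\pi_t-u,\ell_t},
\]
where the $c_t$ contributions cancel because both $\pi_t$ and $u$ lie on the simplex. The quadratic term is bounded, using $\E_t[x_{t,i}^2]\leq 2\pi_{t,i}$, by
\[
\frac{\eta^2}{1-\eta}\sum_i\frac{u_i\cdot 2\pi_{t,i}}{\pi_{t,i}}=\frac{2\eta^2}{1-\eta}.
\]
Dividing through by $\eta$ and rearranging gives exactly the claim.

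The main subtlety is the cancellation of $c_t$ in the linear terms: Lemma~\ref{lem: expectation} does \emph{not} supply a per-coordinate unbiased estimator but only one biased by an arm-independent shift, and this is precisely what allows the unweighted sum $\sum_i x_{t,i}$ (which arose from $\log(1-\eta x_{t,i})\leq -\eta x_{t,i}$ and cannot be simplified by $\sum_i\pi_{t,i}x_{t,i}=0$) to combine with the $u$-weighted sum into a clean $\ip{\pi_t-u,\ell_t}$. Beyond this, keeping $\eta<1$ is the only quantitative care needed, so that $1/(1-\eta x_{t,i})\leq 1/(1-\eta)$ can be used in the quadratic remainder; the rest is a standard log-barrier stability calculation.
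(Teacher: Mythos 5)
Your proof is correct and follows essentially the same route as the paper's: expand the log-barrier Bregman difference using the multiplicative update, apply $\log(1-y)\leq -y$ and $\tfrac{y}{1-y}=y+\tfrac{y^2}{1-y}$ with $1-\eta x_{t,i}\geq 1-\eta$, then invoke Lemma~\ref{lem: expectation} so the arm-independent shift $c_t$ cancels between the $\pi_t$- and $u$-weighted linear terms. (The only nit is that the additive constant in your formula for $D_{LB}$ should be $K$ rather than $1$, which is immaterial since it cancels in the difference.)
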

\begin{proof}
\begin{align*}
&\ip{\pi_t-u,\ell_t}+\E_t\left[\eta^{-1}D_{LB}(u,\pi_{t+1})\right]-\eta^{-1}D_{LB}(u,\pi_{t})\\
    &=\ip{\pi_t-u,\ell_t} +\E_t\left[\sum_{i=1}^K\frac{u_i-\pi_{t+1,i}}{\eta\pi_{t+1,i}}-\frac{u_i-\pi_{t,i}}{\eta\pi_{t,i}} +\frac{1}{\eta}\log\left(\frac{\pi_{t+1,i}}{\pi_{t,i}}\right) \right]\\
    &=\ip{\pi_t-u,\ell_t} +\E_t\left[\sum_{i=1}^K\frac{u_i\left(1-\frac{\pi_{t+1,i}}{\pi_{t,i}}\right)}{\eta\pi_{t+1,i}} +\frac{1}{\eta}\log\left(1-\eta(\tilde\ell_{t,i}-\lambda_{t,i})\right) \right]\\
    &\leq\ip{\pi_t-u,\ell_t} +\E_t\left[\sum_{i=1}^K\frac{u_i(\tilde\ell_{t,i}-\lambda_{t,i})}{\pi_{t,i}(1-\eta(\tilde\ell_{t,i}-\lambda_{t,i}))} -\tilde\ell_{t,i}+\lambda_{t,i}\right]\tag{$\log(1+x)\leq x$}\\
    &\leq\ip{\pi_t-u,\ell_t} +\sum_{i=1}^K\left(\frac{u_i}{\pi_{t,i}}-1\right)\E_t[\tilde\ell_{t,i}-\lambda_{t,i}]+\eta\sum_{i=1}^K\frac{u_i}{\pi_{t,i}}\E_t\left[\frac{(\tilde\ell_{t,i}-\lambda_{t,i})^2}{1-\eta} \right]\\
    &\leq \ip{\pi_t-u,\ell_t}+\sum_{i=1}^K(u_i-\pi_{t,i})(\ell_{t,i}-c_t)+\frac{2\eta}{1-\eta}\sum_{i=1}^Ku_i= \frac{2\eta}{1-\eta}\tag{Lemma~\ref{lem: expectation}}\,.
\end{align*}
\end{proof}
Finally, we can prove the main regret guarantee.
\begin{proof}{\bf of Theorem~\ref{thm: lb}}
To show that this algorithm outputs proper probability distributions, note that
\begin{align*}
\sum_{i=1}^K\pi_{t+1,i}=\left(\sum_{i=1}^K\pi_{t,i}\right)-\eta\pi_{t,A_t}\ell_{t,A_t}+\eta\sum_{j=1}^K\frac{\pi_{t,A_t}\pi_{tj}^2}{\sum_{k=1}^K\pi_{tk}^2}\ell_{t,A_t}=\sum_{i=1}^K\pi_{t,i}=\dots=\sum_{i=1}^K\pi_{1,i}=1\,.
\end{align*}
Additionally
\begin{align*}
&|\tilde\ell_{t,A_t}-\lambda_{t,A_t}|=\frac{\sum_{j\neq A_t}\pi_{tj}^2}{\sum_{j=1}^K\pi_{tj}^2}|\ell_{t,A_t}|\leq |\ell_{t,A_t}|\\
\forall i\neq A_t:\,&|\tilde\ell_{t,i}-\lambda_{t,i}|= \frac{\pi_{t,i}\pi_{t,A_t}}{\sum_{j=1}^K\pi_{tj}^2}|\ell_{t,A_t}|\leq  \frac{\pi_{t,i}^2+\pi_{t,A_t}^2}{2\sum_{j=1}^K\pi_{tj}^2}|\ell_{t,A_t}|\leq \frac{1}{2}|\ell_{t,A_t}|\,.
\end{align*}
Hence  for any $\eta<1$, the probability of any arm is strictly positive.

\textbf{Regret: }
For any comparator $u^\star$, we define $u = u^\star+\frac{1}{T}\left(\pi_{1}-u^\star\right)$, which satisfies $\sum_{t=1}^T\ip{u-u^\star,\ell_t}\leq 2$.
Using Lemma~\ref{lem: per step}, we obtain by the telescoping sum of Bregman terms
\begin{align*}
    \E\left[\sum_{t=1}^T\ip{\pi_t-u,\ell_t}\right]\leq \frac{4\eta T}{1-\eta} +\eta^{-1}\E\left[D_{LB}(u,\pi_{1})-D_{LB}(u,\pi_{T+1})\right]\leq \frac{4\eta T}{1-\eta} + \frac{K\log(T)}{\eta}\,.
\end{align*}
\end{proof}
\subsection{Intuition of LB-Prod}
As in the case of \ref{eq:wsu-ux}, LB-Prod is also the first order approximation of another algorithm, namely importance-weighted Log-barrier with OMD updates. The update rule of Logbarrier OMD is
\begin{align*}
    \pi_{t+1,i} = \frac{\pi_{t,i}}{1-\eta\pi_{t,i}(\hat\ell_{t,i}-\lambda_t)}\approx \pi_{t,i}(1-\eta\pi_{t,i}(\hat\ell_{t,i}-\lambda_t))\,.
\end{align*}
Tuning $\lambda_t$ to ensure $\sum_{i=1}^K\pi_{t+1,i}=1$ and taking $\pi_{t,i}$ into the bracket results in the update rule of LB-Prod.
The curvature of the Log-barrier regularization is what ensures that the importance weighted loss $\hat\ell_{t,i}$ is always multiplied with its probability $\pi_{t,i}$, allowing to run the algorithm on the masked non-weighted loss sequence directly.
Additionally, the second order error of the approximation is only of size $\eta$, which is why the algorithm does not need to add a correction term.

\section{Best of both worlds algorithms}
As an extension of the Prod family we propose a linearization of FTRL with $1/2$-Tsallis entropy regularization. The resulting algorithm, TS-Prod (Algorithm~\ref{alg: ts-prod}) enjoys
a min-max optimal regret bound of $O(\sqrt{KT})$ in the adversarial setting and an asymptotically optimal $O(\sum_{i\neq i^*}\frac{\log(T)}{\Delta_i})$ regret in the stochastic setting where each $\ell_{t,i}$ is sampled from some unknown distribution in $[0,1]$ and $\Delta_i=\max_{i^\star\in[k]}\E[\ell_{t,i}-\ell_{t,i^\star}]$ are the gaps to the optimal arm. Here we assume that there is a unique arm with $\Delta_{i*} = 0$. At the end of the section we propose a second algorithm, conceptually similar to TS-Prod. This algorithm enjoys the same regret guarantees as TS-Prod, however, we include it to demonstrate a different regret bounding technique which reduces to a regret bound on stabilized OMD~\citep{fang2022online} with the $1/2$-Tsallis entropy potential.
\subsection{TS-Prod}
By using the same linear approximation of the 1-step OMD update with $\frac{1}{2}$-Tsallis-entropy potential, we obtain
\begin{align*}
    \pi_{t+1,i} =\frac{\pi_{t,i}}{(1+\eta_t\sqrt{\pi_{t,i}}(\hat\ell_{t,i}-\lambda_t))^2}\approx \pi_{t,i}\left(1-\frac{2\eta_t}{\sqrt{\pi_{t,i}}}(\tilde\ell_{t,i}-\pi_{t,i}\lambda_t)\right)\,.
\end{align*}

We again have the issues of ensuring bounded loss ranges and correcting for second-order approximation errors.
Additionally, our update rule is derived from the OMD update, while we want to approximate FTRL.
To move from OMD to FTRL, one needs to additionally add a stabilization term (see \cite{fang2022online}) to the loss of $\left(\eta_{t}^{-1}-\eta_{t-1}^{-1}\right)\nabla F(\pi_t)$.
We ensure all of this with a single loss biasing.

\begin{algorithm2e}
\caption{TS-Prod}
\label{alg: ts-prod}
$\forall i\in[K]:\,\pi_{1,i}=\frac{1}{K}$.\\
\For{$t=1,\dots, T$}{
Play $A_t\sim \pi_{t}$ and observe $\ell_{t,A_t}$.\\
Construct $\forall i\in[K]:$
\begin{align}
&\tilde\ell_{t,i} = \left(\ell_{t,i}-\frac{\eta_t(C_t-\frac{13}{2}\pi_{t,i})}{\sqrt{\pi_{t,i}}}\right)\mathbb{I}\left(A_t=i\right)\,,\\
&\text{ where }C_t=\frac{13}{2}+\left(\frac{1}{\eta_t^2}-\frac{1}{\eta_t\eta_{t-1}}\right)\notag
\end{align}
Update
\begin{align}
    &\pi_{t+1,i} = \pi_{t,i}\left(1-\frac{2\eta_t}{\sqrt{\pi_{t,i}}}(\tilde\ell_{t,i}-\lambda_{t,i})\right)\,,\text{ where }\lambda_{t,i}=\frac{\pi_{t,i}\sum_{j=1}^K\sqrt{\pi_{t,j}}\tilde\ell_{t,j}}{\sum_{j=1}^K\pi_{t,j}^\frac{3}{2}}\label{eq: update TS}
\end{align}
}
\end{algorithm2e}
\begin{theorem}
\label{thm: ts}
The regret of Algorithm~\ref{alg: ts-prod} with $\eta_t=\frac{1}{\sqrt{K+26t}}$ is bounded by $O(\sqrt{KT}+K\log(T))$ in the adversarial setting and by
$O\left(\sum_{i\neq i^\star}\frac{\log(T)}{\Delta_i}\right)$ in the stochastic setting.
\end{theorem}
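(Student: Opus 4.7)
My plan is to mirror the structure of the LB-Prod analysis but with $1/2$-Tsallis entropy as the underlying potential, and then to apply the Zimmert--Seldin self-bounding technique to transition from the adversarial bound to the stochastic one. First, I would verify that the $\pi_t$ remain valid distributions: as in the proof of Theorem~\ref{thm: lb}, summing the update rule one checks that $\sum_i \pi_{t+1,i} = \sum_i \pi_{t,i}$ thanks to the definition of $\lambda_{t,i}$, and one bounds the per-coordinate multiplicative factor $|1 - \frac{2\eta_t}{\sqrt{\pi_{t,i}}}(\tilde\ell_{t,i} - \lambda_{t,i})|$ away from $0$ using $\eta_t \leq 1/\sqrt{K}$ and the fact that the bias $\eta_t (C_t - \frac{13}{2}\pi_{t,i})/\sqrt{\pi_{t,i}}$ is controlled by the schedule.

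Next, I would prove the key per-step inequality: for every $u \in \Delta([K])$,
\begin{align*}
\langle \pi_t - u, \ell_t\rangle + \E_t\bigl[\eta_t^{-1} D_{TS}(u, \pi_{t+1})\bigr] - \eta_{t-1}^{-1} D_{TS}(u, \pi_t) \leq O\!\left(\eta_t \sum_{i=1}^K \sqrt{\pi_{t,i}}\right),
\end{align*}
analogous to Lemma~\ref{lem: per step}. The derivation expands $D_{TS}(u, \pi_{t+1}) - D_{TS}(u, \pi_t)$, uses the update rule $\pi_{t+1,i}/\pi_{t,i} = 1 - \frac{2\eta_t}{\sqrt{\pi_{t,i}}}(\tilde\ell_{t,i} - \lambda_{t,i})$, and Taylor-expands the resulting expression in $\eta_t$. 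The first-order term, after taking conditional expectation, cancels $\langle \pi_t - u, \ell_t\rangle$ up to an arm-independent shift (as in Lemma~\ref{lem: expectation}, extended to the $\sqrt{\pi_{t,i}}$ weighting in $\lambda_{t,i}$). The role of the $\frac{13}{2}\pi_{t,i}/\sqrt{\pi_{t,i}}$ piece in the bias is to absorb the second-order error $O(\eta_t^2 \E_t[(\tilde\ell_{t,i} - \lambda_{t,i})^2/\pi_{t,i}^{3/2}])$ coming from this expansion, while the $(1/\eta_t^2 - 1/(\eta_t \eta_{t-1}))/\sqrt{\pi_{t,i}}$ piece exactly matches the FTRL-stabilization loss $(\eta_t^{-1} - \eta_{t-1}^{-1}) \nabla F(\pi_t)_i = -(\eta_t^{-1} - \eta_{t-1}^{-1})/\sqrt{\pi_{t,i}}$ needed to shift from an OMD-style to an FTRL-style telescope.

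Telescoping the per-step inequality and using $D_{TS}(u, \pi_1) \leq 2\sqrt{K}$ together with the schedule $\eta_t = 1/\sqrt{K + 26t}$, which gives $\sum_t \eta_t \leq O(\sqrt{T/K})$ on one hand and $\sum_t \eta_t \sqrt{\pi_{t,i}} \leq O(\sqrt{T \sum_t \pi_{t,i}})$ by Cauchy--Schwarz on the other, would yield the adversarial bound $O(\sqrt{KT} + K\log T)$ (the $K\log T$ absorbs a small-probability correction and the comparator shift $u = u^\star + \tfrac{1}{T}(\pi_1 - u^\star)$ used to handle the log-barrier-like singularity in $D_{TS}$ at the boundary). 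For the stochastic bound I would invoke the self-bounding argument of \citet{zimmert2021tsallis}: the regret is simultaneously upper bounded by $O(\sum_t \E[\sum_i \eta_t \sqrt{\pi_{t,i}}]) + O(K\log T)$ and lower bounded by $\sum_t \sum_{i \neq i^\star} \Delta_i \E[\pi_{t,i}]$, so using $\sqrt{\pi_{t,i}} \leq \tfrac{1}{2}(\alpha \pi_{t,i}/\Delta_i + \Delta_i/\alpha \cdot \eta_t^2)$ for $\alpha = \Delta_i/(2\eta_t)$ and tuning yields the asymptotically optimal $O(\sum_{i\neq i^\star} \log(T)/\Delta_i)$ bound.

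The main obstacle is verifying that the single scalar bias $C_t$ simultaneously (i) neutralizes the second-order remainder in the linearization and (ii) implements the FTRL stabilization correction, while keeping the update linear in $\ell_{t,A_t}$ so that incentive compatibility is preserved; the constants $13/2$ and the specific schedule are engineered to make both roles compatible, and nailing down the exact inequalities so that the cross terms between these two corrections do not blow up the regret bound is where the most care is needed.
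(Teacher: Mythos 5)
Your proposal follows essentially the same route as the paper: a lower bound on the probabilities guaranteeing the biased losses stay bounded, a per-step Bregman inequality in which the two pieces of the bias $C_t$ respectively absorb the second-order linearization error and implement the FTRL/dual-stabilization correction $(\eta_t^{-1}-\eta_{t-1}^{-1})\nabla F(\pi_t)$, a telescoping argument for the adversarial bound, and the Zimmert--Seldin self-bounding trick for the stochastic bound. The only cosmetic difference is that the paper keeps $\eta_t^{-1}$ on both Bregman terms and cancels the $(\eta_t^{-1}-\eta_{t-1}^{-1})\frac{u_i-\pi_{t,i}}{\sqrt{\pi_{t,i}}}$ penalty against the stabilization term afterwards, and no comparator shift $u=u^\star+\frac{1}{T}(\pi_1-u^\star)$ is needed since $D_{TS}(e_{i^\star},\pi_1)=O(\sqrt{K})$ has no boundary singularity (that shift is only required for the log-barrier divergence in LB-Prod).
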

\subsection{Analysis of TS-Prod}
\label{sec:ts-prod}
We first show that the loss biasing is sufficient to ensure that the distribution is well defined. 
\begin{lemma}
\label{lem: ts lower prob}
If $C_t$ is a non-increasing sequence, $\eta_t <\frac{2}{\sqrt{KC_t^2}}$ and $\eta_{t+1}^2 \leq \eta_t^2(1-13\eta_t^2)$ for all $t$, then the update rule of TS-Prod~\ref{alg: ts-prod} is well defined and satisfies $\pi_{t,i}> C_t^2\eta_t^2$ for any arm and loss sequence at all time steps.
\end{lemma}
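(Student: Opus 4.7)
The plan is to proceed by induction on $t$. For the base case, $\pi_{1,i} = 1/K$, and the hypothesis on $\eta_1$ together with $C_1 \geq 13/2$ (since $1/\eta_t^2 - 1/(\eta_t\eta_{t-1}) \geq 0$ whenever $\eta_t \leq \eta_{t-1}$) gives $\pi_{1,i} > C_1^2 \eta_1^2$. For the inductive step, assume $\pi_{t,j} > C_t^2 \eta_t^2$ for every $j$ and fix an arbitrary $i$. Since $C_{t+1} \leq C_t$ and $\eta_{t+1}^2 \leq \eta_t^2(1 - 13\eta_t^2)$, we have $C_{t+1}^2\eta_{t+1}^2 \leq C_t^2\eta_t^2(1-13\eta_t^2)$, so it suffices to show $\pi_{t+1,i} \geq C_t^2\eta_t^2(1 - 13\eta_t^2)$. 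I would split on whether $A_t = i$.

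In the case $A_t = i$, every other $\tilde\ell_{t,j}$ is zero, so $\lambda_{t,i} = \frac{\pi_{t,i}^{3/2}\tilde\ell_{t,i}}{\sum_j \pi_{t,j}^{3/2}}$ and $\tilde\ell_{t,i} - \lambda_{t,i} = \rho_t\tilde\ell_{t,i}$, where $\rho_t := \frac{\sum_{j\neq i}\pi_{t,j}^{3/2}}{\sum_j\pi_{t,j}^{3/2}} \in [0,1]$. Substituting the definition of $\tilde\ell_{t,i}$ and writing $x := \sqrt{\pi_{t,i}}$, the update simplifies to
\begin{equation*}
\pi_{t+1,i} \;=\; x^2(1 - 13\rho_t\eta_t^2) - 2\eta_t\rho_t x\,\ell_{t,i} + 2\eta_t^2\rho_t C_t.
\end{equation*}
The worst case $\ell_{t,i} = 1$ gives a convex quadratic in $x$ whose vertex is $x^\star = \eta_t\rho_t/(1 - 13\rho_t\eta_t^2) < C_t\eta_t$ (since $C_t \geq 13/2 \geq \rho_t$), so the quadratic is increasing on the feasible region $x \geq C_t\eta_t$ provided by the inductive hypothesis. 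Evaluating at the boundary $x = C_t\eta_t$, the linear and constant terms conveniently cancel and yield $C_t^2\eta_t^2(1 - 13\rho_t\eta_t^2) \geq C_t^2\eta_t^2(1-13\eta_t^2)$. This is precisely where the $13\pi_{t,i}/2$ piece of the bias does its job: it cancels the first-order drop at the boundary and provides the $(1 - 13\eta_t^2)$ factor matching the hypothesis on $\eta_{t+1}^2$.

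The case $A_t \neq i$ is more delicate. Here $\tilde\ell_{t,i} = 0$ and only $j = A_t$ contributes to $\lambda_{t,i}$, giving $\pi_{t+1,i} = \pi_{t,i} + 2\eta_t\sqrt{\pi_{t,i}}\lambda_{t,i}$ with $\sqrt{\pi_{t,A_t}}\tilde\ell_{t,A_t} = \sqrt{\pi_{t,A_t}}\ell_{t,A_t} - \eta_t C_t + 13\eta_t\pi_{t,A_t}/2 \geq -\eta_t C_t$. Thus the maximum possible drop is bounded by $2\eta_t^2 C_t\,\pi_{t,i}^{3/2}/\sum_j\pi_{t,j}^{3/2}$. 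Applying the power-mean inequality $\sum_j\pi_{t,j}^{3/2}\geq K^{-1/2}$ together with the hypothesis $\eta_t C_t\sqrt{K} < 2$, one shows that the total decrement is small enough relative to $13\eta_t^2\pi_{t,i}$ that the invariant is preserved after combining with $\pi_{t,i} > C_t^2\eta_t^2$.

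The main obstacle is Case 2. In Case 1 the bias was engineered specifically so that, at the extreme point $\pi_{t,i} = C_t^2\eta_t^2$, the dangerous first-order drop is cancelled by the $-C_t/\sqrt{\pi_{t,i}}$ component while the $13\pi_{t,i}/2$ component contributes the multiplicative decay that matches $\eta_{t+1}^2 \leq \eta_t^2(1 - 13\eta_t^2)$. In Case 2 the bias acts only indirectly through $\lambda_{t,i}$, so the argument must simultaneously invoke the learning-rate bound $\eta_t < 2/(\sqrt{K}C_t)$ and the power-mean lower bound on $\sum_j\pi_{t,j}^{3/2}$; neither alone suffices. The careful choice of the constants $13/2$ in the bias and $13$ in the $\eta_t$-decay condition is exactly what makes these two cases compatible.
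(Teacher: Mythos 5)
Your proposal follows essentially the same route as the paper's proof: induction, a case split on $A_t=i$ versus $A_t\neq i$, the quadratic-in-$\sqrt{\pi_{t,i}}$ argument with the cancellation at the boundary $\sqrt{\pi_{t,i}}=C_t\eta_t$ producing the $(1-13\eta_t^2)$ factor, and in the second case the bound $\sqrt{\pi_{t,A_t}}\tilde\ell_{t,A_t}\geq-\eta_t C_t$ combined with $\sum_j\pi_{t,j}^{3/2}\geq K^{-1/2}$ and the learning-rate condition. The only place you stay at sketch level is the final step of Case 2; the paper makes it rigorous by noting the lower bound $\pi_{t,i}-2C_t\eta_t^2\sqrt{K}\,\pi_{t,i}^{3/2}$ is concave in $\pi_{t,i}$ and checking the two endpoints $\pi_{t,i}=C_t^2\eta_t^2$ and $\pi_{t,i}=1$, which is exactly how to close your "small enough" claim.
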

Further, we show the following bound on the stability-like term. The proof is similar to that of Lemma~\ref{lem: per step}, however, is slightly more technically involved and we defer it to the appendix.
\begin{lemma}
\label{lem: per step TS}
For any time $t$ such that $\pi_{t,i}>C_t^2\eta_t^2$, it holds
\begin{align*}
 \ip{\pi_t-u,\ell_t}+\E_t\left[\frac{D_{TS}(u,\pi_{t+1})}{\eta_t}\right]-\frac{D_{TS}(u,\pi_{t})}{\eta_t} \leq \sum_{i=1}^K\left(2\eta_t\sqrt{\pi_{t,i}}(1-\pi_{t,i})-\left(\frac{1}{\eta_t}-\frac{1}{\eta_{t-1}}\right)\frac{u_i - \pi_{t,i}}{\sqrt{\pi_{t,i}}}\right)\,.
\end{align*}
\end{lemma}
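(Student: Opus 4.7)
The plan is to follow the proof of Lemma~\ref{lem: per step} closely, adapted to the $1/2$-Tsallis geometry and augmented to handle the time-varying learning rate. Using $\nabla F(\pi)_i = -\pi_i^{-1/2}$, the Bregman divergence simplifies to $D_{TS}(u,\pi) = \sum_i\sqrt{\pi_i} + \sum_i u_i/\sqrt{\pi_i} - 2\sum_i\sqrt{u_i}$. Setting $x_{t,i} = \frac{2\eta_t(\tilde\ell_{t,i} - \lambda_{t,i})}{\sqrt{\pi_{t,i}}}$ so that the update~\eqref{eq: update TS} reads $\pi_{t+1,i} = \pi_{t,i}(1 - x_{t,i})$, the telescope becomes
\[
D_{TS}(u,\pi_{t+1}) - D_{TS}(u,\pi_t) = \sum_i\sqrt{\pi_{t,i}}\left(\sqrt{1 - x_{t,i}} - 1\right) + \sum_i \frac{u_i}{\sqrt{\pi_{t,i}}}\left(\frac{1}{\sqrt{1 - x_{t,i}}} - 1\right).
\]
The scalar inequalities $\sqrt{1-x} - 1 \leq -x/2$ and $\frac{1}{\sqrt{1-x}} - 1 \leq x/2 + C x^2$, valid once $|x|$ is strictly bounded below $1$ (which the lower bound $\pi_{t,i} > C_t^2\eta_t^2$ from Lemma~\ref{lem: ts lower prob} ensures), split the right-hand side into a linear and a quadratic piece.

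For the linear piece, the two sums combine into $\eta_t\sum_i\left(u_i/\pi_{t,i} - 1\right)(\tilde\ell_{t,i} - \lambda_{t,i})$, exactly the structural object that appeared in LB-Prod's analysis. Taking $\E_t[\cdot]$ and using $\sum_i u_i = \sum_i \pi_{t,i} = 1$ collapses the normalising factor in $\lambda_{t,i}$, leaving $\sum_i \frac{u_i - \pi_{t,i}}{\sqrt{\pi_{t,i}}}\left[\sqrt{\pi_{t,i}}\ell_{t,i} - \eta_t(C_t - \tfrac{13}{2}\pi_{t,i})\right]$. The $\sqrt{\pi_{t,i}}\ell_{t,i}$ part cancels $\ip{\pi_t - u, \ell_t}$ on the LHS, and the identity $\eta_t C_t = \frac{13\eta_t}{2} + (\eta_t^{-1} - \eta_{t-1}^{-1})$ isolates precisely the desired stabilisation term $-(\eta_t^{-1} - \eta_{t-1}^{-1})\frac{u_i - \pi_{t,i}}{\sqrt{\pi_{t,i}}}$ on the RHS, leaving a residual proportional to $-\frac{13\eta_t}{2}(u_i - \pi_{t,i})\frac{1 - \pi_{t,i}}{\sqrt{\pi_{t,i}}}$.

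What remains is to combine this residual with the quadratic remainder $\frac{C}{\eta_t}\E_t\left[\sum_i \frac{u_i}{\sqrt{\pi_{t,i}}}x_{t,i}^2\right]$ and show that the total is at most $\sum_i 2\eta_t\sqrt{\pi_{t,i}}(1 - \pi_{t,i})$. I expand $\E_t[x_{t,i}^2]$ by decomposing $\tilde\ell_{t,i} - \lambda_{t,i}$ into its $i = A_t$ and $i \neq A_t$ pieces and use the uniform bound on the biased loss that follows from $\pi_{t,i} > C_t^2\eta_t^2$. The main obstacle lies here: the overcorrection constant $\frac{13}{2}$ in $C_t$ is tuned precisely so that the bias simultaneously absorbs the second-order Taylor remainder of $1/\sqrt{1-x}$ and the asymmetric variance of $\lambda_{t,i}$ (which depends only on the played arm $A_t$, unlike the fully symmetric $\lambda_t$ of LB-Prod), while still fitting inside the clean $2\eta_t\sqrt{\pi_{t,i}}(1-\pi_{t,i})$ penalty. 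Careful constant bookkeeping through these last moves is the most delicate part of the argument.
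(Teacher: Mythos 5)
Your proposal follows essentially the same route as the paper's proof: the same expansion of the Tsallis Bregman difference into the $\sqrt{1-x}$ and $1/\sqrt{1-x}$ terms, the same scalar inequalities splitting linear and quadratic pieces, and the same use of the identity $\eta_t(C_t-\tfrac{13}{2})=\eta_t^{-1}-\eta_{t-1}^{-1}$ to extract the stabilisation term and leave a $\tfrac{13}{2}\eta_t(1-\pi_{t,i})$ residual. The "constant bookkeeping" you defer at the end is precisely the paper's Lemma~\ref{lem:ts_moments}, i.e.\ the second-moment bound $\E_t[(\tilde\ell_{t,i}-\lambda_{t,i})^2]\leq\tfrac{13}{8}\pi_{t,i}(1-\pi_{t,i})$ (itself resting on two technical lemmas), which combined with the residual yields the stated penalty.
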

We are ready to prove the main regret guarantee.
\begin{proof}{\bf of Theorem~\ref{thm: ts}}
We first show that the requirements of Lemma~\ref{lem: ts lower prob} are satisfied.
With $\eta_t = \frac{1}{\sqrt{K + 26t}}$, we have
$C_t = \frac{13}{2}+\left(K+26t-\sqrt{(K+26t)(K+26t-26)}\right)>2\,,$
which is monotonically decreasing.
$C_t>2$ ensures that $\eta_t=\frac{1}{\sqrt{K+26t}}<\frac{2}{\sqrt{KC_t^2}}$. Further we have
\begin{align*}
\frac{\eta_{t+1}^2}{\eta_t^2}=\frac{K+26t}{K+26(t+1)}= 1-\frac{26}{K+26(t+1)}\leq 1-\frac{4}{K+26t}=1-4\eta_t^2\,.
\end{align*}
With Lemma~\ref{lem: ts lower prob}, we can use Lemma~\ref{lem: per step TS} at all time-steps.
\begin{align*}
    \E\left[\sum_{t=1}^T\ip{\pi_t-u,\ell_t}\right]&\leq \E\Bigg[\sum_{t=1}^T\frac{D_{TS}(u,\pi_t)-D_{TS}(u,\pi_{t+1})}{\eta_t}\\
    &+\sum_{i=1}^K\left(2\eta_t\sqrt{\pi_{t,i}}(1-\pi_{t,i})-\left(\frac{1}{\eta_t}-\frac{1}{\eta_{t-1}}\right)\frac{\pi_{t,i}-u_i}{\sqrt{\pi_{t,i}}}\right)\Bigg]\\
    &\leq \E\Bigg[\sqrt{K + 26}D_{TS}(u,\pi_1) + \sum_{t=2}^T \left(\frac{1}{\eta_t} - \frac{1}{\eta_{t-1}}\right)D_{TS}(u,\pi_t)\\
    &+\sum_{t=1}^T\sum_{i=1}^K\left(2\eta_t\sqrt{\pi_{t,i}}(1-\pi_{t,i})-\left(\frac{1}{\eta_t}-\frac{1}{\eta_{t-1}}\right)\frac{\pi_{t,i}-u_i}{\sqrt{\pi_{t,i}}}\right)\Bigg]\\
    &\leq (K+9) + \sum_{t=2}^T \E\left[\left(\frac{1}{\eta_t}-\frac{1}{\eta_{t-1}}\right)\left(\sum_{i=1}^K 2(\sqrt{\pi_{t,i}} - \sqrt{u_i}) + \frac{u_i - \pi_{t,i}}{\sqrt{\pi_{t,i}}}\right)\right]\\
    &+\sum_{t=1}^T\sum_{i=1}^K\E\left[\left(2\eta_t\sqrt{\pi_{t,i}}(1-\pi_{t,i})-\left(\frac{1}{\eta_t}-\frac{1}{\eta_{t-1}}\right)\frac{u_i - \pi_{t,i}}{\sqrt{\pi_{t,i}}}\right)\right]\\
    &\leq 2K+26 + \sum_{t=1}^T\sum_{i=1}^K \E[2\eta_t\sqrt{\pi_{t,i}}(1-\pi_{t,i})]\\
    &+ \sum_{t=2}^T\sum_{i=1}^K\E\left[\left(\frac{1}{\eta_t}-\frac{1}{\eta_{t-1}}\right)\sum_{i=1}^K 2(\sqrt{\pi_{t,i}} - \sqrt{u_i})\right]\\
    &\leq 2K+26+\sum_{t=1}^T\left(\sum_{i\neq i^*}2\eta_t\E[\sqrt{\pi_{t,i}}] + \E\left[\sqrt{\pi_{t,i^*}}2\eta_t\sum_{i\neq i^*}\pi_{t,i}\right]\right)\\
    &+\sum_{t=2}^T\left(\frac{1}{\eta_t} - \frac{1}{\eta_{t-1}}\right)\sum_{i\neq i^*}2\E[\sqrt{\pi_{t,i}}]\\
    &\leq 2K + 26\sum_{t=1}^T \frac{4}{\sqrt{K+t}}\sum_{i\neq i^*}\left(\E[\sqrt{\pi_{t,i}}] + \frac{1}{2}\E[\pi_{t,i}]\right).
\end{align*}

The remainder of the proof follows standard arguments using the self-bounding trick as in \citet{zimmert2021tsallis}. For details on the self-bounding trick see Section~\ref{sec:self-bounding}.
\end{proof}

\subsection{Reduction to $1/2$-Tsallis OMD}
In this section we discuss a second approach to deriving Prod algorithms with provable regret guarantees, which uses the fact that the Prod update is the linear approximation for the respective OMD update. We focus on the $1/2$-Tsallis entropy regularizer. 
In particular, we show an instantiation of the $1/2$-TS Prod update which is equivalent to a perturbed mirror descent update. This allows for a regret analysis which follows the standard stability and penalty decomposition for with the following challenge. We need to establish the instance dependent optimality of OMD in the stochastic setting while ensuring that the adversarial setting guarantees are preserved. The stochastic setting suggests a decreasing step-size as $\eta_t = \Theta(1/\sqrt{t})$, however, the vanilla OMD update can suffer linear regret with such a step-size schedule and thus a more careful approach needs to be taken.
In summary we establish the optimality of OMD in both the stochastic and adversarial settings, which to the best of our knowledge is novel.

Our starting point is the \emph{stabilized} OMD update proposed by \cite{fang2022online} to establish regret bounds with decreasing step size.
Stabilization is the process of mixing the gradient mapping, $\nabla F(\pi_t)$, of the current iterate with the gradient mapping of the first iterate, $\nabla F(\pi_1)$, in the mirror descent update in the dual space.
This mixing turns out to be equivalent to the negative biasing of losses introduced in TS-Prod and the modification of \ref{eq:wsu-ux} in Section~\ref{sec:kl-prod}. We now define the OMD update which is equivalent to a simpler version of the TS-Prod update (Algorithm~\ref{alg: ts-prod}). Let
\begin{equation}
\label{eq:tsmd_perturbed}
    \begin{aligned}
    \hat\ell_{t,i} &= \frac{\ell_{t,i}\mathbf{1}(I_t=i)}{\pi_{t,i} + \gamma_t} - \frac{1 - \xi_t}{\eta_{t+1}\sqrt{\pi_{t,i}}},\,\,
    \hat L_{t,i} = \hat \ell_{t,i} - \sum_{i'}\frac{\sqrt{\pi_{t,i'}}\hat\ell_{t,i'}}{\sum_{j}\pi_{t,j}^{3/2}}\\
    \pi_{t+1,i} &= \frac{\pi_{t,i}}{(1 + \eta_{t+1}\sqrt{\pi_{t,i}}(\hat L_{t,i} + \epsilon_{t,i}))^2},
    \end{aligned}
\end{equation}
where $\gamma_t = \Theta(1/t)$ is a term which keeps the variance and magnitude of $\hat\ell_{t,i}$ bounded and $\epsilon_{t,i}$ are perturbations which ensure
    $\frac{1}{(1 + \eta_t\sqrt{\pi_{t,i}}(\hat L_{t,i} + \epsilon_{t,i}))^2} = 1 - 2\eta_t\sqrt{\pi_{t,i}}\hat L_{t,i}$, 
that is the OMD update in Equation~\ref{eq:tsmd_perturbed} is equivalent to the simplified TS-Prod update $\pi_{t+1,i} = \pi_{t,i}(1-2\eta_t\sqrt{\pi_{t,i}}\hat L_{t,i})$. 
The perturbations, $\epsilon_{t,i}$, are well controlled as we show in the following lemma.
\begin{lemma}
\label{lem:perturbation_bound}
For every $t\in[T],i\in[K]$, there exists $\epsilon_{t,i}$ such that
    $\frac{1}{(1 + \eta_t\sqrt{\pi_{t,i}}(\hat L_{t,i} + \epsilon_{t,i}))^2} = 1 - 2\eta_t\sqrt{\pi_{t,i}}\hat L_{t,i},$
with $|\epsilon_{t,i}| = O(|\eta_t\sqrt{\pi_{t,i}}\hat L_{t,i}|)$.
\end{lemma}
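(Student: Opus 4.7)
My plan is to solve the defining equation for $\epsilon_{t,i}$ explicitly and then bound the magnitude of the solution via a second-order Taylor expansion, with the main delicate point being the verification of the existence condition.

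To set up, introduce the shorthand $a := \eta_t\sqrt{\pi_{t,i}}\hat L_{t,i}$ and $b := \eta_t\sqrt{\pi_{t,i}}\epsilon_{t,i}$, so the defining equation becomes $(1+a+b)^2 = (1-2a)^{-1}$. Provided $1-2a > 0$, taking the positive square root (the branch that keeps the OMD multiplicative factor positive, consistent with $\pi_{t+1,i}>0$) yields the explicit solution
\[
b = (1-2a)^{-1/2} - 1 - a, \qquad \epsilon_{t,i} \;=\; \frac{(1-2a)^{-1/2}-1-a}{\eta_t\sqrt{\pi_{t,i}}}.
\]
This immediately settles existence once the condition $a<1/2$ is checked. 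I would then bound $|b|$ via Taylor's theorem applied to $g(a):=(1-2a)^{-1/2}-1-a$: since $g(0)=0$, $g'(0)=0$, and $g''(a)=3(1-2a)^{-5/2}$, Lagrange's remainder gives $|g(a)| \le \tfrac{3}{2}(1-2c)^{-5/2}a^2$ for all $|a|\le c<\tfrac{1}{2}$. Dividing by $\eta_t\sqrt{\pi_{t,i}}$ yields
\[
|\epsilon_{t,i}| \;\le\; C_c\cdot\frac{a^2}{\eta_t\sqrt{\pi_{t,i}}} \;=\; C_c\cdot \eta_t\sqrt{\pi_{t,i}}\,\hat L_{t,i}^2 \;=\; C_c\cdot |a|\cdot |\hat L_{t,i}|,
\]
which is $O(|\eta_t\sqrt{\pi_{t,i}}\hat L_{t,i}|)$ as soon as $|\hat L_{t,i}|$ is bounded by a constant.

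The main obstacle is thus establishing the two side conditions $|a|\le c<\tfrac{1}{2}$ (needed for both existence and the Taylor remainder) and $|\hat L_{t,i}|=O(1)$ (needed to collapse $\hat L_{t,i}^2$ into $|\hat L_{t,i}|$). Both are consequences of the construction in \eqref{eq:tsmd_perturbed}: the truncation $\gamma_t=\Theta(1/t)$ controls the importance-weighted term $\ell_{t,i}\mathbf{1}(I_t=i)/(\pi_{t,i}+\gamma_t)$, while the stabilization piece $(1-\xi_t)/(\eta_{t+1}\sqrt{\pi_{t,i}})$ is controlled via the lower bound $\pi_{t,i}\ge C_t^2\eta_t^2$ from Lemma~\ref{lem: ts lower prob}; combining these yields $|\eta_t\sqrt{\pi_{t,i}}\hat\ell_{t,i}|=O(1)$, and because $\hat L_{t,i}$ differs from $\hat\ell_{t,i}$ only by subtracting a weighted average with nonnegative weights summing to at most a constant, $|\eta_t\sqrt{\pi_{t,i}}\hat L_{t,i}|$ inherits the same order. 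Taking the initial step size $\eta_1$ small enough (or, equivalently, starting the schedule from a sufficiently large warmup index) pushes $|a|$ below any chosen $c<\tfrac{1}{2}$ uniformly in $t$ and $i$, which closes the argument and gives the claimed bound.
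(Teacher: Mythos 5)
Your route is genuinely different from the paper's. The paper establishes existence of $\epsilon_{t,i}$ by an intermediate-value argument (comparing the two sides of the defining identity at $\epsilon=0$ and at $\epsilon=-\hat L_{t,i}$) and then manipulates the identity algebraically; you instead solve in closed form, $b=(1-2a)^{-1/2}-1-a$ with $a=\eta_t\sqrt{\pi_{t,i}}\hat L_{t,i}$, and Taylor-expand. Your computation is correct and cleaner, and it yields the sharp order $|\epsilon_{t,i}|=\Theta\bigl(a^2/(\eta_t\sqrt{\pi_{t,i}})\bigr)=\Theta\bigl(\eta_t\sqrt{\pi_{t,i}}\,\hat L_{t,i}^2\bigr)$, i.e.\ \emph{second} order in $\hat L_{t,i}$. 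This is in fact what the paper's own chain of identities should produce: the paper's displayed manipulation drops a factor of $\tilde\ell$ from the numerator when dividing through by $\eta\sqrt{\pi}$ (the correct final line is $2\epsilon=(3\eta\sqrt{\pi}\tilde\ell^{\,2}+2\eta^2\pi\tilde\ell^{\,3})/(1+\eta\sqrt{\pi}\tilde\ell)^2$), and that dropped factor is exactly the discrepancy between the paper's stated first-order bound and your second-order one.

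The gap is in your final conversion. From $|\epsilon_{t,i}|\le C\,|a|\,|\hat L_{t,i}|$ you conclude $O(|a|)$ ``as soon as $|\hat L_{t,i}|$ is bounded by a constant,'' but the side condition you then actually verify (and all that the construction in Equation~\ref{eq:tsmd_perturbed} can deliver) is $|\eta_t\sqrt{\pi_{t,i}}\hat L_{t,i}|=O(1)$ --- a much weaker statement. The importance-weighted term $\ell_{t,i}\mathbf{1}(I_t=i)/(\pi_{t,i}+\gamma_t)$ is bounded only by $1/\gamma_t=\Theta(t/\sqrt{K})$, so $|\hat L_{t,i}|$ itself is not $O(1)$, and with only $|a|\le c$ your estimate gives $|\epsilon_{t,i}|=O(|\hat L_{t,i}|)$ rather than $O(\eta_t\sqrt{\pi_{t,i}}|\hat L_{t,i}|)$. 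To close the argument you should either state the bound you actually proved, $|\epsilon_{t,i}|=O(\eta_t\sqrt{\pi_{t,i}}\hat L_{t,i}^2)$, and verify that this second-order form suffices for the downstream moment estimates in Equation~\ref{eq:perturbation_moments}, or make explicit that the first-order form of the lemma requires the additional hypothesis $|\hat L_{t,i}|=O(1)$, which does not hold for this estimator.
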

Lemma~\ref{lem:perturbation_bound} allows us to further bound the first and second moments of $\epsilon_{t,i}$ and proceed with the analysis for the stochastic and adversarial cases by using the standard regret decomposition into a \emph{penalty} and \emph{stability} terms. In the stochastic case we can bound the two terms in the following way
\begin{lemma}
\label{lem:omd_penalty}
For stochastic losses the penalty term is bounded in expectation by
\begin{align*}
    O\Bigg(\frac{\E\left[\left(\sum_{i\neq i^*}\pi_{t+1,i}\right)^2\right]\sqrt{K}\log(t)}{\sqrt{t}} + \frac{1}{\sqrt{t}}\land\frac{\E\left[\left(\sum_{i\neq i^*}\pi_{t+1,i}\right)^2\right]\log(KT)}{\sqrt{t}}\Bigg).
\end{align*}
\end{lemma}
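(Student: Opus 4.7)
The plan is to bound the per-step penalty that arises in the regret decomposition of the stabilized $1/2$-Tsallis OMD update \eqref{eq:tsmd_perturbed} under stochastic losses with comparator $u = e_{i^\star}$. I would start from the standard adaptive-OMD penalty–stability decomposition, producing a penalty of the form $(1/\eta_{t+1} - 1/\eta_t)D_{TS}(u, \pi_{t+1})$ together with corrections from the perturbation $\epsilon_{t,i}$ supplied by Lemma~\ref{lem:perturbation_bound}. With $\eta_t = \Theta(1/\sqrt{t})$, the multiplicative factor $1/\eta_{t+1} - 1/\eta_t = \Theta(1/\sqrt{t})$ explains the ubiquitous $1/\sqrt{t}$ present in every term of the stated bound.

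For the Bregman-divergence part, a direct computation gives $D_{TS}(e_{i^\star}, \pi_{t+1}) = (\sqrt{\pi_{t+1,i^\star}} + 1/\sqrt{\pi_{t+1,i^\star}} - 2) + \sum_{i\neq i^\star}\sqrt{\pi_{t+1,i}}$, and the algebraic identity $\sqrt{x} + 1/\sqrt{x} - 2 = (1-\sqrt{x})^2/\sqrt{x} \leq (1-x)^2/\sqrt{x}$ applied at $x = \pi_{t+1,i^\star}$ yields $D_{TS}(e_{i^\star}, \pi_{t+1}) \leq (\sum_{i\neq i^\star}\pi_{t+1,i})^2/\sqrt{\pi_{t+1,i^\star}} + \sum_{i\neq i^\star}\sqrt{\pi_{t+1,i}}$. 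The OMD analog of Lemma~\ref{lem: ts lower prob} (available here through the $\gamma_t$-clipping) pins $\pi_{t+1,i^\star}$ away from zero, so that $1/\sqrt{\pi_{t+1,i^\star}}$ is controlled by $O(\log(KT))$; this produces the $\E[(\sum_{i\neq i^\star}\pi_{t+1,i})^2]\log(KT)/\sqrt{t}$ branch of the $\wedge$. The complementary $O(1/\sqrt{t})$ branch follows from the uniform bound $D_{TS}(e_{i^\star}, \pi_{t+1}) \leq O(\sqrt{K})$, and taking the pointwise minimum of the two bounds gives the stated $\wedge$ form.

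The $\sqrt{K}\log(t)\E[(\sum_{i\neq i^\star}\pi_{t+1,i})^2]/\sqrt{t}$ summand comes from the residual $\sum_{i\neq i^\star}\sqrt{\pi_{t+1,i}}$ piece of the divergence together with the perturbation correction. Using Lemma~\ref{lem:perturbation_bound} to write $|\epsilon_{t,i}| = O(|\eta_{t+1}\sqrt{\pi_{t,i}}\hat L_{t,i}|)$, the per-coordinate perturbation penalty is of order $\eta_{t+1}^2 \pi_{t,i}\E[\hat L_{t,i}^2]$, and the clipped estimator $\hat\ell_{t,i} = \ell_{t,i}\mathbb{I}(I_t = i)/(\pi_{t,i}+\gamma_t)$ with $\gamma_t = \Theta(1/t)$ has second moment bounded by $O(1/(\pi_{t,i}+\gamma_t))$, which contributes the $\log(t)$ factor after the standard variance aggregation. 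Combining this with the Cauchy–Schwarz estimate $\sum_{i\neq i^\star}\sqrt{\pi_{t+1,i}} \leq \sqrt{K}\sqrt{\sum_{i\neq i^\star}\pi_{t+1,i}}$ and a self-bounding step that exchanges $\sqrt{\sum_{i\neq i^\star}\pi_{t+1,i}}$ for a quadratic $(\sum_{i\neq i^\star}\pi_{t+1,i})^2$ (admissible once the suboptimal mass is small) produces the claimed first term.

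The main obstacle will be this self-bounding-style inflation from a sublinear to a quadratic dependence on $\sum_{i\neq i^\star}\pi_{t+1,i}$, which is only valid in the regime where the suboptimal mass is already controlled; careful bookkeeping is needed to identify this regime and to verify that the trivial $1/\sqrt{t}$ branch dominates outside of it. A secondary difficulty is ensuring that the probability lower bound for $\pi_{t+1,i^\star}$ is sharp enough that $1/\sqrt{\pi_{t+1,i^\star}}$ introduces only a $\log(KT)$ factor rather than a polynomial $K$ factor, and that the same analysis remains compatible with the adversarial regret argument needed for Theorem~\ref{thm: ts}.
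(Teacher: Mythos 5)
Your skeleton matches the paper's: same comparator $u=e_{i^\star}$, the same identity $\sqrt{x}+1/\sqrt{x}-2=(1-\sqrt{x})^2/\sqrt{x}$, and the same step $(1-\sqrt{\pi_{t+1,i^\star}})^2\le\bigl(\sum_{i\neq i^\star}\pi_{t+1,i}\bigr)^2$. But the mechanism you propose for controlling $1/\sqrt{\pi_{t+1,i^\star}}$ is where the argument breaks. You claim the $\gamma_t$-clipping pins $\pi_{t+1,i^\star}$ away from zero sharply enough that $1/\sqrt{\pi_{t+1,i^\star}}=O(\log(KT))$. No probability lower bound of the kind available here can deliver that: clipping at level $\gamma_t=\Theta(1/t)$ (or the $C_t^2\eta_t^2=\Theta(1/t)$ floor of Lemma~\ref{lem: ts lower prob}) only yields $1/\sqrt{\pi_{t+1,i^\star}}=O(\sqrt{t})$, which multiplied by the $\Theta(1/\sqrt{t})$ learning-rate difference gives an $O(1)$ per-round penalty and hence linear regret. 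The paper avoids this entirely by using the closed form of the update, $1/\sqrt{\pi_{t+1,i^\star}}=\sqrt{K}+\sum_{s=1}^t\eta_s(\hat L_{s,i^\star}+\epsilon_{s,i^\star})$: the loss part vanishes in expectation because WLOG $\E[\ell_{t,i^\star}]=0$ in the stochastic regime, the accumulated normalization terms $\sum_{s\le t}\frac{1}{s}\cdot\frac{1}{\sum_{i'}\pi_{s,i'}^{3/2}}$ contribute the $\sqrt{K}\log(t)$ factor (using $1/\sum_{i'}\pi_{s,i'}^{3/2}\le\sqrt{K}$), and the perturbation sum $\sum_{s\le t}\eta_s\epsilon_{s,i^\star}$ is controlled by Freedman's inequality at level $O(\log(KT))$ — that concentration step, not a probability floor, is the source of the $\log(KT)$ branch. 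The $1/\sqrt{t}$ branch of the $\wedge$ likewise comes from bounding $(\sum_{i\neq i^\star}\pi_{t+1,i})^2\le1$ and taking expectations of $\sum_s\eta_s(\hat L_{s,i^\star}+\epsilon_{s,i^\star})$ directly, not from a uniform bound $D_{TS}\le O(\sqrt{K})$ (which would only give $\sqrt{K}/\sqrt{t}$).

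A second, independent flaw: your "self-bounding step that exchanges $\sqrt{\sum_{i\neq i^\star}\pi_{t+1,i}}$ for $(\sum_{i\neq i^\star}\pi_{t+1,i})^2$" is an inequality in the wrong direction, since $\sqrt{x}\ge x^2$ on $[0,1]$; you cannot upper-bound the former by the latter in any regime. In the paper the residual $\sum_{i\neq i^\star}\sqrt{\pi_{t+1,i}}$ piece of the divergence is not folded into this lemma's bound at all — it is carried forward and absorbed by the self-bounding argument in the final regret proof, where $\sqrt{\pi_{t,i}}/\sqrt{t}$ terms are handled natively.
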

\begin{lemma}
\label{lem:omd_stability}
For stochastic losses the stability term is bounded by
\begin{align*}
    O\left(\frac{1}{\sqrt{t}}\sum_{i=1}^K \sqrt{\pi_{t,i}}(1-\pi_{t,i}) + \frac{K\sqrt{\pi_{t,i}}}{t^2} + \frac{K}{t}\right).
\end{align*}
\end{lemma}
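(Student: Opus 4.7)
The plan is to follow the standard stability bound from OMD analysis with the $1/2$-Tsallis entropy regularizer, reduce the perturbed update to an unperturbed one via Lemma~\ref{lem:perturbation_bound}, and then carefully compute the expected squared magnitude of the centered loss estimator to extract the $(1-\pi_{t,i})$ saving that is characteristic of Tsallis-$1/2$ analyses.

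For $F(\pi)=-2\sum_i\sqrt{\pi_i}$, the inverse Hessian is $\operatorname{diag}(2\pi_i^{3/2})$, so a Taylor expansion of the Bregman divergence (valid as long as the iterates stay sufficiently inside the simplex, which is guaranteed by the same kind of lower bound on $\pi_{t,i}$ as in Lemma~\ref{lem: ts lower prob}) yields
\begin{align*}
\langle \pi_t-\pi_{t+1},\hat L_t+\epsilon_t\rangle - \frac{D_{TS}(\pi_{t+1},\pi_t)}{\eta_t} \le C\,\eta_t\sum_{i=1}^K \pi_{t,i}^{3/2}(\hat L_{t,i}+\epsilon_{t,i})^2
\end{align*}
for a universal constant $C$. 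Applying Lemma~\ref{lem:perturbation_bound} then gives $(\hat L_{t,i}+\epsilon_{t,i})^2 \le 3\hat L_{t,i}^2 + O(\eta_t^2\pi_{t,i}\hat L_{t,i}^2)$; the lower-order correction, after taking expectation and summing, is exactly what produces the $K\sqrt{\pi_{t,i}}/t^2$ term.

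The next step is to compute $\E[\pi_{t,i}^{3/2}\hat L_{t,i}^2]$. Writing $\hat L_{t,i}$ as the sum of the importance-weighted piece $\frac{\ell_{t,i}\mathbf{1}(I_t=i)}{\pi_{t,i}+\gamma_t}$, the stabilization piece $-\frac{1-\xi_t}{\eta_{t+1}\sqrt{\pi_{t,i}}}$, and the arm-independent centering $\alpha_t$, one computes each contribution separately and bounds cross terms by AM-GM. The importance-weighted piece alone contributes, in expectation, $\pi_{t,i}^{5/2}\ell_{t,i}^2/(\pi_{t,i}+\gamma_t)^2 \le \sqrt{\pi_{t,i}}\ell_{t,i}^2$, which combined with $\eta_t=\Theta(1/\sqrt{t})$ yields the leading $\frac{1}{\sqrt{t}}\sqrt{\pi_{t,i}}$ scaling, while the stabilization piece contributes $O(K/t)$ after summing over arms.

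The main obstacle is extracting the $(1-\pi_{t,i})$ factor, which is essential for the logarithmic stochastic regret and does not follow from the local-norm bound alone. The saving comes from the algebraic structure of the centering $\alpha_t$, which is chosen precisely so that $\sum_i\pi_{t,i}^{3/2}\hat L_{t,i}=0$ (equivalently, so that the multiplicative update preserves the simplex). Expanding the quadratic while invoking this projection identity, the diagonal contribution $\pi_{t,i}^{3/2}\hat\ell_{t,i}^2$ picks up a $(1-\pi_{t,i})$ factor in the same way as in the standard Tsallis-$1/2$ self-bounding analysis, giving the main term $\frac{1}{\sqrt{t}}\sum_i\sqrt{\pi_{t,i}}(1-\pi_{t,i})$. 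Combining this with the stabilization and perturbation contributions yields the claimed bound.
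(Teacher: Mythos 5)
Your overall skeleton matches the paper's: expand the Bregman divergence of the multiplicative update to a local-norm/quadratic bound of the form $\eta_t\sum_i\pi_{t,i}^{3/2}(\cdot)^2$ (the paper does this via $\tfrac{1}{1+x}\le 1-x+2x^2$ for $x\ge-\tfrac12$, checking that condition explicitly for the $\epsilon_{t,i}$ and $\lambda_t$ pieces), absorb the perturbation via Lemma~\ref{lem:perturbation_bound}, and then compute second moments. The lower-order bookkeeping is slightly off but harmless: in the paper the $K\sqrt{\pi_{t,i}}/t^2$ term comes from the $\gamma_t^2$ bias of the implicit-exploration estimator, while the $\epsilon_{t,i}^2$ contribution yields the $K/t$ term.

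The genuine gap is in the step you yourself flag as the main obstacle. You attribute the $(1-\pi_{t,i})$ factor to the normalization identity $\sum_i\pi_{t,i}^{3/2}\hat L_{t,i}=0$ and assert that ``expanding the quadratic while invoking this projection identity'' produces the saving; but that identity does not, by itself, reduce the diagonal variance, and you never carry out the computation. What actually works (and what the paper does) is to exploit the shift-invariance of the normalized update and re-center by the \emph{observed scalar loss} $\lambda_t=\ell_{t,A_t}$, i.e.\ bound $\E\bigl[\pi_{t,i}^{3/2}\bigl(\tfrac{\ell_{t,i}\mathbb{I}(A_t=i)}{\pi_{t,i}+\gamma_t}-\ell_{t,A_t}\bigr)^2\bigr]$ by splitting on the realized arm: the event $A_t=i$ contributes $\pi_{t,i}^{5/2}\,\ell_{t,i}^2(1-\pi_{t,i}-\gamma_t)^2/(\pi_{t,i}+\gamma_t)^2$, whose $(1-\pi_{t,i}-\gamma_t)^2$ numerator is the source of the saving, and the events $A_t=j\neq i$ contribute $\sum_{j\neq i}\pi_{t,j}\pi_{t,i}^{3/2}\ell_{t,j}^2\le\pi_{t,i}^{3/2}(1-\pi_{t,i})$. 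This is the Zimmert--Seldin observed-loss shift, not a consequence of the simplex-preserving centering; without it (or an equivalent explicit variance computation), your argument does not deliver the $\sqrt{\pi_{t,i}}(1-\pi_{t,i})$ leading term, which is exactly what the self-bounding argument for logarithmic stochastic regret requires.
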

The stochastic regret bound proof can now be completed by a careful self-bounding argument.

In the adversarial case we reduce the regret bound to that of \citet{fang2022online} in the following way. Let $\Phi = F + I_{\Delta^{K-1}}$ be potential defined by mixing the $1/2$-Tsallis potential together with the indicator function for the probability simplex. The update of Algorithm 2 (Dual Stabilized OMD) can then be written as
\begin{align*}
    \hat w_{t+1} &= \nabla \Phi(\pi_t) - \eta_t (\tilde \ell_t + \epsilon_t),\\
    \hat y_{t+1} &= \xi_t \hat w_{t+1} + (1-\xi_t) \nabla\Phi(\pi_1),\\
    \pi_{t+1} &= \nabla \Phi^*(\hat y_{t+1}),
\end{align*}
where $\xi_t = \frac{\eta_{t+1}}{\eta_t}$. It turns out that this update is equivalent to the OMD update with respect to $\hat L_{t,i}$ in Equation~\ref{eq:tsmd_perturbed}. This allows us to use the regret bound in Theorem 3~\citep{fang2022online}. Overall the regret of the perturbed OMD version is bounded as follows.
\begin{theorem}
\label{thm:tsmd_perturbed}
The regret of the algorithm defined by the update in Equation~\ref{eq:tsmd_perturbed} is bounded by
\begin{align*}
    O\left(\sum_{i\neq i^*} \frac{\log(T)}{\Delta_i} + \frac{K\log^2(1/\Delta_{min})}{\Delta_{min}} + K^{3/2}\right)
\end{align*}
in the stochastic case, where $\Delta_{min}$ is the smallest gap between the expected losses. Further the regret in the adversarial setting is bounded by $O(\sqrt{KT})$.
\end{theorem}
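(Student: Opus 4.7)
The plan is to handle the two regimes of Theorem~\ref{thm:tsmd_perturbed} separately: the adversarial bound follows by reducing the update in Equation~\ref{eq:tsmd_perturbed} to the Dual Stabilized OMD of \citet{fang2022online}, while the stochastic bound requires the standard penalty–stability decomposition combined with a careful self-bounding argument on top of Lemmas~\ref{lem:omd_penalty} and \ref{lem:omd_stability}.

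For the adversarial case, I would first verify that the update in Equation~\ref{eq:tsmd_perturbed} is exactly the three-step update $\hat w_{t+1} = \nabla\Phi(\pi_t) - \eta_t(\tilde\ell_t+\epsilon_t)$, $\hat y_{t+1} = \xi_t\hat w_{t+1}+(1-\xi_t)\nabla\Phi(\pi_1)$, $\pi_{t+1}=\nabla\Phi^*(\hat y_{t+1})$ described just before the theorem statement, with $\xi_t=\eta_{t+1}/\eta_t$ and $\Phi$ the $1/2$-Tsallis potential plus the simplex indicator. The identification uses $\nabla\Phi(\pi)_i = -1/\sqrt{\pi_i}$ up to a normalization that absorbs into the FTRL baseline: the stabilization term $(1-\xi_t)\nabla\Phi(\pi_1)$ exactly matches the additive shift $-(1-\xi_t)/(\eta_{t+1}\sqrt{\pi_{t,i}})$ inside $\hat\ell_{t,i}$, and Lemma~\ref{lem:perturbation_bound} provides the perturbation $\epsilon_t$ that turns the quadratic iterate into the linear one $\pi_{t+1,i}=\pi_{t,i}(1-2\eta_t\sqrt{\pi_{t,i}}\hat L_{t,i})$. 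Once this equivalence is established, Theorem~3 of \citet{fang2022online} instantiated with $\eta_t=\Theta(1/\sqrt{t})$ gives the $O(\sqrt{KT})$ adversarial bound, after controlling the contribution of $\epsilon_t$ using the magnitude estimate $|\epsilon_{t,i}|=O(|\eta_t\sqrt{\pi_{t,i}}\hat L_{t,i}|)$.

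For the stochastic case, I would apply Lemmas~\ref{lem:omd_penalty} and \ref{lem:omd_stability} to the per-round OMD regret decomposition, writing $p_t := \sum_{i\neq i^*}\pi_{t+1,i}$, which yields in expectation
$$R_T = O\!\left(\sum_{t=1}^T \frac{1}{\sqrt{t}}\E\!\left[\sum_i\sqrt{\pi_{t,i}}(1-\pi_{t,i})\right] + \frac{K}{t} + \frac{\sqrt{K}\log(t)}{\sqrt{t}}\E[p_t^2] + \left(\frac{1}{\sqrt{t}}\wedge\frac{\log(KT)}{\sqrt{t}}\E[p_t^2]\right)\right).$$
Using $\sqrt{\pi_{t,i^*}}(1-\pi_{t,i^*})\leq\sum_{i\neq i^*}\pi_{t,i}$, the stability contribution reduces to a double sum of $\sqrt{\pi_{t,i}}/\sqrt{t}$ over $i\neq i^*$, on which I would apply the AM-GM self-bounding $\sqrt{\pi_{t,i}}/\sqrt{t}\leq \alpha\Delta_i\pi_{t,i}+1/(4\alpha\Delta_i t)$. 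In the stochastic regime $\E[\sum_{i\neq i^*}\pi_{t,i}\Delta_i]$ equals the instantaneous regret, so the first piece contributes $\alpha R_T$ which is absorbed into the left-hand side for $\alpha$ small enough, while the second piece integrates to the desired $\sum_{i\neq i^*}\log(T)/\Delta_i$.

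For the penalty contribution I would split the horizon at a threshold $T_0$ of order $K/\Delta_{min}^2$. On $t\leq T_0$, I would use the $1/\sqrt{t}$ branch of the $\wedge$ together with $p_t^2\leq 1$ for the $\sqrt{K}\log(t)/\sqrt{t}$ term; these accumulate to $O(\sqrt{K T_0}\log(T_0))=O(K\log^2(1/\Delta_{min})/\Delta_{min})$. On $t>T_0$, I would use $p_t^2\leq p_t\leq (\text{instant regret})/\Delta_{min}$ and a second AM-GM to self-bound the penalty against $R_T$, absorbing it into the left-hand side once more. The $K^{3/2}$ additive term collects the $K/t$ piece of Lemma~\ref{lem:omd_stability} together with initial transients at $t=1$ where $\sqrt{\pi_{1,i}}=1/\sqrt{K}$ sums to $\sqrt{K}$ against the initial scale $1/\eta_1=\Theta(\sqrt{K})$. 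I expect the main obstacle to be precisely this penalty bookkeeping: the $\min$-structure in Lemma~\ref{lem:omd_penalty} must be exploited on both sides of the threshold, and the threshold itself has to be calibrated to $\Delta_{min}$ in order to produce $\log^2(1/\Delta_{min})/\Delta_{min}$ rather than a stray $\log^2(T)\sqrt{T}$ factor.
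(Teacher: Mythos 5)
Your proposal follows essentially the same route as the paper: the adversarial bound via the identification with Dual Stabilized OMD of \citet{fang2022online} (using Lemma~\ref{lem:perturbation_bound} and Theorem~3 there), and the stochastic bound via the penalty--stability decomposition of Lemmas~\ref{lem:omd_penalty} and~\ref{lem:omd_stability} followed by the self-bounding trick with a threshold calibrated to $\Delta_{\min}$. The only differences are cosmetic (absorbing $\alpha R_T$ multiplicatively versus the paper's per-round thresholds $T_{0,i}$, and your $T_0$ needs an extra $\log^2$ factor beyond $K/\Delta_{\min}^2$ for the absorption condition to hold, which you already flag), so the argument is sound.
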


\section{Discussion}
We have provided an extensive study of incentive-compatible bandits.
We have negatively resolved an open question of whether incentive-compatibility as defined in \citet{freeman2020no} is  harder than regular bandits.
Using linear approximations, partly with second order corrections, allows to recover results from well studied algorithms in the literature. We even obtain an algorithm with best-of-both-world guarantees.
Our algorithms are conceptually simpler than existing bandit algorithms, they update the probability distributions with basic arithmetic operations without the need to solve optimization problems.
 
Our successes make it likely that one can transfer even more sophisticated methods, such as first-order, second-order, path-norm bounds and online learning with graph feedback to this framework. We leave this investigation to future work. 
Another open question is whether one can also satisfy other notions of incentive-compatibility, such as when the experts are not optimizing greedily for the next-step probability but consider a larger horizon.

\bibliographystyle{plainnat}
\bibliography{mybib}

\begin{thebibliography}{15}
\providecommand{\natexlab}[1]{#1}
\providecommand{\url}[1]{\texttt{#1}}
\expandafter\ifx\csname urlstyle\endcsname\relax
  \providecommand{\doi}[1]{doi: #1}\else
  \providecommand{\doi}{doi: \begingroup \urlstyle{rm}\Url}\fi

\bibitem[Agarwal et~al.(2017)Agarwal, Luo, Neyshabur, and
  Schapire]{agarwal2017corralling}
Alekh Agarwal, Haipeng Luo, Behnam Neyshabur, and Robert~E Schapire.
\newblock Corralling a band of bandit algorithms.
\newblock In \emph{Conference on Learning Theory}, pages 12--38. PMLR, 2017.

\bibitem[Auer et~al.(2002)Auer, Cesa-Bianchi, Freund, and
  Schapire]{auer2002nonstochastic}
Peter Auer, Nicol{\`o} Cesa-Bianchi, Yoav Freund, and Robert~E Schapire.
\newblock The nonstochastic multiarmed bandit problem.
\newblock \emph{SIAM journal on computing}, 32\penalty0 (1), 2002.

\bibitem[Fang et~al.(2022)Fang, Harvey, Portella, and
  Friedlander]{fang2022online}
Huang Fang, Nicholas~JA Harvey, Victor~S Portella, and Michael~P Friedlander.
\newblock Online mirror descent and dual averaging: keeping pace in the dynamic
  case.
\newblock \emph{The Journal of Machine Learning Research}, 23\penalty0
  (1):\penalty0 5271--5308, 2022.

\bibitem[Foster et~al.(2020)Foster, Gentile, Mohri, and
  Zimmert]{foster2020adapting}
Dylan~J Foster, Claudio Gentile, Mehryar Mohri, and Julian Zimmert.
\newblock Adapting to misspecification in contextual bandits.
\newblock \emph{Advances in Neural Information Processing Systems},
  33:\penalty0 11478--11489, 2020.

\bibitem[Freeman et~al.(2020)Freeman, Pennock, Podimata, and
  Vaughan]{freeman2020no}
Rupert Freeman, David Pennock, Chara Podimata, and Jennifer~Wortman Vaughan.
\newblock No-regret and incentive-compatible online learning.
\newblock In \emph{International Conference on Machine Learning}, pages
  3270--3279. PMLR, 2020.

\bibitem[Freund and Schapire(1997)]{freund1997decision}
Yoav Freund and Robert~E Schapire.
\newblock A decision-theoretic generalization of on-line learning and an
  application to boosting.
\newblock \emph{Journal of computer and system sciences}, 55\penalty0
  (1):\penalty0 119--139, 1997.

\bibitem[Koc{\'a}k et~al.(2014)Koc{\'a}k, Neu, Valko, and
  Munos]{kocak2014efficient}
Tom{\'a}{\v{s}} Koc{\'a}k, Gergely Neu, Michal Valko, and R{\'e}mi Munos.
\newblock Efficient learning by implicit exploration in bandit problems with
  side observations.
\newblock \emph{Advances in Neural Information Processing Systems}, 27, 2014.

\bibitem[Lambert et~al.(2008)Lambert, Langford, Wortman, Chen, Reeves, Shoham,
  and Penno~k]{lambert2008self}
Nicolas~S Lambert, John Langford, Jennifer Wortman, Yiling Chen, Daniel Reeves,
  Yoav Shoham, and David~M Penno~k.
\newblock Self-financed wagering mechanisms for forecasting.
\newblock In \emph{Proceedings of the 9th ACM Conference on Electronic
  Commerce}, pages 170--179, 2008.

\bibitem[Lambert et~al.(2015)Lambert, Langford, Vaughan, Chen, Reeves, Shoham,
  and Pennock]{lambert2015axiomatic}
Nicolas~S Lambert, John Langford, Jennifer~Wortman Vaughan, Yiling Chen,
  Daniel~M Reeves, Yoav Shoham, and David~M Pennock.
\newblock An axiomatic characterization of wagering mechanisms.
\newblock \emph{Journal of Economic Theory}, 156:\penalty0 389--416, 2015.

\bibitem[Lee et~al.(2020{\natexlab{a}})Lee, Luo, Wei, and Zhang]{lee2020bias}
Chung-Wei Lee, Haipeng Luo, Chen-Yu Wei, and Mengxiao Zhang.
\newblock Bias no more: high-probability data-dependent regret bounds for
  adversarial bandits and mdps.
\newblock \emph{Advances in neural information processing systems},
  33:\penalty0 15522--15533, 2020{\natexlab{a}}.

\bibitem[Lee et~al.(2020{\natexlab{b}})Lee, Luo, and Zhang]{lee2020closer}
Chung-Wei Lee, Haipeng Luo, and Mengxiao Zhang.
\newblock A closer look at small-loss bounds for bandits with graph feedback.
\newblock In \emph{Conference on Learning Theory}, pages 2516--2564. PMLR,
  2020{\natexlab{b}}.

\bibitem[Littlestone and Warmuth(1994)]{littlestone1994weighted}
Nick Littlestone and Manfred~K Warmuth.
\newblock The weighted majority algorithm.
\newblock \emph{Information and computation}, 108\penalty0 (2):\penalty0
  212--261, 1994.

\bibitem[Vovk(1995)]{vovk1995game}
Vladimir~G Vovk.
\newblock A game of prediction with expert advice.
\newblock In \emph{Proceedings of the eighth annual conference on Computational
  learning theory}, pages 51--60, 1995.

\bibitem[Wei and Luo(2018)]{wei2018more}
Chen-Yu Wei and Haipeng Luo.
\newblock More adaptive algorithms for adversarial bandits.
\newblock 2018.

\bibitem[Zimmert and Seldin(2021)]{zimmert2021tsallis}
Julian Zimmert and Yevgeny Seldin.
\newblock Tsallis-inf: An optimal algorithm for stochastic and adversarial
  bandits.
\newblock \emph{Journal of Machine Learning Research}, 22\penalty0
  (28):\penalty0 1--49, 2021.

\end{thebibliography}

\appendix


\newpage
\ifpreprint\else
\section*{NeurIPS Paper Checklist}

The checklist is designed to encourage best practices for responsible machine learning research, addressing issues of reproducibility, transparency, research ethics, and societal impact. Do not remove the checklist: {\bf The papers not including the checklist will be desk rejected.} The checklist should follow the references and follow the (optional) supplemental material.  The checklist does NOT count towards the page
limit. 

Please read the checklist guidelines carefully for information on how to answer these questions. For each question in the checklist:
\begin{itemize}
    \item You should answer \answerYes{}, \answerNo{}, or \answerNA{}.
    \item \answerNA{} means either that the question is Not Applicable for that particular paper or the relevant information is Not Available.
    \item Please provide a short (1–2 sentence) justification right after your answer (even for NA). 
\end{itemize}

{\bf The checklist answers are an integral part of your paper submission.} They are visible to the reviewers, area chairs, senior area chairs, and ethics reviewers. You will be asked to also include it (after eventual revisions) with the final version of your paper, and its final version will be published with the paper.

The reviewers of your paper will be asked to use the checklist as one of the factors in their evaluation. While "\answerYes{}" is generally preferable to "\answerNo{}", it is perfectly acceptable to answer "\answerNo{}" provided a proper justification is given (e.g., "error bars are not reported because it would be too computationally expensive" or "we were unable to find the license for the dataset we used"). In general, answering "\answerNo{}" or "\answerNA{}" is not grounds for rejection. While the questions are phrased in a binary way, we acknowledge that the true answer is often more nuanced, so please just use your best judgment and write a justification to elaborate. All supporting evidence can appear either in the main paper or the supplemental material, provided in appendix. If you answer \answerYes{} to a question, in the justification please point to the section(s) where related material for the question can be found.

IMPORTANT, please:
\begin{itemize}
    \item {\bf Delete this instruction block, but keep the section heading ``NeurIPS paper checklist"},
    \item  {\bf Keep the checklist subsection headings, questions/answers and guidelines below.}
    \item {\bf Do not modify the questions and only use the provided macros for your answers}.
\end{itemize}


\begin{enumerate}

\item {\bf Claims}
    \item[] Question: Do the main claims made in the abstract and introduction accurately reflect the paper's contributions and scope?
    \item[] Answer: \answerTODO{} 
    \item[] Justification: \justificationTODO{}
    \item[] Guidelines:
    \begin{itemize}
        \item The answer NA means that the abstract and introduction do not include the claims made in the paper.
        \item The abstract and/or introduction should clearly state the claims made, including the contributions made in the paper and important assumptions and limitations. A No or NA answer to this question will not be perceived well by the reviewers. 
        \item The claims made should match theoretical and experimental results, and reflect how much the results can be expected to generalize to other settings. 
        \item It is fine to include aspirational goals as motivation as long as it is clear that these goals are not attained by the paper. 
    \end{itemize}

\item {\bf Limitations}
    \item[] Question: Does the paper discuss the limitations of the work performed by the authors?
    \item[] Answer: \answerTODO{} 
    \item[] Justification: \justificationTODO{}
    \item[] Guidelines:
    \begin{itemize}
        \item The answer NA means that the paper has no limitation while the answer No means that the paper has limitations, but those are not discussed in the paper. 
        \item The authors are encouraged to create a separate "Limitations" section in their paper.
        \item The paper should point out any strong assumptions and how robust the results are to violations of these assumptions (e.g., independence assumptions, noiseless settings, model well-specification, asymptotic approximations only holding locally). The authors should reflect on how these assumptions might be violated in practice and what the implications would be.
        \item The authors should reflect on the scope of the claims made, e.g., if the approach was only tested on a few datasets or with a few runs. In general, empirical results often depend on implicit assumptions, which should be articulated.
        \item The authors should reflect on the factors that influence the performance of the approach. For example, a facial recognition algorithm may perform poorly when image resolution is low or images are taken in low lighting. Or a speech-to-text system might not be used reliably to provide closed captions for online lectures because it fails to handle technical jargon.
        \item The authors should discuss the computational efficiency of the proposed algorithms and how they scale with dataset size.
        \item If applicable, the authors should discuss possible limitations of their approach to address problems of privacy and fairness.
        \item While the authors might fear that complete honesty about limitations might be used by reviewers as grounds for rejection, a worse outcome might be that reviewers discover limitations that aren't acknowledged in the paper. The authors should use their best judgment and recognize that individual actions in favor of transparency play an important role in developing norms that preserve the integrity of the community. Reviewers will be specifically instructed to not penalize honesty concerning limitations.
    \end{itemize}

\item {\bf Theory Assumptions and Proofs}
    \item[] Question: For each theoretical result, does the paper provide the full set of assumptions and a complete (and correct) proof?
    \item[] Answer: \answerTODO{} 
    \item[] Justification: \justificationTODO{}
    \item[] Guidelines:
    \begin{itemize}
        \item The answer NA means that the paper does not include theoretical results. 
        \item All the theorems, formulas, and proofs in the paper should be numbered and cross-referenced.
        \item All assumptions should be clearly stated or referenced in the statement of any theorems.
        \item The proofs can either appear in the main paper or the supplemental material, but if they appear in the supplemental material, the authors are encouraged to provide a short proof sketch to provide intuition. 
        \item Inversely, any informal proof provided in the core of the paper should be complemented by formal proofs provided in appendix or supplemental material.
        \item Theorems and Lemmas that the proof relies upon should be properly referenced. 
    \end{itemize}

    \item {\bf Experimental Result Reproducibility}
    \item[] Question: Does the paper fully disclose all the information needed to reproduce the main experimental results of the paper to the extent that it affects the main claims and/or conclusions of the paper (regardless of whether the code and data are provided or not)?
    \item[] Answer: \answerTODO{} 
    \item[] Justification: \justificationTODO{}
    \item[] Guidelines:
    \begin{itemize}
        \item The answer NA means that the paper does not include experiments.
        \item If the paper includes experiments, a No answer to this question will not be perceived well by the reviewers: Making the paper reproducible is important, regardless of whether the code and data are provided or not.
        \item If the contribution is a dataset and/or model, the authors should describe the steps taken to make their results reproducible or verifiable. 
        \item Depending on the contribution, reproducibility can be accomplished in various ways. For example, if the contribution is a novel architecture, describing the architecture fully might suffice, or if the contribution is a specific model and empirical evaluation, it may be necessary to either make it possible for others to replicate the model with the same dataset, or provide access to the model. In general. releasing code and data is often one good way to accomplish this, but reproducibility can also be provided via detailed instructions for how to replicate the results, access to a hosted model (e.g., in the case of a large language model), releasing of a model checkpoint, or other means that are appropriate to the research performed.
        \item While NeurIPS does not require releasing code, the conference does require all submissions to provide some reasonable avenue for reproducibility, which may depend on the nature of the contribution. For example
        \begin{enumerate}
            \item If the contribution is primarily a new algorithm, the paper should make it clear how to reproduce that algorithm.
            \item If the contribution is primarily a new model architecture, the paper should describe the architecture clearly and fully.
            \item If the contribution is a new model (e.g., a large language model), then there should either be a way to access this model for reproducing the results or a way to reproduce the model (e.g., with an open-source dataset or instructions for how to construct the dataset).
            \item We recognize that reproducibility may be tricky in some cases, in which case authors are welcome to describe the particular way they provide for reproducibility. In the case of closed-source models, it may be that access to the model is limited in some way (e.g., to registered users), but it should be possible for other researchers to have some path to reproducing or verifying the results.
        \end{enumerate}
    \end{itemize}

\item {\bf Open access to data and code}
    \item[] Question: Does the paper provide open access to the data and code, with sufficient instructions to faithfully reproduce the main experimental results, as described in supplemental material?
    \item[] Answer: \answerTODO{} 
    \item[] Justification: \justificationTODO{}
    \item[] Guidelines:
    \begin{itemize}
        \item The answer NA means that paper does not include experiments requiring code.
        \item Please see the NeurIPS code and data submission guidelines (\url{https://nips.cc/public/guides/CodeSubmissionPolicy}) for more details.
        \item While we encourage the release of code and data, we understand that this might not be possible, so “No” is an acceptable answer. Papers cannot be rejected simply for not including code, unless this is central to the contribution (e.g., for a new open-source benchmark).
        \item The instructions should contain the exact command and environment needed to run to reproduce the results. See the NeurIPS code and data submission guidelines (\url{https://nips.cc/public/guides/CodeSubmissionPolicy}) for more details.
        \item The authors should provide instructions on data access and preparation, including how to access the raw data, preprocessed data, intermediate data, and generated data, etc.
        \item The authors should provide scripts to reproduce all experimental results for the new proposed method and baselines. If only a subset of experiments are reproducible, they should state which ones are omitted from the script and why.
        \item At submission time, to preserve anonymity, the authors should release anonymized versions (if applicable).
        \item Providing as much information as possible in supplemental material (appended to the paper) is recommended, but including URLs to data and code is permitted.
    \end{itemize}

\item {\bf Experimental Setting/Details}
    \item[] Question: Does the paper specify all the training and test details (e.g., data splits, hyperparameters, how they were chosen, type of optimizer, etc.) necessary to understand the results?
    \item[] Answer: \answerTODO{} 
    \item[] Justification: \justificationTODO{}
    \item[] Guidelines:
    \begin{itemize}
        \item The answer NA means that the paper does not include experiments.
        \item The experimental setting should be presented in the core of the paper to a level of detail that is necessary to appreciate the results and make sense of them.
        \item The full details can be provided either with the code, in appendix, or as supplemental material.
    \end{itemize}

\item {\bf Experiment Statistical Significance}
    \item[] Question: Does the paper report error bars suitably and correctly defined or other appropriate information about the statistical significance of the experiments?
    \item[] Answer: \answerTODO{} 
    \item[] Justification: \justificationTODO{}
    \item[] Guidelines:
    \begin{itemize}
        \item The answer NA means that the paper does not include experiments.
        \item The authors should answer "Yes" if the results are accompanied by error bars, confidence intervals, or statistical significance tests, at least for the experiments that support the main claims of the paper.
        \item The factors of variability that the error bars are capturing should be clearly stated (for example, train/test split, initialization, random drawing of some parameter, or overall run with given experimental conditions).
        \item The method for calculating the error bars should be explained (closed form formula, call to a library function, bootstrap, etc.)
        \item The assumptions made should be given (e.g., Normally distributed errors).
        \item It should be clear whether the error bar is the standard deviation or the standard error of the mean.
        \item It is OK to report 1-sigma error bars, but one should state it. The authors should preferably report a 2-sigma error bar than state that they have a 96\% CI, if the hypothesis of Normality of errors is not verified.
        \item For asymmetric distributions, the authors should be careful not to show in tables or figures symmetric error bars that would yield results that are out of range (e.g. negative error rates).
        \item If error bars are reported in tables or plots, The authors should explain in the text how they were calculated and reference the corresponding figures or tables in the text.
    \end{itemize}

\item {\bf Experiments Compute Resources}
    \item[] Question: For each experiment, does the paper provide sufficient information on the computer resources (type of compute workers, memory, time of execution) needed to reproduce the experiments?
    \item[] Answer: \answerTODO{} 
    \item[] Justification: \justificationTODO{}
    \item[] Guidelines:
    \begin{itemize}
        \item The answer NA means that the paper does not include experiments.
        \item The paper should indicate the type of compute workers CPU or GPU, internal cluster, or cloud provider, including relevant memory and storage.
        \item The paper should provide the amount of compute required for each of the individual experimental runs as well as estimate the total compute. 
        \item The paper should disclose whether the full research project required more compute than the experiments reported in the paper (e.g., preliminary or failed experiments that didn't make it into the paper). 
    \end{itemize}
    
\item {\bf Code Of Ethics}
    \item[] Question: Does the research conducted in the paper conform, in every respect, with the NeurIPS Code of Ethics \url{https://neurips.cc/public/EthicsGuidelines}?
    \item[] Answer: \answerTODO{} 
    \item[] Justification: \justificationTODO{}
    \item[] Guidelines:
    \begin{itemize}
        \item The answer NA means that the authors have not reviewed the NeurIPS Code of Ethics.
        \item If the authors answer No, they should explain the special circumstances that require a deviation from the Code of Ethics.
        \item The authors should make sure to preserve anonymity (e.g., if there is a special consideration due to laws or regulations in their jurisdiction).
    \end{itemize}

\item {\bf Broader Impacts}
    \item[] Question: Does the paper discuss both potential positive societal impacts and negative societal impacts of the work performed?
    \item[] Answer: \answerTODO{} 
    \item[] Justification: \justificationTODO{}
    \item[] Guidelines:
    \begin{itemize}
        \item The answer NA means that there is no societal impact of the work performed.
        \item If the authors answer NA or No, they should explain why their work has no societal impact or why the paper does not address societal impact.
        \item Examples of negative societal impacts include potential malicious or unintended uses (e.g., disinformation, generating fake profiles, surveillance), fairness considerations (e.g., deployment of technologies that could make decisions that unfairly impact specific groups), privacy considerations, and security considerations.
        \item The conference expects that many papers will be foundational research and not tied to particular applications, let alone deployments. However, if there is a direct path to any negative applications, the authors should point it out. For example, it is legitimate to point out that an improvement in the quality of generative models could be used to generate deepfakes for disinformation. On the other hand, it is not needed to point out that a generic algorithm for optimizing neural networks could enable people to train models that generate Deepfakes faster.
        \item The authors should consider possible harms that could arise when the technology is being used as intended and functioning correctly, harms that could arise when the technology is being used as intended but gives incorrect results, and harms following from (intentional or unintentional) misuse of the technology.
        \item If there are negative societal impacts, the authors could also discuss possible mitigation strategies (e.g., gated release of models, providing defenses in addition to attacks, mechanisms for monitoring misuse, mechanisms to monitor how a system learns from feedback over time, improving the efficiency and accessibility of ML).
    \end{itemize}
    
\item {\bf Safeguards}
    \item[] Question: Does the paper describe safeguards that have been put in place for responsible release of data or models that have a high risk for misuse (e.g., pretrained language models, image generators, or scraped datasets)?
    \item[] Answer: \answerTODO{} 
    \item[] Justification: \justificationTODO{}
    \item[] Guidelines:
    \begin{itemize}
        \item The answer NA means that the paper poses no such risks.
        \item Released models that have a high risk for misuse or dual-use should be released with necessary safeguards to allow for controlled use of the model, for example by requiring that users adhere to usage guidelines or restrictions to access the model or implementing safety filters. 
        \item Datasets that have been scraped from the Internet could pose safety risks. The authors should describe how they avoided releasing unsafe images.
        \item We recognize that providing effective safeguards is challenging, and many papers do not require this, but we encourage authors to take this into account and make a best faith effort.
    \end{itemize}

\item {\bf Licenses for existing assets}
    \item[] Question: Are the creators or original owners of assets (e.g., code, data, models), used in the paper, properly credited and are the license and terms of use explicitly mentioned and properly respected?
    \item[] Answer: \answerTODO{} 
    \item[] Justification: \justificationTODO{}
    \item[] Guidelines:
    \begin{itemize}
        \item The answer NA means that the paper does not use existing assets.
        \item The authors should cite the original paper that produced the code package or dataset.
        \item The authors should state which version of the asset is used and, if possible, include a URL.
        \item The name of the license (e.g., CC-BY 4.0) should be included for each asset.
        \item For scraped data from a particular source (e.g., website), the copyright and terms of service of that source should be provided.
        \item If assets are released, the license, copyright information, and terms of use in the package should be provided. For popular datasets, \url{paperswithcode.com/datasets} has curated licenses for some datasets. Their licensing guide can help determine the license of a dataset.
        \item For existing datasets that are re-packaged, both the original license and the license of the derived asset (if it has changed) should be provided.
        \item If this information is not available online, the authors are encouraged to reach out to the asset's creators.
    \end{itemize}

\item {\bf New Assets}
    \item[] Question: Are new assets introduced in the paper well documented and is the documentation provided alongside the assets?
    \item[] Answer: \answerTODO{} 
    \item[] Justification: \justificationTODO{}
    \item[] Guidelines:
    \begin{itemize}
        \item The answer NA means that the paper does not release new assets.
        \item Researchers should communicate the details of the dataset/code/model as part of their submissions via structured templates. This includes details about training, license, limitations, etc. 
        \item The paper should discuss whether and how consent was obtained from people whose asset is used.
        \item At submission time, remember to anonymize your assets (if applicable). You can either create an anonymized URL or include an anonymized zip file.
    \end{itemize}

\item {\bf Crowdsourcing and Research with Human Subjects}
    \item[] Question: For crowdsourcing experiments and research with human subjects, does the paper include the full text of instructions given to participants and screenshots, if applicable, as well as details about compensation (if any)? 
    \item[] Answer: \answerTODO{} 
    \item[] Justification: \justificationTODO{}
    \item[] Guidelines:
    \begin{itemize}
        \item The answer NA means that the paper does not involve crowdsourcing nor research with human subjects.
        \item Including this information in the supplemental material is fine, but if the main contribution of the paper involves human subjects, then as much detail as possible should be included in the main paper. 
        \item According to the NeurIPS Code of Ethics, workers involved in data collection, curation, or other labor should be paid at least the minimum wage in the country of the data collector. 
    \end{itemize}

\item {\bf Institutional Review Board (IRB) Approvals or Equivalent for Research with Human Subjects}
    \item[] Question: Does the paper describe potential risks incurred by study participants, whether such risks were disclosed to the subjects, and whether Institutional Review Board (IRB) approvals (or an equivalent approval/review based on the requirements of your country or institution) were obtained?
    \item[] Answer: \answerTODO{} 
    \item[] Justification: \justificationTODO{}
    \item[] Guidelines:
    \begin{itemize}
        \item The answer NA means that the paper does not involve crowdsourcing nor research with human subjects.
        \item Depending on the country in which research is conducted, IRB approval (or equivalent) may be required for any human subjects research. If you obtained IRB approval, you should clearly state this in the paper. 
        \item We recognize that the procedures for this may vary significantly between institutions and locations, and we expect authors to adhere to the NeurIPS Code of Ethics and the guidelines for their institution. 
        \item For initial submissions, do not include any information that would break anonymity (if applicable), such as the institution conducting the review.
    \end{itemize}

\end{enumerate}
\fi

\ifpreprint
\tableofcontents
\clearpage

\section{Technical Lemmas}
\begin{lemma}
\label{lem: technical 1}
\begin{align*}
    \min_{x\in[0,1]}f(x) = \min_{x\in[0,1]} \frac{x^3}{1-x}+\sqrt{1-x} \geq \sqrt{\frac{8}{9}}\,.
\end{align*}
\end{lemma}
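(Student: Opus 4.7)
The plan is to find the global minimum of $f$ on $[0,1)$ by standard calculus and then bound its value from below. Since $f$ is continuous on $[0,1)$ with $f(0)=1$ and $f(x)\to+\infty$ as $x\to 1^-$, the infimum is attained at an interior critical point $x^*\in(0,1)$. Differentiating gives
$$f'(x) = \frac{x^2(3-2x)}{(1-x)^2} - \frac{1}{2\sqrt{1-x}},$$
so after clearing the square root the critical equation is $2x^{*2}(3-2x^*) = (1-x^*)^{3/2}$. I would first argue uniqueness of $x^*$: the left-hand side vanishes at $0$ and is strictly increasing on $[0,3/4]$, while the right-hand side is strictly decreasing from $1$ to $0$ on $[0,1]$, so the two curves cross at exactly one point, which must be the unique interior minimizer. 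A few sign checks then localize $x^*$ to a small subinterval.

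The second step is to show $f(x^*)\geq \sqrt{8/9}$. The cleanest algebraic route uses the critical identity to eliminate the surd: raising $2x^{*2}(3-2x^*) = (1-x^*)^{3/2}$ to the $1/3$ power gives $\sqrt{1-x^*} = \bigl(2x^{*2}(3-2x^*)\bigr)^{1/3}$, which together with $1-x^* = \bigl(2x^{*2}(3-2x^*)\bigr)^{2/3}$ expresses $f(x^*)$ as a function of $x^*$ alone. The desired bound $f(x^*)^2\geq 8/9$ then reduces to a one-variable polynomial inequality on the interval where $x^*$ is known to lie, which can be checked directly.

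The main obstacle is that $x^*$ is a root of a degree-five polynomial (after squaring the critical equation) with no clean closed form, so carrying out the substitution algebra explicitly is unpleasant. To sidestep this I would prefer to avoid pinning down $x^*$ altogether and instead construct a global lower bound: substituting $y=\sqrt{1-x}$ rewrites $f$ as $\frac{(1-y^2)^3}{y^2}+y$ on $y\in(0,1]$, and applying a weighted AM-GM inequality to the two terms produces a tractable lower bound in $y$. The creative step is to choose the AM-GM weights so that the bound is tight at the same $y$ where $f$ is minimized, making the resulting one-variable minimization explicit and yielding the claimed constant $\sqrt{8/9}$ without ever having to solve the critical equation.
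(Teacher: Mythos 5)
The inequality you are trying to prove is false as literally stated, so no execution of your plan can succeed. For $f(x)=\frac{x^3}{1-x}+\sqrt{1-x}$ one has $f(1/3)=\frac{1/27}{2/3}+\sqrt{2/3}=\frac{1}{18}+\sqrt{2/3}\approx 0.872$, which is below $\sqrt{8/9}\approx 0.943$; in fact the minimum of this $f$ is about $0.872$, attained near $x\approx 0.34$. The $x^3$ in the numerator is a typo. The function the paper actually needs downstream (to bound $\bigl((1-\pi)/(\pi^{3/2}+(1-\pi)^{3/2})\bigr)^2$ by $9/8$ in the Tsallis moment lemma) is $\frac{x^{3/2}}{1-x}+\sqrt{1-x}$, and the derivative written in the paper's proof, $\frac{3\sqrt{x}-x^{3/2}-(1-x)^{3/2}}{2(1-x)^2}$, is the derivative of that function, not of yours. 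Your derivative $\frac{x^2(3-2x)}{(1-x)^2}-\frac{1}{2\sqrt{1-x}}$ is correct for the literal statement, which is exactly why your approach would hit a wall: no choice of AM--GM weights and no analysis of the critical equation can produce the lower bound $\sqrt{8/9}$ for a function whose minimum is $\approx 0.872$. A quick numerical sanity check of $f$ at one or two interior points would have caught this before any calculus.

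Even for the corrected function, your strategy aims at much more than is needed and leaves its decisive step unexecuted (the AM--GM weights are never chosen, and the ``one-variable polynomial inequality'' is never verified). The paper's argument avoids the critical point entirely: it shows the numerator of $f'$ is positive for every $x\ge 1/9$, using $x^{3/2}+(1-x)^{3/2}\le\max\{\sqrt{x},\sqrt{1-x}\}$ (a convex combination of $\sqrt{x}$ and $\sqrt{1-x}$) together with $3\sqrt{x}\ge 1$ on that range, so the minimizer must lie in $[0,1/9]$; there one simply discards the nonnegative first term to get $f(x)\ge\sqrt{1-x}\ge\sqrt{8/9}$. If you redo the attempt, prove the $x^{3/2}$ version and aim only to confine the minimizer to $[0,1/9]$ rather than to identify it or to match an AM--GM bound to it.
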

\begin{proof}
We first show that the optimal point is smaller than $\frac{1}{9}$, by looking at the derivative
\begin{align*}
    f'(x) = \frac{3\sqrt{x}-x^\frac{3}{2}-(1-x)^\frac{3}{2}}{2(1-x)^2}\,.
\end{align*}
For the enumerator, we have for all $x\geq \frac{1}{9}$:
\begin{align*}
    3\sqrt{x}-x^\frac{3}{2}-(1-x)^\frac{3}{2}&>
    3\sqrt{x}-\max\{\sqrt{x},\sqrt{1-x}\}
    &\geq\min\{2\sqrt{x}, 3\sqrt{x}-1\}\geq 0
\end{align*}
Hence
\begin{align*}
    \min_{x\in[0,1]}f(x)=\min_{x\in[0,1/9]}f(x)>\min_{x\in[0,1/9]}\sqrt{1-x}=\sqrt{\frac{8}{9}}\,.
\end{align*}
\end{proof}
\begin{lemma}
\label{lem: technical 2}
For any $a,b\geq 0$ such that $a+b\geq 1$, it holds
\begin{align*}
    \frac{a}{\sqrt{b}}+\frac{b}{\sqrt{a}} \geq \sqrt{2}\,.
\end{align*}
\end{lemma}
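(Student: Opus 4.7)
The plan is a short two-step inequality chain after rewriting the left-hand side over a common denominator. Combining the fractions gives
\begin{align*}
\frac{a}{\sqrt{b}} + \frac{b}{\sqrt{a}} \;=\; \frac{a^{3/2} + b^{3/2}}{\sqrt{ab}},
\end{align*}
so it suffices to lower-bound the numerator and upper-bound the denominator in terms of $a+b$.

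For the numerator, I would apply the power-mean inequality (equivalently, convexity of $x \mapsto x^{3/2}$ on $[0,\infty)$) to obtain
\begin{align*}
\frac{a^{3/2}+b^{3/2}}{2} \;\geq\; \left(\frac{a+b}{2}\right)^{3/2},
\qquad \text{i.e.,} \qquad
a^{3/2}+b^{3/2} \;\geq\; \frac{(a+b)^{3/2}}{\sqrt{2}}.
\end{align*}
For the denominator, the standard AM--GM bound gives $\sqrt{ab} \leq (a+b)/2$.

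Combining the two bounds,
\begin{align*}
\frac{a^{3/2}+b^{3/2}}{\sqrt{ab}} \;\geq\; \frac{(a+b)^{3/2}/\sqrt{2}}{(a+b)/2} \;=\; \sqrt{2}\,\sqrt{a+b} \;\geq\; \sqrt{2},
\end{align*}
where the final step uses the hypothesis $a+b \geq 1$. Equality is attained at $a=b=1/2$, which matches the intuition that the extremal case in both inequalities used is precisely $a=b$.

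There is essentially no obstacle here: the only thing to keep an eye on is well-definedness, since the statement involves $1/\sqrt{a}$ and $1/\sqrt{b}$. If either $a$ or $b$ equals $0$ the left-hand side is understood as $+\infty$ and the bound holds trivially, so we may assume $a,b>0$ throughout the computation. The two inequalities above are both tight at $a=b$, which is why the constant $\sqrt{2}$ is sharp under $a+b=1$.
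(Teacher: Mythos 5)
Your proof is correct, and it takes a genuinely different route from the paper. The paper first observes that scaling $(a,b)\mapsto(ta,tb)$ with $t<1$ multiplies the objective by $\sqrt{t}$, so one may assume $a+b=1$, and then asserts that the resulting symmetric problem has its unique minimizer at $a=b=\tfrac12$, which evaluates to $\sqrt{2}$. You instead combine the two terms into $\frac{a^{3/2}+b^{3/2}}{\sqrt{ab}}$ and bound numerator and denominator separately: the power-mean (Jensen) inequality gives $a^{3/2}+b^{3/2}\geq (a+b)^{3/2}/\sqrt{2}$ and AM--GM gives $\sqrt{ab}\leq (a+b)/2$, yielding the homogeneous bound $\frac{a}{\sqrt{b}}+\frac{b}{\sqrt{a}}\geq\sqrt{2(a+b)}\geq\sqrt{2}$. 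Your version is fully self-contained and arguably tighter as an argument: the paper's appeal to symmetry to locate the minimizer is left unjustified (a symmetric function need not attain its minimum at the symmetric point without a convexity or first-order check), whereas each step of your chain is a named inequality with the equality case $a=b=\tfrac12$ made explicit. What the paper's route buys is brevity and a reusable reduction-to-the-boundary trick; what yours buys is rigor and the slightly stronger scale-covariant conclusion $\geq\sqrt{2(a+b)}$. Your remark about interpreting the left-hand side as $+\infty$ when $a$ or $b$ vanishes is a sensible housekeeping point that the paper also glosses over.
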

\begin{proof}
We can assume w.l.o.g. that $a=1-b$, otherwise scale both $a$ and $b$ down and reduce the objective.
The resulting problem is symmetric with $a=\frac{1}{2}$ as the unique minimizer resulting in the statement.
\end{proof}
\section{Missing Proofs Section~\ref{sec:lb_prod}}
\begin{proof}[Proof of Lemma~\ref{lem: expectation}]
The expectation of $\tilde\ell_{ti}=\ell_{ti}\mathbb{I}(A_t=i)$ is obviously $\pi_{ti}\ell_{ti}$, hence by the definition of $\lambda_{ti}$, we have
\begin{align*}
    \E_t[\tilde\ell_{ti}-\lambda_{ti}]=\pi_{ti}\left(\ell_{ti}-\frac{\sum_{j=1}^K\pi_{tj}^2\ell_{tj}}{\sum_{j=1}^K\pi_{tj}^2}\right)\,.
\end{align*}
For the second part, we have
\begin{align*}
    \E_t[(\tilde\ell_{ti}-\lambda_{ti})^2]\leq \E_t[\tilde\ell_{ti}^2]+\E_t[\lambda_{ti}^2] &= \pi_{ti}\left(\ell_{ti}^2 + \pi_{ti}^2\frac{\sum_{j=1}^K\pi_{tj}^3\ell_{tj}^2}{\left(\sum_{k=1}^K\pi_{tk}^2\right)^2}\right)
    &\leq \pi_{ti}\left(1+\frac{\pi_{ti}\sum_{j=1}^K\pi_{tj}^3}{\left(\sum_{k=1}^K\pi_{tk}^2\right)^2}\right)\,.
\end{align*}
The proof is completed by noting
\begin{align*}
    \frac{\pi_{ti}\sum_{j=1}^K\pi_{tj}^3}{\left(\sum_{k=1}^K\pi_{tk}^2\right)^2}\leq \frac{\pi_{ti}\sum_{j=1}^K\pi_{tj}^3}{\left(\sum_{k=1}^K\pi_{tk}^3\right)^\frac{4}{3}}=\frac{\pi_{ti}}{\left(\sum_{k=1}^K\pi_{tk}^3\right)^\frac{1}{3}}\leq 1\,.
\end{align*}
\end{proof}

\section{Missing Proofs Section~\ref{sec:ts-prod}}
\subsection{Minimal probability}
\begin{lemma}
\label{lem:ts_moments}
Assume that $\pi_{ti}>C_t^2\eta_t^2$ holds for all arms, then
\begin{align*}
    &\E_t\left[\tilde\ell_{ti}-\lambda_{ti}\right]=\pi_{ti}(\ell_{ti}-c_t)-\eta_t \sqrt{\pi_{ti}}(C_t-\frac{13}{2}\pi_{ti})\\
    &\E_t\left[(\tilde\ell_{ti}-\lambda_{ti})^2\right]\leq \frac{13}{8}\pi_{ti}(1-\pi_{ti})
\end{align*}
\end{lemma}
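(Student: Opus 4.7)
The statement has two parts: a first-moment identity and a second-moment bound, so the plan is to handle them separately but with a common setup. The key observation is that $\tilde\ell_{t,j}$ carries the indicator $\mathbb{I}(A_t=j)$, so the random sum $\sum_{j}\sqrt{\pi_{t,j}}\tilde\ell_{t,j}$ collapses to a single term: $\sqrt{\pi_{t,A_t}}\ell_{t,A_t}-\eta_t(C_t-\tfrac{13}{2}\pi_{t,A_t})$. Denoting $S_t=\sum_j\pi_{t,j}^{3/2}$ and $\hat C_{t,j}=C_t-\tfrac{13}{2}\pi_{t,j}$, this gives the clean expression $\lambda_{t,i}=\frac{\pi_{t,i}}{S_t}\bigl(\sqrt{\pi_{t,A_t}}\ell_{t,A_t}-\eta_t\hat C_{t,A_t}\bigr)$, which is the main algebraic simplification the whole proof hinges on.

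\textbf{First moment.} Taking $\E_t$ of $\tilde\ell_{t,i}$ immediately gives $\pi_{t,i}\ell_{t,i}-\eta_t\sqrt{\pi_{t,i}}\hat C_{t,i}$, since the indicator averages to $\pi_{t,i}$ and the biasing term $\tfrac{\eta_t\hat C_{t,i}}{\sqrt{\pi_{t,i}}}$ multiplied by $\pi_{t,i}$ produces $\eta_t\sqrt{\pi_{t,i}}\hat C_{t,i}$. Taking $\E_t$ of $\lambda_{t,i}$ using the collapsed form above yields $\pi_{t,i}c_t$, where
\begin{align*}
c_t=\frac{1}{S_t}\Bigl(\sum_j\pi_{t,j}^{3/2}\ell_{t,j}-\eta_t C_t+\tfrac{13}{2}\eta_t\sum_j\pi_{t,j}^2\Bigr)
\end{align*}
is manifestly arm-independent. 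Subtracting gives exactly the claimed expression $\pi_{t,i}(\ell_{t,i}-c_t)-\eta_t\sqrt{\pi_{t,i}}\hat C_{t,i}$.

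\textbf{Second moment.} I split according to whether $A_t=i$ or not. When $A_t=i$ (probability $\pi_{t,i}$), $\tilde\ell_{t,i}-\lambda_{t,i}=\ell_{t,i}\bigl(1-\tfrac{\pi_{t,i}^{3/2}}{S_t}\bigr)-\eta_t\hat C_{t,i}\bigl(\tfrac{1}{\sqrt{\pi_{t,i}}}-\tfrac{\pi_{t,i}}{S_t}\bigr)$; when $A_t=j\neq i$, it equals $-\tfrac{\pi_{t,i}}{S_t}\bigl(\sqrt{\pi_{t,j}}\ell_{t,j}-\eta_t\hat C_{t,j}\bigr)$. After squaring and summing weighted by $\pi_{t,j}$, every ``tail'' term carries a factor $\tfrac{\pi_{t,i}^2}{S_t^2}$ which, combined with the $\pi_{t,j}$ weights, telescopes against $S_t$ and contributes only $\pi_{t,i}^2$-order terms. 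The main $A_t=i$ contribution, being weighted by $\pi_{t,i}$ on the outside and having a leading coefficient $(1-\pi_{t,i}^{3/2}/S_t)^2\leq (1-\pi_{t,i})$-type factor, produces the desired $\pi_{t,i}(1-\pi_{t,i})$ structure. The condition $\pi_{t,i}>C_t^2\eta_t^2$ is used precisely to ensure that the biasing correction $\tfrac{\eta_t\hat C_{t,i}}{\sqrt{\pi_{t,i}}}$ is bounded (below $1$), so that cross-terms between $\ell_{t,i}\in[0,1]$ and the biasing piece can be absorbed into the final constant.

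\textbf{Expected obstacle.} The first-moment identity is essentially bookkeeping. The technical work lies in nailing the constant $\tfrac{13}{8}$ in the second-moment bound. This requires carefully tracking the cross terms between the loss and the bias, using $|\ell_{t,i}|\leq 1$ together with the lower bound $\pi_{t,i}\geq C_t^2\eta_t^2$ to control $\eta_t\hat C_{t,i}/\sqrt{\pi_{t,i}}$, and applying elementary inequalities such as $\pi_{t,j}^{3/2}\leq\sqrt{\pi_{t,j}}\cdot\pi_{t,j}^{1/2}\cdot\pi_{t,j}^{1/2}$ to compare $\sum_j\pi_{t,j}^{3/2}\cdot(\cdots)^2$ against $S_t^2$. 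I would expect to need a Cauchy--Schwarz-like step to show $\pi_{t,i}^{3/2}/S_t\leq \sqrt{\pi_{t,i}}$, so that $1-\pi_{t,i}^{3/2}/S_t$ can be bounded by something proportional to $\sqrt{1-\pi_{t,i}}$, yielding the $(1-\pi_{t,i})$ factor on the right-hand side. The precise constant $\tfrac{13}{8}$ is then chosen so that the specific definition of $C_t$ (which has a $\tfrac{13}{2}$ in it) makes the recurrence in Lemma~\ref{lem: ts lower prob} work out; this tight coupling between the constants in $C_t$, the second-moment bound, and the well-definedness lemma is the main design-specific subtlety that needs checking.
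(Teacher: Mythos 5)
Your first-moment computation is correct and is essentially the paper's: the indicator collapses $\sum_j\sqrt{\pi_{t,j}}\tilde\ell_{t,j}$ to the single observed term, so $\lambda_{t,i}=\pi_{t,i}\lambda_t$ with $\lambda_t$ arm-independent, and $c_t=\E_t[\lambda_t]$ is exactly your displayed expression. Your setup for the second moment (condition on $A_t=i$ versus $A_t=j\neq i$, and use $\pi_{t,i}>C_t^2\eta_t^2$ to get $|\tilde\ell_{t,i}|\leq 1$ so the biasing term is swallowed) also matches the paper.

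The gap is in closing the second-moment bound: the two inequalities you plan to lean on are false. Writing $S_t=\sum_j\pi_{t,j}^{3/2}$, the claim $(1-\pi_{t,i}^{3/2}/S_t)^2\leq 1-\pi_{t,i}$ fails (e.g.\ $\pi_{t,i}=0.01$ with another arm at $0.99$ gives roughly $0.998>0.99$), and your proposed intermediate step $\pi_{t,i}^{3/2}/S_t\leq\sqrt{\pi_{t,i}}$ is equivalent to $\pi_{t,i}\leq S_t$, which also fails when one arm dominates --- and would in any case bound $1-\pi_{t,i}^{3/2}/S_t$ from \emph{below}. The correct route is to write $1-\pi_{t,i}^{3/2}/S_t=\sum_{j\neq i}\pi_{t,j}^{3/2}/S_t$ and renormalize via $\tilde\pi_{t,j}=\pi_{t,j}/(1-\pi_{t,i})$, which extracts the factor $(1-\pi_{t,i})$ algebraically; the remaining ratio is bounded by $9/8$ through a genuine one-dimensional minimization (the paper's Lemma~\ref{lem: technical 1}), not a Cauchy--Schwarz step. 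Likewise, the tail terms $\sum_{j\neq i}\pi_{t,j}\lambda_{t,i}^2$ cannot be dismissed as ``$\pi_{t,i}^2$-order'': a $\pi_{t,i}^2$ bound does not imply $O(\pi_{t,i}(1-\pi_{t,i}))$ when $\pi_{t,i}$ is near $1$, so a $(1-\pi_{t,i})$ factor must be extracted here too, which the paper does with the same renormalization plus a second elementary inequality (Lemma~\ref{lem: technical 2}) yielding the constant $1/2$. The constant $\tfrac{13}{8}=\tfrac98+\tfrac12$ is the output of these two lemmas rather than a free design choice, and without them your argument does not reach the stated bound.
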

\begin{proof}
$\lambda_{ti}=\pi_{ti}\lambda_t$ for $\lambda_t=\frac{\sqrt{\pi_{t,A_t}}\tilde\ell_{t,A_t}}{\sum_{j=1}^K\pi_{tj}^\frac{3}{2}}$.
\begin{align*}
    \E_t\left[\tilde\ell_{ti}-\lambda_{ti}\right] &= \pi_{ti}\ell_{ti}-\eta_t\sqrt{\pi_{ti}}(C_t-\frac{13}{2}\pi_{ti})-\pi_{ti}\E_t\left[\lambda_t\right]\,.
\end{align*}
For the second claim, note that $\tilde\ell_{ti}\in[-1,1]$ by the condition on $\pi_{ti}$. Hence
\begin{align*}
    \E_t\left[(\tilde\ell_{ti}-\lambda_{ti})^2\right]&\leq \pi_{ti}\left(1-\frac{\pi_{ti}^\frac{3}{2}}{\sum_{j=1}^K\pi_{tj}^\frac{3}{2}}\right)^2 + \sum_{j\neq i}\pi_{tj}\left(\frac{\pi_{ti}\sqrt{\pi_{tj}}}{\sum_{j=1}^K\pi_{tj}^\frac{3}{2}}\right)^2\\
    &=\pi_{ti}\left(\frac{(\sum_{j\neq i}\pi_{tj}^\frac{3}{2})^2+\pi_{ti}\sum_{j\neq i}\pi_{tj}^2}{(\sum_{j=1}^K\pi_{tj}^\frac{3}{2})^2}\right)\\
    &=\pi_{ti}(1-\pi_{ti})\left(\frac{(1-\pi_{ti})^2(\sum_{j\neq i}\tilde\pi_{tj}^\frac{3}{2})^2+\pi_{ti}(1-\pi_{ti})\sum_{j\neq i}\tilde\pi_{tj}^2}{(\pi_{ti}^\frac{3}{2}+(1-\pi_{ti})^\frac{3}{2}\sum_{j\neq i}\tilde\pi_{tj}^\frac{3}{2})^2}\right)\,,
\end{align*}
where $\tilde\pi_{tj}=\frac{\pi_{tj}}{1-\pi_{ti}}$.
We bound the two terms in the bracket separately, for the first term we have
\begin{align*}
\left(\frac{(1-\pi_{ti})\sum_{j\neq i}\tilde\pi_{tj}^\frac{3}{2}}{\pi_{ti}^\frac{3}{2}+(1-\pi_{ti})^\frac{3}{2}\sum_{j\neq i}\tilde\pi_{tj}^\frac{3}{2}}\right)^2&\leq
\left(\frac{(1-\pi_{ti})}{\pi_{ti}^\frac{3}{2}+(1-\pi_{ti})^\frac{3}{2}}\right)^2\tag{$\sum_{j\neq i}\tilde\pi_{tj}=1$}\\
&\leq \frac{9}{8} \tag{Lemma~\ref{lem: technical 1}}
\end{align*}
The second term is
\begin{align*}
    \frac{\pi_{ti}(1-\pi_{ti})\sum_{j\neq i}\tilde\pi_{tj}^2}{(\pi_{ti}^\frac{3}{2}+(1-\pi_{ti})^\frac{3}{2}\sum_{j\neq i}\tilde\pi_{tj}^\frac{3}{2})^2}&\leq \frac{\pi_{ti}(1-\pi_{ti})(\sum_{j\neq i}\tilde\pi_{tj}^\frac{3}{2})^\frac{4}{3}}{(\pi_{ti}^\frac{3}{2}+(1-\pi_{ti})^\frac{3}{2}\sum_{j\neq i}\tilde\pi_{tj}^\frac{3}{2})^2} \\
    &= \left(\frac{\pi_{ti}}{\sqrt{1-\pi_{ti}}(\sum_{j\neq i}\tilde\pi_{tj}^\frac{3}{2})^\frac{2}{3}} +\frac{(1-\pi_{ti})(\sum_{j\neq i}\tilde\pi_{tj}^\frac{3}{2})^\frac{1}{3}}{\sqrt{\pi_{ti}}}\right)^{-2}\\
    &\leq \frac{1}{2}\tag{Lemma~\ref{lem: technical 2}}
\end{align*}
\end{proof}

\begin{lemma}[Lemma~\ref{lem: ts lower prob}]
If $C_t$ is a non-increasing sequence, $\eta_t <\frac{2}{\sqrt{KC_t^2}}$ and $\eta_{t+1}^2 \leq \eta_t^2(1-13\eta_t^2)$ for all $t$, then the update rule of TS-Prod~\ref{alg: ts-prod} is well defined and satisfies $\pi_{t,i}> C_t^2\eta_t^2$ for any arm and loss sequence at all time steps.
\end{lemma}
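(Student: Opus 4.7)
The plan is to establish the claim by induction on $t$. For the base case, at $t=1$ we have $\pi_{1,i} = 1/K$, and the hypothesis $\eta_1 < 2/\sqrt{KC_1^2}$ (combined with the concrete value of $C_1$ implied by the $\eta_t$ schedule) yields $\pi_{1,i} > C_1^2 \eta_1^2$. For the inductive step, assume $\pi_{t,i} > C_t^2 \eta_t^2$ for every $i$, and aim to establish the corresponding bound at time $t+1$. I would first verify that the update preserves $\sum_i \pi_{t+1,i} = 1$ by a direct computation using $\lambda_{t,i} = \pi_{t,i}\sqrt{\pi_{t,A_t}}\tilde\ell_{t,A_t}/\sum_k \pi_{t,k}^{3/2}$, after which the task reduces to lower bounding each coordinate.

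The analysis splits into two cases according to whether $A_t = i$. When $A_t = i$, I use the identity $\tilde\ell_{t,i} - \lambda_{t,i} = \tilde\ell_{t,i} \cdot \sum_{k\neq i}\pi_{t,k}^{3/2}/\sum_k\pi_{t,k}^{3/2} \leq \tilde\ell_{t,i}$ and substitute the definition of $\tilde\ell_{t,i}$ with worst-case $\ell_{t,i}=1$. Setting $g(\pi) = \pi - 2\eta_t\sqrt{\pi} + 2\eta_t^2 C_t - 13\eta_t^2\pi$, the update yields $\pi_{t+1,i} \geq g(\pi_{t,i})$, and $g$ is monotone increasing on $[C_t^2\eta_t^2, 1]$ whenever $C_t \geq 1$. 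Hence the worst case is $\pi_{t,i} = C_t^2\eta_t^2$, at which the $\eta_t C_t/\sqrt{\pi_{t,i}}$ bias precisely cancels the unit loss, leaving $g(C_t^2\eta_t^2) = C_t^2\eta_t^2(1 - 13\eta_t^2)$. Combining with $\eta_{t+1}^2 \leq \eta_t^2(1-13\eta_t^2)$ and $C_{t+1} \leq C_t$ gives $\pi_{t+1,i} \geq C_{t+1}^2\eta_{t+1}^2$.

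When $A_t = j \neq i$, we have $\tilde\ell_{t,i} = 0$ and $\pi_{t+1,i} = \pi_{t,i} + 2\eta_t\sqrt{\pi_{t,i}}\lambda_{t,i}$, which decreases only if $\tilde\ell_{t,j} < 0$. The magnitude of $\tilde\ell_{t,j}$ in the worst case ($\ell_{t,j}=-1$, $\pi_{t,j} = C_t^2\eta_t^2$) is controlled because the bias saturates exactly at $-\eta_t C_t/\sqrt{\pi_{t,j}} = -1$. A key structural observation is that when both $\pi_{t,i}$ and $\pi_{t,j}$ sit near their lower bound, the simplex constraint $\sum_k \pi_{t,k} = 1$ forces mass onto other coordinates, so $\sum_k \pi_{t,k}^{3/2}$ is bounded away from zero and $|\lambda_{t,i}|$ is correspondingly small. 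Working through the resulting bound and invoking the same step-size and monotonicity conditions closes the induction.

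The main obstacle will be the second case: tracking the interplay between $\pi_{t,i}$, $\pi_{t,j}$ and $\sum_k\pi_{t,k}^{3/2}$ tightly enough to show that the decrease in $\pi_{t,i}$ from an unfavorable $\tilde\ell_{t,j}$ never violates the lower bound. The calibration of the constant $13/2$ appearing both in the $\frac{13}{2}\pi_{t,i}$ correction to the bias and in the step-size condition $\eta_{t+1}^2 \leq \eta_t^2(1-13\eta_t^2)$ should be exactly what makes the induction close simultaneously in both cases; the second-order correction term $\frac{13}{2}\pi_{t,i}$ produces the missing $1 - 13\eta_t^2$ factor that matches the decay of $\eta_t^2$.
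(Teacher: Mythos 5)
Your plan is essentially the paper's proof: induction on $t$, a case split on $A_t=i$ versus $A_t\neq i$, reduction of the first case to the function $g(\pi)=\pi-2\eta_t\sqrt{\pi}+2\eta_t^2C_t-13\eta_t^2\pi$, which over the feasible range is minimized at $\pi=C_t^2\eta_t^2$ with the exact cancellation $g(C_t^2\eta_t^2)=C_t^2\eta_t^2(1-13\eta_t^2)$, followed by the step-size condition $\eta_{t+1}^2\leq\eta_t^2(1-13\eta_t^2)$ and monotonicity of $C_t$ to close the induction. The one step you leave open, the case $A_t\neq i$, is closed in the paper exactly along the lines you anticipate: bound the normalizer $\sum_k\pi_{t,k}^{3/2}\geq K^{-1/2}$ (this is where the hypothesis $\eta_t<2/\sqrt{KC_t^2}$ enters) to obtain $\pi_{t+1,i}>\pi_{t,i}-2C_t\eta_t^2\sqrt{K}\,\pi_{t,i}^{3/2}$, then use concavity of this expression in $\pi_{t,i}$ to reduce to the endpoints $\pi_{t,i}\in\{C_t^2\eta_t^2,\,1\}$ and recover the same $(1-13\eta_t^2)$ slack.
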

\begin{proof}
The proof follows by induction. At $t=1$ the statement is true by definition. Let the claim hold at time $t$, then the probability of an arm only decreases when $\tilde\ell_{t,i}-\lambda_{t,i}$ is positive. We look at the cases where $A_t=i$ and $A_t\neq i$ independently.

{\bf Case $A_t=i$:} 
\begin{align*}
    \pi_{t+1,i}&> \pi_{t,i}-2\eta_t\left(\sqrt{\pi_{t,i}}\ell_{t,i}-\eta_t(C_t-\frac{13}{2}\pi_{t,i})\right)\\
    &> (1-13\eta_t^2)\left(\sqrt{\pi_{t,i}}-\frac{\eta_t}{1-13\eta_t^2}\right)^2+(13-\frac{1}{1-13\eta_t^2})\eta_t^2
\end{align*}
This is a quadratic function with minimizer $\frac{\eta_t^2}{(1-13\eta_t^2)^2}<C_t^2\eta_t^2$, hence the value is lower bounded by setting $\pi_{t,i}$ to $C_t^2\eta_t^2$
\begin{align*}
    \pi_{t+1,i}&> C_t^2\eta_t^2(1 -13\eta_t^2)\geq C_t^2\eta_{t+1}^2\geq C_{t+1}^2\eta_{t+1}^2
\end{align*}
{\bf Case $A_t\neq i$:} 
\begin{align*}
    \pi_{t+1,i}&= \pi_{t,i}-2\eta_t\pi_{t,i}^\frac{3}{2}\left(\frac{\eta_t(C_t-\frac{13}{2}\pi_{t,A_t})-\ell_{t,A_t}\sqrt{\pi_{t,A_t}}}{\sum_{j=1}^K\pi_{tj}^\frac{3}{2}}\right)\\
    &> \pi_{t,i}-2C_t\eta_t^2\pi_{t,i}^\frac{3}{2}\sqrt{K}
\end{align*}
This is a concave function (in $\pi_{t,i}$) so the minimizer is at either at  $\pi_{t,i}= C_t^2\eta_t^2$ or at $\pi_{t,i}= 1$.
For the latter, we have have $\pi_{t+1,i}>\frac{1}{2}$, so the minimum is obtained for the first case.
\begin{align*}
    \pi_{t+1,i}&>  C_t^2\eta_t^2-2C_t^4\eta_t^5\sqrt{K}>C_t^2\eta_t^2(1 -13\eta_t^2)\geq C_{t+1}^2\eta_{t+1}^2\,.
\end{align*}
\end{proof}

\begin{lemma}[Lemma~\ref{lem: per step TS}]
For any time $t$ such that $\pi_{t,i}>C_t^2\eta_t^2$, it holds
\begin{align*}
 \ip{\pi_t-u,\ell_t}+\E_t\left[\frac{D_{TS}(u,\pi_{t+1})}{\eta_t}\right]-\frac{D_{TS}(u,\pi_{t})}{\eta_t} \leq \sum_{i=1}^K\left(2\eta_t\sqrt{\pi_{t,i}}(1-\pi_{t,i})-\left(\frac{1}{\eta_t}-\frac{1}{\eta_{t-1}}\right)\frac{u_i - \pi_{t,i}}{\sqrt{\pi_{t,i}}}\right)\,.
\end{align*}
\end{lemma}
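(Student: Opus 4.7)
The plan is to mirror the structure of the LB-Prod per-step bound (Lemma~\ref{lem: per step}), substituting the $1/2$-Tsallis Bregman divergence and using the moment estimates in Lemma~\ref{lem:ts_moments}. First I would expand
\[
D_{TS}(u,\pi_{t+1}) - D_{TS}(u,\pi_t) = \sum_i\left[\sqrt{\pi_{t+1,i}}-\sqrt{\pi_{t,i}}\right] + \sum_i u_i\left[\pi_{t+1,i}^{-1/2}-\pi_{t,i}^{-1/2}\right]
\]
and substitute $\pi_{t+1,i} = \pi_{t,i}(1-y_i)$ with $y_i := 2\eta_t\delta_i/\sqrt{\pi_{t,i}}$ and $\delta_i := \tilde\ell_{t,i}-\lambda_{t,i}$. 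The hypothesis $\pi_{t,i}>C_t^2\eta_t^2$ together with Lemma~\ref{lem: ts lower prob} guarantees $\pi_{t+1,i}>0$ and, more importantly, $|y_i|$ bounded strictly below $1$, so I can apply the elementary Taylor inequalities $\sqrt{1-y}\leq 1-y/2$ and $(1-y)^{-1/2}\leq 1+y/2+y^2/(2(1-y))$ (both verified by squaring $2\sqrt{1-y}\leq 2-y$). This peels off first-order terms proportional to $\delta_i$ and leaves a nonnegative remainder of order $\eta_t^2\delta_i^2$.

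Next, after dividing by $\eta_t$, the collected first-order contribution reads $\sum_i(u_i/\pi_{t,i}-1)\delta_i$. Taking $\E_t$ and invoking the first moment in Lemma~\ref{lem:ts_moments}, $\E_t[\delta_i]=\pi_{t,i}(\ell_{t,i}-c_t)-\eta_t\sqrt{\pi_{t,i}}(C_t-\tfrac{13}{2}\pi_{t,i})$, the $\pi_{t,i}(\ell_{t,i}-c_t)$ part cancels $\ip{\pi_t-u,\ell_t}$ exactly (since $c_t$ is arm-independent and $\sum_i u_i=\sum_i\pi_{t,i}=1$). Substituting the definition $C_t = \tfrac{13}{2}+(\eta_t^{-2}-\eta_t^{-1}\eta_{t-1}^{-1})$ into the remaining bias contribution yields precisely the desired stabilization term $-(\eta_t^{-1}-\eta_{t-1}^{-1})\sum_i(u_i-\pi_{t,i})/\sqrt{\pi_{t,i}}$ plus an auxiliary $-\tfrac{13\eta_t}{2}\sum_i(u_i-\pi_{t,i})(1-\pi_{t,i})/\sqrt{\pi_{t,i}}$.

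The remaining step is to bound the second-order Taylor remainder together with the auxiliary term against $2\eta_t\sum_i\sqrt{\pi_{t,i}}(1-\pi_{t,i})$. The remainder has the form $O(\eta_t\sum_i u_i\E_t[\delta_i^2/(1-y_i)]/\pi_{t,i}^{3/2})$; using the variance bound $\E_t[\delta_i^2]\leq\tfrac{13}{8}\pi_{t,i}(1-\pi_{t,i})$ from Lemma~\ref{lem:ts_moments} and the constant lower bound on $1-y_i$, this is of the form $O(\eta_t\sum_i u_i(1-\pi_{t,i})/\sqrt{\pi_{t,i}})$. The $u_i$-dependent pieces of the remainder and the auxiliary term are designed to cancel, leaving only the $\pi_t$-dependent piece $2\eta_t\sum_i\sqrt{\pi_{t,i}}(1-\pi_{t,i})$, which is the claimed RHS.

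The main obstacle is the delicate constant accounting in this last step: the coefficient $\tfrac{13}{2}$ in the definition of $C_t$, the variance bound $\tfrac{13}{8}$, and the second-order Taylor remainders on both $\sqrt{\pi_{t+1,i}}$ and $\pi_{t+1,i}^{-1/2}$ must conspire to give exactly the factor $2$. The loose bound from $(1-y)^{-1/2}$ alone is unlikely to suffice; one likely needs to complement it with the strictly negative second-order correction from $\sqrt{1-y}$ (e.g., $\sqrt{1-y}\leq 1-y/2-y^2/(4(2-y))$, which is again verifiable by squaring) so that the two sources of second-order error combine into the correct final coefficient.
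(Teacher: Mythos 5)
Your proposal follows the paper's proof essentially line for line: the same expansion of $D_{TS}(u,\pi_{t+1})-D_{TS}(u,\pi_t)$, the same substitution $\pi_{t+1,i}=\pi_{t,i}(1-y_i)$ with $y_i=2\eta_t(\tilde\ell_{t,i}-\lambda_{t,i})/\sqrt{\pi_{t,i}}$, the same Taylor inequalities $\sqrt{1-y}\leq 1-y/2$ and $(1-y)^{-1/2}\leq 1+y/2+y^2/(2(1-y))$, the same application of the two moment bounds from Lemma~\ref{lem:ts_moments}, and the same cancellation of the $u_i$-dependent auxiliary term against the second-order remainder. As for your worry about the final constant: the paper's own proof also arrives at $\tfrac{13}{2}\eta_t\sqrt{\pi_{t,i}}(1-\pi_{t,i})$ rather than $2\eta_t\sqrt{\pi_{t,i}}(1-\pi_{t,i})$ (via $4\cdot\tfrac{13}{8}=\tfrac{13}{2}$ from the remainder bound under $C_t\geq 4$), so the mismatch with the stated coefficient is an inconsistency in the paper itself — harmless downstream, where only the order of the term matters — and not a gap in your argument.
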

\begin{proof}[Proof of Lemma~\ref{lem: per step TS}]
\begin{align*}
&\ip{\pi_t-u,\ell_t}+\E_t\left[\eta_t^{-1}D_{TS}(u,\pi_{t+1})\right]-\eta_t^{-1}D_{TS}(u,\pi_{t})\\
    &=\ip{\pi_t-u,\ell_t} +\E_t\left[\sum_{i=1}^K\frac{u_i-\pi_{t+1,i}}{\eta_t\sqrt{\pi_{t+1,i}}}-\frac{u_i-\pi_{ti}}{\eta_t\sqrt{\pi_{ti}}} +\frac{1}{\eta_t}\left(2\sqrt{\pi_{t+1,i}}-2\sqrt{\pi_{ti}}\right) \right]\\
    &=\ip{\pi_t-u,\ell_t} +\sum_{i=1}^K\left(\frac{u_i}{\eta_t\sqrt{\pi_{ti}}}\E_t\left[\sqrt{\frac{\pi_{ti}}{\pi_{t+1,i}}}-1 \right]
    +
    \frac{\pi_{ti}}{\eta_t\sqrt{\pi_{ti}}}\E_t\left[\sqrt{\frac{\pi_{t+1,i}}{\pi_{ti}}}-1 \right]\right)\\
    &=\ip{\pi_t-u,\ell_t} +\sum_{i=1}^K\Bigg(\frac{u_i}{\eta_t\sqrt{\pi_{ti}}}\E_t\left[\sqrt{1+\frac{\frac{2\eta_t}{\sqrt{\pi_{ti}}}(\tilde\ell_{ti}-\lambda_{ti})}{1-\frac{2\eta_t}{\sqrt{\pi_{ti}}}(\tilde\ell_{ti}-\lambda_{ti})}}-1 \right]\\
    &\qquad\qquad\qquad\qquad\qquad+
    \frac{\pi_{ti}}{\eta_t\sqrt{\pi_{ti}}}\E_t\left[\sqrt{1-\frac{2\eta_t}{\sqrt{\pi_{ti}}}(\tilde\ell_{ti}-\lambda_{ti})}-1 \right]\Bigg)\\
    &\leq\ip{\pi_t-u,\ell_t} +\sum_{i=1}^K\Bigg(\frac{u_i}{\eta_t\sqrt{\pi_{ti}}}\E_t\left[\frac{\eta_t(\tilde\ell_{ti}-\lambda_{ti})}{\sqrt{\pi_{ti}}}+\frac{2\eta_t^2(\tilde\ell_{ti}-\lambda_{ti})^2}{\pi_{ti}(1-\frac{2\eta_t}{\sqrt{\pi_{ti}}}(\tilde\ell_{ti}-\lambda_{ti}))}\right]\\
    &\qquad\qquad\qquad\qquad\qquad+
    \frac{\pi_{ti}}{\eta_t\sqrt{\pi_{ti}}}\E_t\left[-\frac{\eta_t}{\sqrt{\pi_{ti}}}(\tilde\ell_{ti}-\lambda_{ti})\right]\Bigg)\\
    &\leq\ip{\pi_t-u,\ell_t} +\sum_{i=1}^K\Bigg(\frac{u_i-\pi_{ti}}{\pi_{ti}}\E_t\left[\tilde\ell_{ti}-\lambda_{ti}\right]+
    \frac{4\eta_t u_{ti}}{\pi_{ti}^\frac{3}{2}}\E_t\left[(\tilde\ell_{ti}-\lambda_{ti})^2\right]\Bigg) \quad\text{Setting $C_t\geq 4$}\\
    &\leq\sum_{i=1}^K (\pi_{ti} - u_i)\ell_{ti} + \frac{u_i - \pi_{ti}}{\pi_{ti}}(\pi_{ti}(\ell_{ti} - c_t) - \eta_t C_t\sqrt{\pi_{ti}}(1-\pi_{ti}))
    + \frac{13}{2}\frac{\eta_t u_i}{\sqrt{\pi_{ti}}}(1-\pi_{ti}) \tag{Lemma~\ref{lem:ts_moments}}\\
    &\leq  \sum_{i=1}^K\Bigg(\left(\frac{\pi_{ti}-u_i}{\sqrt{\pi_{ti}}}\right)\eta_t (C_t-\frac{13}{2}\pi_{ti})+
    \frac{13\eta_t u_{i}}{2\sqrt{\pi_{ti}}}(1-\pi_{ti})\Bigg) \\
    &= \sum_{i=1}^K\left(\frac{13}{2}\eta_t\sqrt{\pi_{ti}}(1-\pi_{ti})-(C_t-\frac{13}{2})\eta_t\frac{u_i-\pi_{ti}}{\sqrt{\pi_{ti}}}\right)\,.
\end{align*}
\end{proof}

\subsection{Regret analysis}
\begin{lemma}
\begin{align*}
    \E_t\left[\left(\ell_{ti}\mathbb{I}(A_t=i)-\frac{\pi_{ti}\sqrt{\pi_{t,A_t}}\ell_{t,A_t}}{\sum_{j=1}^K\pi_{tj}^\frac{3}{2}}\right)^2\right]\leq 5\pi_{ti}(1-\pi_{ti})\,.
\end{align*}
\end{lemma}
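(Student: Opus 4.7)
The plan is to reduce the statement directly to the second-moment computation already carried out inside the proof of Lemma~\ref{lem:ts_moments}. Let $S = \sum_{k=1}^K\pi_{tk}^{3/2}$ and write $X$ for the random variable inside the square. Unfolding the expectation over $A_t\sim\pi_t$ shows that $X = \ell_{ti}(1-\pi_{ti}^{3/2}/S)$ on the event $\{A_t=i\}$ and $X = -\pi_{ti}\sqrt{\pi_{tj}}\ell_{tj}/S$ on each event $\{A_t=j\}$ for $j\neq i$. Squaring, weighting by $\pi_{tj}$, and using the crude bound $\ell_{tk}^2\leq 1$ yields
\[
\E_t[X^2]\leq \pi_{ti}\left(\frac{\sum_{j\neq i}\pi_{tj}^{3/2}}{S}\right)^2 + \pi_{ti}^2\,\frac{\sum_{j\neq i}\pi_{tj}^2}{S^2}.
\]
This is exactly the purely probability-dependent expression that arises in the proof of Lemma~\ref{lem:ts_moments} once the losses there are bounded by one. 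The bias added into $\tilde\ell_{ti}$ in that lemma is $\mathcal{F}_{t-1}$-measurable and does not contribute to this variance-style bound, so it can simply be dropped.

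From there I would invoke the estimate already carried out inside that proof. Introducing $\tilde\pi_{tj}=\pi_{tj}/(1-\pi_{ti})$ for $j\neq i$, which is a probability distribution over $[K]\setminus\{i\}$, and factoring $\pi_{ti}(1-\pi_{ti})$ out of the right-hand side reduces matters to two dimensionless ratios. Lemma~\ref{lem: technical 1} bounds the first by $9/8$ and Lemma~\ref{lem: technical 2} bounds the second by $1/2$, giving
\[
\E_t[X^2]\leq \left(\frac{9}{8}+\frac{1}{2}\right)\pi_{ti}(1-\pi_{ti}) = \frac{13}{8}\pi_{ti}(1-\pi_{ti}) \leq 5\,\pi_{ti}(1-\pi_{ti}).
\]

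The only subtlety is verifying that dropping the bias term really does leave the same probability-dependent bound, which is immediate because that term is $\mathcal{F}_{t-1}$-measurable and so only shifts the mean of the expression, not its contribution to this Cauchy--Schwarz-style upper bound. In particular this argument does not require the hypothesis $\pi_{ti}>C_t^2\eta_t^2$ needed in Lemma~\ref{lem:ts_moments}, since the raw losses are already in $[-1,1]$ without any renormalisation. There is no real obstacle, and the slack between $13/8$ and the stated constant $5$ shows that the bound is comfortable.
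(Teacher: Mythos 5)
Your proof is correct, and its first half coincides exactly with the paper's: both condition on $A_t$, square, and bound $\ell_{tk}^2\leq 1$ to arrive at the same two-term expression $\pi_{ti}\bigl(\sum_{j\neq i}\pi_{tj}^{3/2}/S\bigr)^2+\pi_{ti}^2\sum_{j\neq i}\pi_{tj}^2/S^2$ with $S=\sum_k\pi_{tk}^{3/2}$. You then diverge. You recycle the estimate inside Lemma~\ref{lem:ts_moments} --- the normalisation $\tilde\pi_{tj}=\pi_{tj}/(1-\pi_{ti})$ together with Lemmas~\ref{lem: technical 1} and~\ref{lem: technical 2} --- to obtain the sharper constant $13/8$. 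The paper instead keeps this lemma elementary and self-contained: it first observes that the expression is at most $2\pi_{ti}$, which already gives the claim when $\pi_{ti}\leq 3/5$, and for $\pi_{ti}>3/5$ it uses $a^2\leq a^{2/3}$ for $a\in[0,1]$, $\bigl(\sum_j a_j^{3/2}\bigr)^{2/3}\leq\sum_j a_j$ and $S\geq\pi_{ti}^{3/2}$ to bound the bracket by $\bigl(5/3+(5/3)^{3/2}\bigr)(1-\pi_{ti})\leq 5(1-\pi_{ti})$. Your route buys a better constant at the price of importing the two technical lemmas; the paper's buys independence from them. One small quibble: the bias in Lemma~\ref{lem:ts_moments} is not $\mathcal{F}_{t-1}$-measurable (it is multiplied by $\mathbb{I}(A_t=i)$), so ``it only shifts the mean'' is not the right justification for discarding it --- but this is immaterial, since the statement you are proving contains no bias term at all, and the only fact the second-moment computation needs is that the per-arm loss values lie in $[-1,1]$, which, as you correctly note, holds for the raw losses without the hypothesis $\pi_{ti}>C_t^2\eta_t^2$.
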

\begin{proof}
\begin{align*}
    \E_t\left[\left(\ell_{ti}\mathbb{I}(A_t=i)-\frac{\pi_{ti}\sqrt{\pi_{t,A_t}\ell_{t,A_t}}}{\sum_{j=1}^K\pi_{tj}^\frac{3}{2}}\right)^2\right]&=\pi_{ti}\left(\frac{\sum_{j\neq i}\pi_{tj}^\frac{3}{2}}{\sum_{j=1 }^K\pi_{tj}^\frac{3}{2}}\right)^2 \ell_{ti}^2+ \pi_{ti}\frac{\pi_{ti}\sum_{j\neq i}\pi_{tj}^2\ell_{tj}^2}{\left(\sum_{j=1 }^K\pi_{tj}^\frac{3}{2}\right)^2}\\
    &\leq 2\pi_{ti}\,.
\end{align*}
If $\pi_{ti}\leq \frac{3}{5}$, the claim is proven, it remains to look at the case $\pi_{ti}>\frac{3}{5}$
\begin{align*}
    \left(\frac{\sum_{j\neq i}\pi_{tj}^\frac{3}{2}}{\sum_{j=1 }^K\pi_{tj}^\frac{3}{2}}\right)^2 + \frac{\pi_{ti}\sum_{j\neq i}\pi_{tj}^2}{\left(\sum_{j=1 }^K\pi_{tj}^\frac{3}{2}\right)^2}&\leq \left(\frac{\sum_{j\neq i}\pi_{tj}^\frac{3}{2}}{\sum_{j=1 }^K\pi_{tj}^\frac{3}{2}}\right)^\frac{2}{3} + \frac{\sqrt{\sum_{j\neq i}\pi_{tj}^2}}{\sum_{j=1 }^K\pi_{tj}^\frac{3}{2}} \\
    &\leq \frac{\sum_{j\neq i}\pi_{tj}}{\pi_{ti}}+\frac{\sum_{j\neq i}\pi_{tj}}{\pi_{ti}^\frac{3}{2}}\leq (1-\pi_{ti})\left(\frac{5}{3}+\left(\frac{5}{3}\right)^\frac{3}{2}\right)\\
    &\leq 5(1-\pi_{ti})\,.
\end{align*}
\end{proof}
\begin{lemma}
\begin{align*}
    \left(\sqrt{\pi_{ti}}(1-\pi_{ti})-\frac{\pi_{ti}\sum_{j=1}^K(\pi_{tj}-\pi_{tj}^2)}{\sum_{j=1}^K\pi_{tj}^\frac{3}{2}}\right)^2\leq \pi_{ti}K
\end{align*}
\end{lemma}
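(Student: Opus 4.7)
The plan is to reduce the squared inequality to an absolute-value bound $|E|\leq\sqrt{\pi_{ti}K}$ on the inner expression
\[
E \;=\; \sqrt{\pi_{ti}}(1-\pi_{ti}) \;-\; \frac{\pi_{ti}\bigl(1-\sum_{j}\pi_{tj}^{2}\bigr)}{\sum_{j}\pi_{tj}^{3/2}},
\]
where I have used $\sum_{j}(\pi_{tj}-\pi_{tj}^{2}) = 1-\sum_{j}\pi_{tj}^{2}$ since $\pi_{t}\in\Delta([K])$. Both terms inside $E$ are manifestly non-negative (the numerator $1-\sum_j\pi_{tj}^2\geq 0$ is the variance-like quantity $\sum_j\pi_{tj}(1-\pi_{tj})$), so I will bound $E$ separately from above and below.

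The upper bound $E\leq\sqrt{\pi_{ti}K}$ is immediate: drop the subtracted non-negative term and use $\sqrt{\pi_{ti}}(1-\pi_{ti})\leq\sqrt{\pi_{ti}}\leq\sqrt{\pi_{ti}K}$, where the last step uses $K\geq 1$.

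The lower bound $E\geq -\sqrt{\pi_{ti}K}$ is the slightly less trivial direction. Dropping the first non-negative term and using $1-\sum_{j}\pi_{tj}^{2}\leq 1$, it reduces to showing $\pi_{ti}/\sum_{j}\pi_{tj}^{3/2}\leq\sqrt{\pi_{ti}K}$. The key ingredient is the universal lower bound
\[
\sum_{j=1}^{K}\pi_{tj}^{3/2}\;\geq\;\frac{1}{\sqrt{K}},
\]
which follows in one line from Jensen's inequality applied to the convex map $x\mapsto x^{3/2}$ against the uniform distribution on $[K]$ (equivalently, power-mean between exponents $3/2$ and $1$; or Cauchy--Schwarz via $1=\sum_j \pi_{tj}^{3/4}\pi_{tj}^{1/4}\leq\sqrt{\sum_j\pi_{tj}^{3/2}}\,\sqrt{\sum_j\pi_{tj}^{1/2}}$ combined with $\sum_j\pi_{tj}^{1/2}\leq\sqrt{K}$). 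Given this inequality, $\pi_{ti}/\sum_j\pi_{tj}^{3/2}\leq\pi_{ti}\sqrt{K}=\sqrt{\pi_{ti}}\sqrt{\pi_{ti}K}\leq\sqrt{\pi_{ti}K}$, as required.

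The main obstacle, if one can even call it that, is recognizing the right power-mean lower bound on $\sum_j\pi_{tj}^{3/2}$; after that every step is either non-negativity or a one-line rearrangement, and squaring the resulting $|E|\leq\sqrt{\pi_{ti}K}$ yields the claim.
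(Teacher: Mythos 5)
Your proof is correct and follows essentially the same route as the paper's: both bound the two non-negative pieces of the bracket separately and rely on the power-mean bound $\sum_j\pi_{tj}^{3/2}\geq 1/\sqrt{K}$ to control the second one. The only difference is presentational — you make that key lower bound explicit, whereas the paper leaves it implicit in its final step $\pi_{ti}\max\{1,(\sqrt{\pi_{ti}}/\sum_j\pi_{tj}^{3/2})^2\}\leq\pi_{ti}K$.
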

\begin{proof}
Pulling $\pi_{ti}$, the term inside the bracket is upper bounded by 1, hence 
\begin{align*}
    \left(\sqrt{\pi_{ti}}(1-\pi_{ti})-\frac{\pi_{ti}\sum_{j=1}^K(\pi_{tj}-\pi_{tj}^2)}{\sum_{j=1}^K\pi_{tj}^\frac{3}{2}}\right)^2&\leq
    \pi_{ti}\max\left\{1,\left(\frac{\sqrt{\pi_{ti}}}{\sum_{j=1}^K\pi_{tj}^\frac{3}{2}}\right)^2\right\}\\
    &\leq \pi_{ti}K\,.
\end{align*}
\end{proof}

\subsection{Self-bounding trick}
\label{sec:self-bounding}
We now quickly describe how to apply the self-bounding trick from \citet{zimmert2021tsallis}. Assume that we have a regret bound of the form
\begin{align*}
    \sum_{t=1}^T \sum_{i\neq i^*} \pi_{t,i}\Delta_i \leq \sum_{t=1}^T\sum_{i\neq i^*} \frac{a\pi_{t,i} + b\sqrt{\pi_{t,i}}}{\sqrt{t}},
\end{align*}
for some positive $a$ and $b$.
The above inequality implies
\begin{align*}
    \frac{1}{3}\sum_{t=1}^T \sum_{i\neq i^*} \pi_{t,i}\Delta_i \leq \sum_{t=1}^T\sum_{i\neq i^*} \pi_{t,i}\left(\frac{a}{\sqrt{t}} - \frac{\Delta_i}{3}\right) + \sqrt{\pi_{t,i}}\left(\frac{b}{\sqrt{t}} - \frac{\Delta_i\sqrt{\pi_{t,i}}}{3}\right).
\end{align*}
For a fixed $i$ the term $\left(\frac{a}{\sqrt{t}} - \frac{\Delta_i}{3}\right) \leq 0$ if $t \geq \frac{9a^2}{\Delta_i^2}$ and so the maximum regret from
\begin{align*}
    \sum_{t=1}^T \pi_{t,i}\left(\frac{a}{\sqrt{t}} - \frac{\Delta_i}{3}\right) \leq \sum_{t=1}^{\lfloor\frac{9a^2}{\Delta_i^2}\rfloor} \frac{a}{\sqrt{t}} \leq \frac{6a^2}{\Delta_i}.
\end{align*}
Further the term $\sqrt{\pi_{t,i}}\left(\frac{b}{\sqrt{t}} - \frac{\Delta_i\sqrt{\pi_{t,i}}}{3}\right) \leq \frac{2 b^2}{t\Delta_i}$ for $t \geq \frac{4b^2}{\Delta_i^2}$. This implies
\begin{align*}
    \sum_{t=1}^T \sqrt{\pi_{t,i}}\left(\frac{b}{\sqrt{t}} - \frac{\Delta_i\sqrt{\pi_{t,i}}}{3}\right) \leq \sum_{t=1}^{\lfloor \frac{4b^2}{\Delta_i^2}\rfloor} \frac{b}{\sqrt{t}} + \sum_{t=1}^T \frac{2b^2}{\Delta_i t} \leq \frac{8b^2}{\Delta_i} + \frac{2b^2\log(T)}{\Delta_i}.
\end{align*}
Combining the two bounds we have
\begin{align*}
    \sum_{t=1}^T \sum_{i\neq i^*} \pi_{t,i}\Delta_i \leq O\left(\sum_{i\neq i^*}\frac{b^2\log(T)}{\Delta_i} + \frac{a^2}{\Delta_i}\right).
\end{align*}

\section{Proof of Theorem~\ref{thm:tsmd_perturbed}}
We decompose one step of the regret as
\subsection{Bias of the estimator}
We choose $\eta_t = \frac{1}{\sqrt{t}}$ and $\gamma_t = \frac{\sqrt{K}}{t}$. Equation 2 in \citet{kocak2014efficient} implies that 
\begin{align*}
    \sum_{i=1}^K\E[\pi_{t,i}\hat \ell_{t,i}] = \sum_{i=1}^K \E[\pi_{t,i}\ell_{t,i}] - \gamma_t\sum_{i=1}^K \E\left[\frac{\pi_{t,i} \ell_{t,i}}{\pi_{t,i} + \gamma_t}\right] - \frac{1-\xi_t}{\eta_{t+1}}\sum_{i=1}^K \E[\sqrt{\pi_{t+1,i}}].
\end{align*}
and so the total bias contributed from the biased estimators part of the loss, $\frac{\ell_{t,i}\mathbf{1}(I_t=i)}{\pi_{t,i} + \gamma_t}$, is at most $\log(T)K^{3/2}$. The term $\frac{1-\xi_t}{\eta_{t+1}}\sum_{i=1}^K \E[\sqrt{\pi_{t+1,i}}]$ in the regret bound will end up contributing $O(\frac{1}{\sqrt{t}}\sum_{i\neq i^*}\E[\sqrt{\pi_{t,i}}])$ because the choice of $\xi_t$ and $\eta_t$ imply $(1-\xi_t)/\eta_{t+1} = O(\frac{1}{\sqrt{t}})$ and $\frac{1-\xi_t}{\eta_{t+1}}\sqrt{\pi_{t,i^*}}$ is canceled out by $ - \langle u, \frac{1-\xi_t}{\eta_{t+1}}\frac{1}{\sqrt{\pi_{t}}}\rangle = -\frac{1-\xi_t}{\eta_{t+1}}\frac{1}{\sqrt{\pi_{t,i^*}}}$.
\subsection{Proof of Lemma~\ref{lem:perturbation_bound}}
We work under the following assumption which is satisfied with the choice of $\eta_t$ and $\gamma_t$.
\begin{assumption}
\label{assm:bounded}
Assume that for all $t \in [T], i\in [K]$ it holds that $|\eta_t\sqrt{\pi_{t,i}\hat L_{t,i}}| \leq \frac{1}{4}$.
\end{assumption}
\begin{proof}[Lemma~\ref{lem:perturbation_bound}]
Let $\tilde \ell = \ell + \epsilon$, we consider
\begin{align}
\label{eq:ts_lin}
    \frac{1}{(1 + \eta \sqrt{\pi}(\ell+\epsilon))^2} = 1 - 2\eta\sqrt{\pi}\ell.
\end{align}
First we show that under Assumption~\ref{assm:bounded} it there exists an $\epsilon$ s.t. $|\epsilon| \leq O(|\ell|)$ and Equation~\ref{eq:ts_lin} is satisfied. To see this, first consider the case that $\ell \geq 0$. 
For $\epsilon = 0$ under Assumption~\ref{assm:bounded} we have that
\begin{align*}
    \frac{1}{(1 + \eta\sqrt{\pi}\ell)^2} \leq 1 - 2\eta\sqrt{\pi}\ell.
\end{align*}
Further, for $\epsilon = -\ell$, we have that $1 \geq 2\eta\sqrt{\pi}\ell$. Since $\frac{1}{(1 + \eta\sqrt{\pi}(\ell+\epsilon))^2}$ is continuous in $\epsilon$ the Intermediate Value theorem implies the claim. The case where $\ell < 0$ is handled similarly.
This implies $\tilde \ell = \Theta(\ell)$. We now have
\begin{align*}
    \frac{1}{(1 + \eta \sqrt{\pi}(\ell+\epsilon))^2} &= 1 - 2\eta\sqrt{\pi}\ell \iff\\
    \frac{1}{(1 + \eta \sqrt{\pi}\tilde\ell)^2} &= 1 - 2\eta\sqrt{\pi}(\tilde\ell - \epsilon) \iff\\
     \frac{1}{(1 + \eta \sqrt{\pi}\tilde\ell)^2} - 1 + 2\eta\sqrt{\pi}\tilde\ell &= 2\eta\sqrt{\pi}\epsilon \iff\\
      \frac{1 - (1 + \eta\sqrt{\pi}\tilde\ell)^2}{(1 + \eta \sqrt{\pi}\tilde\ell)^2} + 2\eta\sqrt{\pi}\tilde\ell &= 2\eta\sqrt{\pi}\epsilon \iff\\
      -\frac{(2 + \eta\sqrt{\pi}\tilde\ell)}{(1 + \eta \sqrt{\pi}\tilde\ell)^2} + 2\tilde\ell  &= 2\epsilon \iff\\
      \frac{-2 - \eta\sqrt{\pi}\tilde\ell + 2(1 + \eta\sqrt{\pi}\tilde\ell)^2}{(1 + \eta\sqrt{\pi}\tilde\ell)^2} &= 2\epsilon \iff\\
      \frac{3\eta\sqrt{\pi}\tilde\ell + 2\eta^2\pi\tilde\ell^2}{(1 + \eta\sqrt{\pi}\tilde\ell)^2} &= 2\epsilon,
\end{align*}
and so we have $\epsilon = O(\eta\sqrt{\pi}\ell)$.
\end{proof}
We can now bound the first and second moments of $\epsilon_{t,i}$ s.t.
\begin{align*}
    \frac{1}{(1 + \eta_t\sqrt{\pi_{t,i}}(\hat L_{t,i} + \epsilon_{t,i}))^2} = 1 - 2\eta_t\sqrt{\pi_{t,i}}\hat L_{t,i},
\end{align*}
which is sufficient for reducing the IC regret to the 1/2-Tsallis INF regret.
\begin{equation}
\label{eq:perturbation_moments}
\begin{aligned}
    \E[\epsilon_{t,i}] &\leq \eta_t\sqrt{\pi_{t,i}}\\
    \E[\epsilon^2_{t,i}] &\leq \eta_t^2.
\end{aligned}
\end{equation}

\subsection{Stochastic bound}
Let $\dts{t}{u}{w} = \frac{1}{\eta_t}D_{TS}(u,v)$ and let 
\begin{align*}
    \tilde \pi_{t+1,i} = \frac{\pi_{t,i}}{(1+\eta_{t+1}\sqrt{\pi_{t,i}}(\hat \ell_{t,i} + \epsilon_{t,i} - \lambda_t))^2},
\end{align*}
where $\lambda_t = \ell_{t,A_t}$. We note that $\pi_{t+1}$ is now the projection of $\tilde\pi_{t+1}$ onto the simplex. Further by the 3-point rule for Bregman divergence we have that
\begin{align*}
    \langle \hat L_t, \pi_t - u \rangle &= \dts{t}{u}{\pi_t} - \dts{t}{u}{\tilde\pi_{t+1}} + \dts{t}{\pi_t}{\tilde\pi_{t+1}}\\
    &\leq \dts{t}{u}{\pi_t} - \dts{t}{u}{\pi_{t+1}} + \dts{t}{\pi_t}{\tilde\pi_{t+1}}.
\end{align*}
\paragraph{Penalty term}
\begin{lemma}[Lemma~\ref{lem:omd_penalty}]
For stochastic losses the penalty term is bounded as follows
\begin{align*}
    \E[\dts{t+1}{u}{\pi_{t+1}} - \dts{t}{u}{\pi_{t+1}}] &\leq O\Bigg(\frac{\E\left[\left(\sum_{i\neq i^*}\pi_{t+1,i}\right)^2\right]\sqrt{K}\log(t)}{\sqrt{t}}\\
    &+ \frac{1}{\sqrt{t}}\land\frac{\E\left[\left(\sum_{i\neq i^*}\pi_{t+1,i}\right)^2\right]\log(KT)}{\sqrt{t}}\Bigg),
\end{align*}
where $\dts{t}{u}{v} = \frac{1}{\eta_t}D_{TS}(u,v)$ and $\eta_t = \frac{1}{\sqrt{t}}, \gamma_{t} = O(\frac{\sqrt{K}}{t})$.
\end{lemma}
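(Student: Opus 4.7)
The plan is to exploit the simple form of the step-size schedule. Writing $\dts{t}{u}{v} = D_{TS}(u,v)/\eta_t$, the penalty unfolds as
\[
\dts{t+1}{u}{\pi_{t+1}} - \dts{t}{u}{\pi_{t+1}} = \left(\tfrac{1}{\eta_{t+1}} - \tfrac{1}{\eta_t}\right)D_{TS}(u,\pi_{t+1}) = \left(\sqrt{t+1}-\sqrt{t}\right)D_{TS}(u,\pi_{t+1}),
\]
so the prefactor is $\Theta(1/\sqrt{t})$ and the task reduces to upper-bounding $\E[D_{TS}(e_{i^*},\pi_{t+1})]$ in the stochastic regime, where $u=e_{i^*}$ is the indicator of the unique optimal arm.

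Expanding the Bregman divergence of the $1/2$-Tsallis entropy $F(\pi)=-2\sum_i\sqrt{\pi_i}$, one obtains $D_{TS}(u,v)=\sum_i(\sqrt{u_i}-\sqrt{v_i})^2/\sqrt{v_i}$, and hence, setting $p_{t+1}:=1-\pi_{t+1,i^*}=\sum_{i\neq i^*}\pi_{t+1,i}$,
\[
D_{TS}(e_{i^*},\pi_{t+1}) = \frac{(1-\sqrt{\pi_{t+1,i^*}})^2}{\sqrt{\pi_{t+1,i^*}}} + \sum_{i\neq i^*}\sqrt{\pi_{t+1,i}}.
\]
The algebraic identity $(1-\sqrt{1-p})^2 = p^2/(1+\sqrt{1-p})^2$ rewrites the first summand as $p_{t+1}^2/[\sqrt{1-p_{t+1}}(1+\sqrt{1-p_{t+1}})^2]$, which is $\Theta(p_{t+1}^2)$ whenever $p_{t+1}$ is bounded away from $1$; this produces the $\E[p_{t+1}^2]\log(KT)/\sqrt{t}$ branch inside the $\min$. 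On the complementary event where $\pi_{t+1,i^*}$ is small, the trivial bound of order $1$ dominates (yielding the $1/\sqrt{t}$ branch), and a union bound combined with the implicit-exploration floor $\pi_{t+1,i}\geq \gamma_t=\Theta(\sqrt{K}/t)$ and the moment bounds on $\epsilon_{t,i}$ from Lemma~\ref{lem:perturbation_bound} keeps the expectation of the blowup bounded, with the $\log(KT)$ factor arising from the union over rounds and arms.

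The main obstacle is the second summand $\sum_{i\neq i^*}\sqrt{\pi_{t+1,i}}$, where the target scaling $\sqrt{K}\log(t)\cdot p_{t+1}^2$ is strictly sharper than the $\sqrt{Kp_{t+1}}$ that a direct Cauchy--Schwarz yields. My plan is a dyadic decomposition of the suboptimal arms by the scale of $\pi_{t+1,i}$ across the range $[\gamma_t,1]$, giving $O(\log t)$ geometric bins since $\gamma_t^{-1}=O(t/\sqrt{K})$. Within each bin, combining Cauchy--Schwarz with the floor inequality $\sqrt{\pi_{t+1,i}}\leq \pi_{t+1,i}/\sqrt{\gamma_t}$ (or, for higher bins, the elementary $\sqrt{x}\leq x^2/\gamma^{3/2}$ within an $O(1)$-scale window) controls the per-bin sum by a $\sqrt{K}$ multiple of a quadratic in the bin mass; summing across $O(\log t)$ bins and taking outer expectation yields the $\sqrt{K}\log(t)\E[p_{t+1}^2]$ term. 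The hardest step will be extracting the quadratic rather than linear dependence on $p_{t+1}$ in the dyadic argument, which relies on the near-concentration of $\pi_{t+1,i^*}$ around $1$ in the stochastic regime so that large individual $\pi_{t+1,i}$ already imply proportionally large $p_{t+1}$; this near-concentration is itself a consequence of the adversarial-case $O(\sqrt{KT})$ bound plugged into Markov's inequality on $p_t$. Assembling the two summand bounds and multiplying by the $1/\sqrt{t}$ prefactor produces the claim, which the main theorem (via the self-bounding trick of Section~\ref{sec:self-bounding}) converts into $\sum_{i\neq i^*}\log(T)/\Delta_i$.
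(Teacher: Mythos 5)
There are two genuine gaps here, one in each half of your plan. First, your handling of the term $\frac{(1-\sqrt{\pi_{t+1,i^*}})^2}{\sqrt{\pi_{t+1,i^*}}}$ relies on a probability floor $\pi_{t+1,i}\geq\gamma_t$ that does not exist: in Equation~\ref{eq:tsmd_perturbed} the parameter $\gamma_t$ sits in the denominator of the loss estimator (implicit exploration \`a la \citet{kocak2014efficient}), it is \emph{not} a uniform mixture, so no lower bound on the iterates follows and your ``trivial bound of order $1$'' on the bad event is unavailable --- the factor $1/\sqrt{\pi_{t+1,i^*}}$ is a priori unbounded. The paper's actual mechanism is to unroll the multiplicative update into the exact identity
\begin{align*}
\frac{1}{\sqrt{\pi_{t+1,i^*}}}=\sqrt{K}+\sum_{s=1}^{t}\eta_s\bigl(\hat L_{s,i^*}+\epsilon_{s,i^*}\bigr),
\end{align*}
so that $\frac{(1-\sqrt{\pi_{t+1,i^*}})^2}{\sqrt{\pi_{t+1,i^*}}}\leq (\sum_{i\neq i^*}\pi_{t+1,i})^2\bigl(\sqrt{K}+\sum_{s\leq t}\eta_s(\hat L_{s,i^*}+\epsilon_{s,i^*})\bigr)$; the cumulative sum is then controlled by normalizing $\E[\ell_{t,i^*}]=0$ (WLOG for the optimal arm), by $\sum_s\eta_s^2=O(\log t)$ for the mean of the $\epsilon$ terms, and by Freedman's inequality (this is where the $\log(KT)$ comes from, since the expectation cannot be pushed through the product with $(1-\sqrt{\pi_{t+1,i^*}})^2$). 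Your proposal contains no substitute for this identity, which is the heart of the lemma.

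Second, your dyadic decomposition targets a false inequality. You cannot bound $\sum_{i\neq i^*}\sqrt{\pi_{t+1,i}}$ by $O(\sqrt{K}\log(t)\,p_{t+1}^2)$ with $p_{t+1}=\sum_{i\neq i^*}\pi_{t+1,i}$: spreading mass $p$ uniformly over the suboptimal arms gives $\sum_{i\neq i^*}\sqrt{\pi_i}=\sqrt{(K-1)p}$, which for small $p$ exceeds $\sqrt{K}\log(t)p^2$ by an arbitrarily large factor, and no concentration of $\pi_{t,i^*}$ near $1$ can reverse the direction ($\sqrt{p}\gg p^2$ precisely when $p$ is small). The paper does not attempt this: the $\bigl(\tfrac{1}{\eta_{t+1}}-\tfrac{1}{\eta_t}\bigr)\sum_{i\neq i^*}\sqrt{\pi_{t+1,i}}$ piece is deliberately left in the form $O(\tfrac{1}{\sqrt{t}}\sum_{i\neq i^*}\sqrt{\pi_{t,i}})$ and absorbed later by the self-bounding trick of Section~\ref{sec:self-bounding} at the level of Theorem~\ref{thm:tsmd_perturbed} (the lemma's displayed bound is stated for the $i^*$-component of the penalty; the $\sqrt{\pi}$ sum reappears explicitly in the theorem's final decomposition). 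You should drop the dyadic argument, carry that term forward unchanged, and redirect the effort to the unrolling identity and the Freedman step above.
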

\begin{proof}
for $u = e_{i^*}$:
\begin{align*}
    \dts{t+1}{u}{\pi_{t+1}} - \dts{t}{u}{\pi_{t+1}} &= -\left(\frac{1}{\eta_{t+1}} - \frac{1}{\eta_{t}}\right)\\
    &+ \left(\frac{1}{\eta_{t+1}} - \frac{1}{\eta_{t}}\right)\left(\frac{1}{\sqrt{\pi_{t+1,i^*}}} - 1\right)\\
    &+\left(\frac{1}{\eta_{t+1}} - \frac{1}{\eta_{t}}\right)\sum_{i=1}^K \sqrt{\pi_{t+1,i}}\\
    &=\left(\frac{1}{\eta_{t+1}} - \frac{1}{\eta_{t}}\right)\frac{1 - \sqrt{\pi_{t+1,i^*}} + \pi_{t+1,i^*}}{\sqrt{\pi_{t+1, i^*}}}\\
    &+ \left(\frac{1}{\eta_{t+1}} - \frac{1}{\eta_{t}}\right)\sum_{i\neq i^*} \sqrt{\pi_{t+1,i}}\\
    &= \left(\frac{1}{\eta_{t+1}} - \frac{1}{\eta_{t}}\right)\frac{(1 - \sqrt{\pi_{t+1,i^*}})^2}{\sqrt{\pi_{t+1,i^*}}}\\
    &+\left(\frac{1}{\eta_{t+1}} - \frac{1}{\eta_{t}}\right)\sum_{i\neq i^*} \sqrt{\pi_{t+1,i}}.
\end{align*}
From the update in Equation~\ref{eq:tsmd_perturbed} we have
\begin{align*}
    \frac{1}{\sqrt{\pi_{t+1,i^*}}} = \sqrt{K} + \sum_{s=1}^t \eta_s(\hat L_{s,i^*} + \epsilon_{s, i^*}),
\end{align*}
which implies
\begin{align*}
    \frac{(1 - \sqrt{\pi_{t+1,i^*}})^2}{\sqrt{\pi_{t+1,i^*}}} \leq \sqrt{K}(1-\sqrt{\pi_{t+1,i^*}})^2
    +(1-\sqrt{\pi_{t+1,i^*}})^2\sum_{s=1}^t \eta_s(\hat L_{s,i^*} + \epsilon_{s, i^*}).
\end{align*}
First we bound $(1-\sqrt{\pi_{t+1,i^*}})^2$:
\begin{align*}
    (1-\sqrt{\pi_{t+1,i^*}})^2 &= \left(1 - \sqrt{1-\sum_{i\neq i^*} \pi_{t+1,i}}\right)^2 = \left(\frac{\sum_{i\neq i^*} \pi_{t+1, i}}{1 + \sqrt{1-\sum_{i\neq i^*} \pi_{t+1,i}}}\right)^2\\
    &\leq \left(\sum_{i\neq i^*} \pi_{t+1, i}\right)^2.
\end{align*}
In the stochastic case WLOG we can take $\E[\ell_{t,i^*}] = 0, \forall t\in[T]$.
We have
\begin{align*}
    \hat L_{t,i^*} &= \hat \ell_{t,i^*} - \frac{1 - \xi_t}{\eta_{t+1}\sqrt{\pi_{t,i^*}}} - \frac{1}{\sum_{i'} \pi_{t,i'}^{3/2}}\left(\sum_{i'}\sqrt{\pi_{t,i'}}\hat\ell_{t,i'} - \frac{1 - \xi_t}{\eta_{t+1}}\right)\\
    &\leq \hat \ell_{t,i^*} + \left(\frac{1}{\eta_{t+1}} - \frac{1}{\eta_{t}}\right)\frac{1}{\sum_{i'} \pi_{t,i'}^{3/2}} - \frac{1 - \xi_t}{\eta_{t+1}\sqrt{\pi_{t,i^*}}}.
\end{align*}
Next we are going to bound $\eta_s \epsilon_{s,i^*}$ using a concentration argument. We have
\begin{align*}
    \sum_{s=1}^t \eta_s\E[\epsilon_{s, i^*}] \leq \sum_{s=1}^t \eta_s^2.
\end{align*}
the bound on $\epsilon_{t,i}$.
Further, we need to use a concentration inequality for $\sum_{s=1}^t \eta_s \epsilon_{s,i}$ since we can not push the expectation through $(1-\sqrt{\pi_{t+1, i^*}})^2$. Recall that $\epsilon_{s,i} = O(\eta_s\sqrt{\pi_{s,i}}\hat L_{s,i}) = O(1)$. Freedman's inequality now implies that
\begin{align*}
    \sum_{s=1}^t \eta_s \epsilon_{s,i} \leq O\left(\sum_{s=1}^t\eta_s^2 + \sqrt{\sum_{s=1}^t\eta_s^4\log(TK)} + \log(TK)\right) = O(\log(KT)),
\end{align*}
w.p. $1 - 1/(KT)$ for any fixed $i\in[K],t\in[T]$. Finally, the remaining term of $\hat L_{t,i^*}$ is $\hat \ell_{t,i^*}$ which is bounded in expectation by $0$ from our assumption that $\E[\ell_{t,i^*}] = 0$.
We have
\begin{align*}
    &\E\left[\left(\frac{1}{\eta_{t+1}} - \frac{1}{\eta_{t}}\right)\frac{(1 - \sqrt{\pi_{t+1,i^*}})^2}{\sqrt{\pi_{t+1,i^*}}}\right]\\
    \leq & \sqrt{K}\left(\frac{1}{\eta_{t+1}} - \frac{1}{\eta_{t}}\right)\E\left[\left(\sum_{i\neq i^*}\pi_{t+1,i}\right)^2\right] + \left(\frac{1}{\eta_{t+1}} - \frac{1}{\eta_{t}}\right)\E\left[\left(\sum_{i\neq i^*}\pi_{t+1,i}\right)^2\sum_{s=1}^t \eta_s(\hat L_{s,i^*} + \epsilon_{s, i^*})\right]\\
    \leq & \sqrt{K}\left(\frac{1}{\eta_{t+1}} - \frac{1}{\eta_{t}}\right)\E\left[\left(\sum_{i\neq i^*}\pi_{t+1,i}\right)^2\right] + \sum_{s=1}^t \frac{\eta_s}{\sqrt{t}}\E[\hat \ell_{s,i^*}]\\
    &+ \left(\frac{1}{\eta_{t+1}} - \frac{1}{\eta_{t+1}}\right)\E\left[\left(\sum_{i\neq i^*}\pi_{t+1,i}\right)^2\sum_{s=1}^t\frac{1}{s}\frac{1}{\sum_{i'} \pi_{s,i'}^{3/2}}\right]\\
    &+ O\left(\E\left[\left(\sum_{i\neq i^*}\pi_{t+1,i}\right)^2\right]\frac{\log(KT)}{\sqrt{t}}\right)\\
    \leq &O\left(\frac{\E\left[\left(\sum_{i\neq i^*}\pi_{t+1,i}\right)^2\right](\sqrt{K}\log(t) + \log(KT))}{\sqrt{t}}\right).
\end{align*}
Further, if we can directly bound $\E\left[\left(\sum_{i\neq i^*}\pi_{t+1,i}\right)^2\sum_{s=1}^t \eta_s(\hat L_{s,i^*} + \epsilon_{s, i^*})\right] \leq \sum_{s=1}^t \E\left[\eta_s(\hat L_{s,i^*} + \epsilon_{s, i^*})\right]$ which would imply the bound
\begin{align*}
    \E\left[\left(\frac{1}{\eta_{t+1}} - \frac{1}{\eta_{t}}\right)\frac{(1 - \sqrt{\pi_{t+1,i^*}})^2}{\sqrt{\pi_{t+1,i^*}}}\right] \leq O\left(\frac{\E\left[\left(\sum_{i\neq i^*}\pi_{t+1,i}\right)^2\right]\sqrt{K}\log(t)}{\sqrt{t}} + \frac{1}{\sqrt{t}}\right).
\end{align*}
\end{proof}

\paragraph{Stability term.}
Recall that the stability term is $\dts{t}{\pi_t}{\tilde\pi_{t+1}}$. This term is bounded in a standard way. We proceed to do so as follows for any $t \geq 4\sqrt{K}$:
\begin{lemma}[Lemma~\ref{lem:omd_stability}]
For stochastic losses the stability term is bounded as follows
\begin{align*}
    \E[\dts{t}{\pi_t}{\tilde\pi_{t+1}}] \leq O\left(\frac{1}{\sqrt{t}}\sum_{i=1}^K \sqrt{\pi_{t,i}}(1-\pi_{t,i}) + \frac{K\sqrt{\pi_{t,i}}}{t^2} + \frac{K}{t}\right),
\end{align*}
where $\dts{t}{u}{v} = \frac{1}{\eta_t}D_{TS}(u,v)$.
\end{lemma}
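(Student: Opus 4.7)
My plan is to carry out the standard one-step stability analysis for OMD with the $1/2$-Tsallis entropy potential, but with the loss vector $g_{t,i} = \hat L_{t,i} + \epsilon_{t,i}$ obtained from the perturbed update. Because $D_{TS}$ admits a clean closed form, the calculation reduces to moment bounds on the three pieces of $g_{t,i}$: the implicit-exploration estimator $\hat\ell_{t,i}$, the stabilization drift $-(1-\xi_t)/(\eta_{t+1}\sqrt{\pi_{t,i}})$, and the perturbation $\epsilon_{t,i}$ (the centering term in $\hat L$ drops out via normalization).

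\textbf{Step 1: explicit Bregman divergence.} For $F(\pi)=-2\sum_i\sqrt{\pi_i}$ one computes $D_{TS}(x,y)=\sum_i(\sqrt{x_i}-\sqrt{y_i})^2/\sqrt{y_i}$. Using the OMD identity $1/\sqrt{\tilde\pi_{t+1,i}} = 1/\sqrt{\pi_{t,i}} + \eta_{t+1}g_{t,i}$, I plug into the Bregman formula and obtain, after a short algebraic simplification,
\begin{equation*}
D_{TS}(\pi_t,\tilde\pi_{t+1}) \;=\; \sum_i \frac{\eta_{t+1}^{2}\,\pi_{t,i}^{3/2}\,g_{t,i}^{2}}{1+\eta_{t+1}\sqrt{\pi_{t,i}}\,g_{t,i}}.
\end{equation*}
Under Assumption~\ref{assm:bounded} the denominator lies in $[3/4,5/4]$, so dividing by $\eta_t$ and absorbing constants yields
\begin{equation*}
\dts{t}{\pi_t}{\tilde\pi_{t+1}} \;\leq\; O(\eta_t)\sum_{i=1}^{K}\pi_{t,i}^{3/2}\,g_{t,i}^{2}.
\end{equation*}

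\textbf{Step 2: bounding the second moments.} Using $(a+b+c)^2 \leq 3(a^2+b^2+c^2)$, I split $g_{t,i}^2$ into (i) the implicit-exploration estimator $\hat\ell_{t,i}^2$, (ii) the stabilization term $(1-\xi_t)^2/(\eta_{t+1}^2\pi_{t,i})$, and (iii) the perturbation $\epsilon_{t,i}^2$ (the normalizing shift in $\hat L_{t,i}$ is a constant across arms and contributes at most a constant multiple of the first two after reindexing). For (i), the standard bound $\E[\hat\ell_{t,i}^2\mid\mathcal{F}_{t-1}]\leq \pi_{t,i}/(\pi_{t,i}+\gamma_t)^2$ combined with $\gamma_t = \Theta(\sqrt{K}/t)$ and $\eta_t=1/\sqrt{t}$ gives, after separating $\pi_{t,i}\geq\gamma_t$ from $\pi_{t,i}<\gamma_t$, the contribution $O(\frac{1}{\sqrt{t}}\sum_i\sqrt{\pi_{t,i}}(1-\pi_{t,i}))$ (the factor $(1-\pi_{t,i})$ comes from separating the $i^\star$ summand, which drops by the $\ell_{t,i^\star}=0$ normalization used in Lemma~\ref{lem:omd_penalty}). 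For (ii), $(1-\xi_t)/\eta_{t+1}=O(1/\sqrt{t})$ gives $\eta_t\pi_{t,i}^{3/2}\cdot(1-\xi_t)^2/(\eta_{t+1}^2\pi_{t,i})=O(\sqrt{\pi_{t,i}}/t^{3/2})$, matching the $K\sqrt{\pi_{t,i}}/t^2$ term after summing. For (iii), I invoke the second-moment bound $\E[\epsilon_{t,i}^2]\leq \eta_t^2$ from Equation~\ref{eq:perturbation_moments} and the trivial $\pi_{t,i}^{3/2}\leq 1$, producing $\eta_t^3 K = O(K/t^{3/2})$, which is absorbed in the $K/t$ term.

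\textbf{Main obstacle.} The delicate step is (i): the variance $\E[\hat\ell_{t,i}^2]$ decays like $1/\pi_{t,i}$ only when $\pi_{t,i}\gg\gamma_t$, and bookkeeping the crossover between the two regimes while still summing over all arms to get the sharp $\sqrt{\pi_{t,i}}(1-\pi_{t,i})$ factor (as opposed to a loose $\sqrt{\pi_{t,i}}$) is the place where the implicit exploration term $\gamma_t$ must be tuned carefully. A second subtlety is that, although the unconditional $\E[\epsilon_{t,i}^2]\leq\eta_t^2$ suffices here, one must be careful not to push $(1-\sqrt{\pi_{t+1,i^\star}})^2$ through the expectation, as in Lemma~\ref{lem:omd_penalty}; in the stability term this is not needed because $\pi_t$ is $\mathcal{F}_{t-1}$-measurable, so the expectation passes cleanly through $\pi_{t,i}^{3/2}$ and the argument avoids the Freedman concentration step used for the penalty.
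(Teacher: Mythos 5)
Your Step 1 is essentially the paper's reduction: the closed form $D_{TS}(x,y)=\sum_i(\sqrt{x_i}-\sqrt{y_i})^2/\sqrt{y_i}$ together with the dual update identity gives $\dts{t}{\pi_t}{\tilde\pi_{t+1}}\leq O(\eta_t)\sum_i\pi_{t,i}^{3/2}g_{t,i}^2$, and the paper arrives at the same expression (via $\tfrac{1}{1+x}\leq 1-x+2x^2$). The handling of the stabilization drift and of $\epsilon_{t,i}^2$ in your Step 2 also lands in the right $O(K\sqrt{\pi_{t,i}}/t^2)+O(K/t)$ budget.

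The genuine gap is in how you obtain the factor $(1-\pi_{t,i})$ on the leading term. You split $g_{t,i}^2\leq 3(\hat\ell_{t,i}^2+\cdots)$ and discard the centering term as "a constant across arms," then claim the $(1-\pi_{t,i})$ factor "comes from separating the $i^\star$ summand, which drops by the $\ell_{t,i^\star}=0$ normalization." This does not work: the WLOG normalization makes $\E[\ell_{t,i^\star}]=0$, not $\ell_{t,i^\star}=0$ almost surely, so $\E[\hat\ell_{t,i^\star}^2]\approx \E[\ell_{t,i^\star}^2]/\pi_{t,i^\star}$ can be of order $1/\pi_{t,i^\star}$, and the $i^\star$ summand contributes $\eta_t\sqrt{\pi_{t,i^\star}}\,\E[\ell_{t,i^\star}^2]=\Theta(1/\sqrt{t})$ with no $(1-\pi_{t,i^\star})$ factor — summing to $\Theta(\sqrt{T})$ and destroying the logarithmic stochastic bound. (The factor is needed precisely for the arm whose probability approaches $1$; for all other arms it is free since $1-\pi_{t,i}\geq 1-\pi_{t,i}$ is bounded below once $\pi_{t,i}$ is bounded away from $1$.) The paper's proof keeps the centering rather than discarding it: it defines $\tilde\pi_{t+1}$ with the shift $\lambda_t=\ell_{t,A_t}$ (a legitimate re-centering since constants do not change $\langle g_t,\pi_t-u\rangle$) and computes
\begin{align*}
\E_t\Big[\pi_{t,i}^{3/2}\Big(\tfrac{\ell_{t,i}\mathbb{I}(i=A_t)}{\pi_{t,i}+\gamma_t}-\ell_{t,A_t}\Big)^2\Big]
=\sum_{j\neq i}\pi_{t,j}\pi_{t,i}^{3/2}\ell_{t,j}^2+\pi_{t,i}^{5/2}\Big(\tfrac{\ell_{t,i}}{\pi_{t,i}+\gamma_t}-\ell_{t,i}\Big)^2
\leq 3\sqrt{\pi_{t,i}}(1-\pi_{t,i})+2\gamma_t^2\sqrt{\pi_{t,i}},
\end{align*}
where both terms carry $(1-\pi_{t,i})$ (or $\gamma_t^2$) structurally, for every arm and regardless of the value of $\ell_{t,i^\star}$. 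Relatedly, your proposed "crossover between $\pi_{t,i}\geq\gamma_t$ and $\pi_{t,i}<\gamma_t$" is a red herring: $\gamma_t$ only controls the small-probability regime and yields the $\gamma_t^2\sqrt{\pi_{t,i}}$ remainder; it has nothing to do with the $(1-\pi_{t,i})$ factor, which is a large-probability phenomenon obtained solely from the centering.
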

\begin{proof}
We have the following
\begin{align*}
    \dts{t}{\pi_t}{\tilde\pi_{t+1}} &= \frac{1}{\eta_{t}}\sum_{i=1}^K 2\sqrt{\pi_{t+1,i}} - 2\sqrt{\pi_{t,i}} - \frac{1}{\sqrt{\pi_{t+1,i}}}(\pi_{t+1,i} - \pi_{t,i})\\
    &=\frac{1}{\eta_{t}}\sum_{i=1}^K \sqrt{\pi_{t+1,i}} - 2\sqrt{\pi_{t,i}} + \sqrt{\pi_{t,i}}\left(1 + \eta_{t+1}\sqrt{\pi_{t,i}}(\hat \ell_{t,i} + \epsilon_{t,i} - \lambda_t)\right)\\
    &= \frac{1}{\eta_t}\sum_{i=1}^K \sqrt{\pi_{t+1,i}} - \sqrt{\pi_{t,i}} + \eta_{t+1}\pi_{t,i}(\hat \ell_{t,i} + \epsilon_{t,i} - \lambda_t)\\
    &=\frac{1}{\eta_t}\sum_{i=1}^K \sqrt{\pi_{t,i}}\left(\frac{1}{1 + \eta_{t+1}\sqrt{\pi_{t,i}}(\hat \ell_{t,i} + \epsilon_{t,i} - \lambda_t)} - 1\right) + \eta_{t+1}\pi_{t,i}(\hat \ell_{t,i} + \epsilon_{t,i} - \lambda_t)\\
    &\leq \frac{1}{\eta_t}\sum_{i=1}^K \sqrt{\pi_{t,i}}\left(-\eta_{t+1}\sqrt{\pi_{t,i}}(\hat \ell_{t,i} + \epsilon_{t,i} - \lambda_t) + 2\eta_{t+1}^2\pi_{t,i}(\hat \ell_{t,i} + \epsilon_{t,i} - \lambda_t)^2\right)\\
    &+\eta_{t+1}\pi_{t,i}(\hat \ell_{t,i} + \epsilon_{t,i} - \lambda_t) \tag{$\frac{1}{1+x} \leq 1 - x + 2x^2$ for $x \geq -\frac{1}{2}$}\\
    &\leq 2\eta_t\sum_{i=1}^K\pi_{t,i}^{3/2}(\hat \ell_{t,i} + \epsilon_{t,i} - \lambda_t)^2,
\end{align*}
where for the second to last inequality we only need to check $\eta_{t+1}\sqrt{\pi_{t,i}}(\epsilon_{t,i} - \lambda_t) \geq -\frac{1}{2}$. We have $\eta_{t+1}\sqrt{\pi_{t,i}}\lambda_t \geq - \frac{1}{\sqrt{t}}$ and Lemma~\ref{lem:perturbation_bound} implies
\begin{align*}
    \eta_{t+1}\sqrt{\pi_{t,i}}\epsilon_{t,i} \geq -\eta_t^2\pi_{t,i}|\hat L_{t,i}| \geq -\frac{\sqrt{2K}}{t}.
\end{align*}
We bound $\E[\pi_{t,i}^{3/2}(\hat \ell_{t,i} + \epsilon_{t,i} - \lambda_t)^2] \leq 2\E[\pi_{t,i}^{3/2}\epsilon_{t,i}^2] + 2\E[\pi_{t,i}^{3/2}(\hat \ell_{t,i} - \lambda_t)^2]$. For the first term we have
\begin{align*}
    2\E[\eta_t^2\pi_{t,i}^{5/2}|\hat L_{t,i}|^2] \leq O\left(\frac{K\pi_{t,i}}{t}\right).
\end{align*}
For the second term the bound proceeds as in \citet{zimmert2021tsallis}
\begin{align*}
    \E\left[\pi_{t,i}^{3/2}\left(\frac{\ell_{t,i}\mathbb{I}(i=A_t)}{\pi_{t,i} + \gamma_t} - \ell_{t,A_t}\right)^2\right] &= \sum_{j\neq i} \E[\pi_{t,j}\pi_{t,i}^{3/2}\ell_{t,j}^2] + \E\left[\pi_{t,i}^{5/2}\left(\frac{\ell_{t,i}}{\pi_{t,i} + \gamma_t} - \ell_{t,i}\right)^2\right]\\
    &\leq \E[\pi_{t,i}^{3/2}(1-\pi_{t,i})] + \E\left[\frac{\pi_{t,i}^{5/2}}{(\pi_{t,i} + \gamma_t)^2}(1 - \pi_{t,i} - \gamma_t)^2\right]\\
    &\leq 3\E[\sqrt{\pi_{t,i}}(1-\pi_{t,i})] + 2\gamma_t^2\E[\sqrt{\pi_{t,i}}].
\end{align*}
\end{proof}
\paragraph{Self-bounding the regret for stochastic losses.}
\begin{proof}[Theorem~\ref{thm:tsmd_perturbed}, stochastic losses]
Combining the bound in Lemma~\ref{lem:omd_penalty} and Lemma~\ref{lem:omd_stability} we have that the total regret is bounded as follows
\begin{align*}
    &O\left(\sum_{i\neq i^*}\sum_{t=T_{0,i}}^T\sqrt{\pi_{t,i}}\left(\frac{1}{\sqrt{t}} - \sqrt{\pi_{t,i}}\left(\Delta_i - \frac{\log(KT)}{\sqrt{t}}\right)\right) + \sum_{t=1}^{T_{0,i}-1}\frac{1}{\sqrt{t}}\right)\\
    + &O\left(\sum_{t=T_{0}}^T \sum_{i\neq i^*}\sqrt{\pi_{t,i}}\left(\frac{1}{\sqrt{t}} - \sqrt{\pi_{t,i}}\left(\Delta_i - \frac{\sqrt{K}\log(t)}{\sqrt{t}}\right)\right) + \sum_{t=1}^{T_{0}-1} \frac{\pi_{t,i}\sqrt{K}\log(t)}{\sqrt{t}}\right)\\
    + &O\left(K^{3/2}\right)
\end{align*}
In the above we bound the lower order term from the stability as $\sum_{t=1}^T\sum_{i=1}^K \gamma_t^2\sqrt{\pi_{t,i}} = O(K^{3/2})$ and decompose the regret into four parts. The first and second line correspond to the two terms from the penalty bound. Each of the two lines are decomposed into two terms. The first term is the result of the self-bounding trick and the second term is the additional regret for the initial number of rounds before the self-bounding trick can be applied.

We repeatedly use the following inequality $2a\sqrt{x} - bx \leq \frac{a^2}{b}$, which holds for $a,b \geq 0$. We focus on the first line of the decomposition. For $T_{0,i} = \frac{\log^2(KT)}{4\Delta_i^2}$ we have $\sqrt{\pi_{t,i}}\left(\frac{1}{\sqrt{t}} - \sqrt{\pi_{t,i}}\left(\Delta_i - \frac{\log(KT)}{\sqrt{t}}\right)\right) \leq \frac{2}{t\Delta_i}$ and $\sum_{t=1}^{T_{0,i}-1}\frac{1}{\sqrt{t}} = O(\frac{\log(KT)}{\Delta_i})$. For the second line of the decomposition we take $T_0 = 8\frac{K\log^2(K/\Delta_{min})}{\Delta_{min}}$, where $\Delta_{min}$ is the smallest non-zero expected loss. We note that 
\begin{align*}
    \frac{\sqrt{K}\log(T_0)}{\sqrt{T_0}} = \Delta_{min}\frac{\log(8K\log^2(1/\Delta_{min}))}{8\log(K/\Delta_{min})} = \Delta_{min}\left(\frac{\log(K)}{8\log(K/\Delta)} + \frac{\log(16\log(1/\Delta_{\min})}{8\log(K/\Delta_{min})}\right) \leq \frac{\Delta_{min}}{2},
\end{align*}
for any $\Delta_{min} \leq \frac{1}{2024}$. If $\Delta_{min} > \frac{1}{2024}$, we take $T_0 = 8\frac{K\log^2(2024)}{\Delta_{min}}$. The above implies that $\sqrt{\pi_{t,i}}\left(\frac{1}{\sqrt{t}} - \sqrt{\pi_{t,i}}\left(\Delta_i - \frac{\sqrt{K}\log(t)}{\sqrt{t}}\right)\right) \leq \frac{2}{t\Delta_i}$ and further $\sum_{i\neq i^*}\sum_{t=1}^{T_0 - 1} \frac{\pi_{t,i}\sqrt{K}\log(t)}{\sqrt{t}} = O(\frac{K\log^2(1/\Delta_{\min})}{\Delta_{\min}})$.
The final regret bound is
\begin{align*}
    O\left(\sum_{i\neq i^*} \frac{\log(T)}{\Delta_i} + \frac{K\log^2(1/\Delta_{\min})}{\Delta_{\min}} + K^{3/2}\right).
\end{align*}
\end{proof}

\subsection{Adversarial losses}
We now present the argument for the regret bound in the adversarial setting.
\begin{proof}[Theorem~\ref{thm:tsmd_perturbed}, adversarial losses]
We recall the update for Algorithm 2 in \citet{fang2022online}
\begin{align*}
    \hat w_{t+1} &= \nabla \Phi(\pi_t) - \eta_t (\hat \ell_t + \epsilon_t),\\
    \hat y_{t+1} &= \xi_t \hat w_{t+1} + (1-\xi_t) \nabla\Phi(\pi_1),\\
    \pi_{t+1} &= \nabla \Phi^*(\hat y_{t+1}),
\end{align*}
where $\Phi$ is the $1/2$-Tsallis potential plus the indicator function for the probability simplex $\Delta^{K-1}$.
Since $\pi_1$ is uniform the second step of the update is equivalent to $\hat y_{t+1} = \xi_t \hat w_{t+1}$. Re-writing the first step of the update we have 
\begin{align*}
    -\hat y_{t+1,i} &= \frac{\xi_t}{\sqrt{\pi_{t,i}}} + \eta_{t+1}(\tilde \ell_{t,i} + \epsilon_{t,i})= \frac{1}{\sqrt{\pi_{t,i}}} + \eta_{t+1}(\tilde\ell_{t,i} + \epsilon_{t,i})- \eta_{t+1}\frac{1-\xi_t}{\eta_{t+1}}\frac{1}{\sqrt{\pi_{t,i}}}\\
    &= \frac{1}{\sqrt{\pi_{t,i}}} + \eta_{t+1}(\hat\ell_{t+1,i} + \epsilon_{t,i}).
\end{align*}
Since $\nabla \Phi^*$ is invariant under constant vector perturbations we finally have
\begin{align*}
    \pi_{t+1,i} &= \nabla \Phi^*(\hat y_{t+1})_i = \nabla \Phi^*\left(-\frac{1}{\sqrt{\pi_{t}}} - \eta_{t+1}(\hat\ell_{t+1} + \epsilon_{t})\right) = \nabla \Phi^*\left(-\frac{1}{\sqrt{\pi_{t}}} - \eta_{t+1}(\hat L_{t+1} + \epsilon_{t})\right)\\
    &=\frac{1}{(1/\sqrt{\pi_{t,i}} + \eta_{t+1}(\hat L_{t,i} + \epsilon_{t,i}))^2}\\
    &= \frac{\pi_{t,i}}{(1 + \eta_{t+1}\sqrt{\pi_{t,i}}(\hat L_{t,i} + \epsilon_{t,i}))^2}.
\end{align*}
And so the update in Algorithm 2 of \citet{fang2022online} is equivalent to the perturbed OMD update which we have shown enjoys an optimistic regret guarantee.
The regret guarantee in the adversarial setting is now recovered from Theorem 3 in \citet{fang2022online}. In particular the theorem guarantees that the regret is bounded as
\begin{align*}
    \sum_{t=1}^T \dts{t}{\pi_{t}}{\nabla F^*(\nabla F(\pi_t) - \eta_t(\hat \ell_t + \epsilon_t))} + \sqrt{KT}.
\end{align*}
Every term in the sum is bounded in the same way as the stability terms, that is $\dts{t}{\pi_{t}}{\nabla F^*(\nabla F(\pi_t) - \eta_t(\hat \ell_t + \epsilon_t))} \leq \sum_{i=1}^K \pi_{t,i}^{3/2}(\hat \ell_{t,i}^2 + \epsilon_{t,i})$. We can now use the bound in the proof of Lemma~\ref{lem:omd_stability} to complete proof of the adversarial bound and the proof of Theorem~\ref{thm:tsmd_perturbed}.
\end{proof}
\fi
\end{document}